\pdfoutput=1
\documentclass[11pt]{article} \usepackage{times}
\usepackage[letterpaper, left=1in, right=1in, top=1in, bottom=1in]{geometry}

\usepackage[utf8]{inputenc} \usepackage[T1]{fontenc}    \usepackage{url}            \usepackage{booktabs}       \usepackage{amsfonts}       \usepackage{nicefrac}       \usepackage{microtype}      \usepackage{mkolar_definitions}
\usepackage{xspace}
\usepackage{amsmath}
\usepackage{algorithm,algorithmic}
\usepackage{color}
\usepackage{enumitem}
\usepackage{comment}
\usepackage{bm}
\usepackage{graphicx}

\usepackage{apptools}
\usepackage[page, header]{appendix}
\usepackage{titletoc}
\usepackage{array}

\usepackage[scaled=.9]{helvet}

\usepackage{url}
\usepackage{authblk}
\usepackage{caption}

\usepackage[table,dvipsnames]{xcolor}
\usepackage[colorlinks=true, linkcolor=blue, citecolor=blue]{hyperref}

\usepackage{tikz}
\usetikzlibrary{decorations.pathreplacing}
\usepackage{wrapfig}

\newcommand{\veps}{\varepsilon}
\newcommand{\defeq}{:=}

\newcommand{\unif}{\mathrm{unif}}
\newcommand{\kl}[2]{\mathrm{KL}(#1 \vert\vert #2)}
\newcommand{\tv}{\mathrm{TV}}
\newcommand{\hell}[2]{\mathrm{H}^2(#1 \vert\vert #2)}
\newcommand{\etamin}{\eta_{\mathrm{min}}}
\newcommand{\random}{\mathrm{random}}

\newcommand{\err}{\mathrm{err}}

\newcommand{\epsopt}{\varepsilon_{\mathrm{opt}}}

\newcommand{\epssup}{\varepsilon_{\mathrm{sup}}}
\newcommand{\epstv}{\varepsilon_{\mathrm{TV}}}
\newcommand{\epssamp}{\varepsilon_{\mathrm{samp}}}
\newcommand{\epsstat}{\varepsilon_{\mathrm{stat}}}
\newcommand{\epsescape}{\varepsilon_{\mathrm{escape}}}

\newcommand{\homer}{\textsc{Homer}\xspace}
\newcommand{\pcid}{\textsc{Pcid}\xspace}
\newcommand{\olive}{\textsc{Olive}\xspace}
\newcommand{\ouralg}{\textsc{Flambe}\xspace}
\newcommand{\ouralgf}{\textsc{Flambe.F}\xspace}
\newcommand{\ucbvi}{\textsc{Lsvi-Ucb}\xspace}

\newcommand{\phih}{\phi_h(x_h,a_h)}
\newcommand{\phiH}{\phi_{H-1}(x_{H-1},a_{H-1})}
\newcommand{\phiht}{\phi_{\tilde{h}}(x_{\tilde{h}},a_{\tilde{h}})}
\newcommand{\muh}{\mu_h(x_{h+1})}
\newcommand{\rhotrain}{\rho_h^{\mathrm{train}}}
\newcommand{\rhopre}{\rho_h^{\mathrm{pre}}}
\newcommand{\Mhat}{\widehat{\Mcal}}
\newcommand{\Mtil}{\widetilde{\Mcal}}
\newcommand{\dlv}{d_{\mathrm{LV}}}
\newcommand{\mle}{\textsc{Mle}\xspace}
\newcommand{\samp}{\textsc{Samp}\xspace}
\newcommand{\xabsorb}{x_{\mathrm{absorb}}}
\newcommand{\aabsorb}{a_{\mathrm{absorb}}}
 \newcommand{\version}{arxiv}
\newcommand{\poly}{\mathrm{poly}}

\usepackage{natbib}
\bibliographystyle{plainnat}
\bibpunct{(}{)}{;}{a}{,}{,}

\newcount\Comments  \Comments=0 \definecolor{darkgreen}{rgb}{0,0.5,0}
\definecolor{darkred}{rgb}{0.7,0,0}
\definecolor{teal}{rgb}{0.3,0.8,0.8}
\definecolor{orange}{rgb}{1.0,0.5,0.0}
\definecolor{purple}{rgb}{0.8,0.0,0.8}
\newcommand{\sk}[1]{\ifnum\Comments=1{\noindent{\textcolor{magenta}{\{{\bf SK:} \em #1\}}}}\fi}
\newcommand{\alekh}[1]{\ifnum\Comments=1{\noindent{\textcolor{darkred}{\{{\bf AA:} \em #1\}}}}\fi}
\newcommand{\akshay}[1]{\ifnum\Comments=1{\noindent{\textcolor{darkgreen}{\{{\bf AK:} \em #1\}}}}\fi}
\newcommand{\wen}[1]{\ifnum\Comments=1{\noindent{\textcolor{purple}{\{{\bf WS:} \em #1\}}}}\fi}

\usepackage{authblk}

\title{FLAMBE: Structural Complexity and \\Representation Learning of Low Rank MDPs}

\date{}
\author{Alekh Agarwal}
\author{Sham Kakade}
\author{Akshay Krishnamurthy}
\author{Wen Sun\thanks{alekha@microsoft.com, sham@cs.washington.edu, akshaykr@microsoft.com, sun.wen@microsoft.com}}
\affil{Microsoft Research}

\begin{document}

\maketitle

\vspace{-1cm}
\begin{abstract}

In order to deal with the curse of dimensionality in reinforcement
learning (RL), it is common practice to make parametric assumptions
where values or policies are functions of some low dimensional feature
space.  This work focuses on the representation learning question: how
can we \emph{learn} such features? Under the assumption that the
underlying (unknown) dynamics correspond to a low rank transition
matrix, we show how the representation learning question is related to
a particular \emph{non-linear} matrix decomposition problem.
Structurally, we make precise connections between these low rank MDPs
and latent variable models, showing how they significantly generalize
prior formulations for representation learning in RL.
Algorithmically, we develop \ouralg, which engages in exploration and
representation learning for provably efficient RL
in low rank transition models.
 \end{abstract}

\section{Introduction}
\label{sec:intro}
The ability to learn effective transformations of complex data
sources, sometimes called representation learning, is an essential
primitive in modern machine learning, leading to remarkable
achievements in language modeling, vision, and serving as a partial
explanation for the success of deep learning more
broadly~\citep{bengio2013representation}.
In Reinforcement Learning (RL), several works have shown empirically
that learning succinct representations of perceptual inputs can
accelerate the search for decision-making
policies~\citep{pathak2017curiosity,tang2017exploration,oord2018representation,srinivas2020curl}. However,
representation learning for RL is far more subtle than it is for
supervised
learning~\citep{du2019good,van2019comments,lattimore2019learning}, and
the theoretical foundations of representation learning for RL are
nascent.

The first question that arises in this context is: what is a good
representation? Intuitively, a good representation should help us
achieve greater sample efficiency on downstream tasks.  For supervised
learning, several theoretical works adopt the perspective that a good
representation should permit simple models to achieve high accuracy on
tasks of
interest~\citep{baxter2000model,maurer2016benefit,arora2019theoretical,tosh2020contrastive}. Lifting
this perspective to reinforcement learning, it is natural to ask that
we can express value functions and policies as simple functions of our
representation. This may allow us to leverage recent work on sample efficient RL with parametric function approximation.

The second question is: how do we \emph{learn} such a representation
when it is not provided in advance? This question is particularly
challenging because representation learning is intimately tied to
exploration. We cannot learn a good representation without a
comprehensive dataset of experience from the environment, but a good
representation may be critical for efficient exploration.

This work considers these questions in the context of low rank
MDPs~\citep{jiang2017contextual} (also known as factorizing
MDPs~\citep{rendle2010factorizing}, factored linear
MDPs~\citep{yao2014pseudo}, and linear
MDPs~\citep{jin2019provably,yang2019reinforcement}), which we argue
provide a natural framework for studying representation learning in
RL. Concretely, these models assume there exists low dimensional
embedding functions $\phi(x,a),\mu(x')$ such that the transition
operator $T$ satisfies $T(x' \mid x,a) = \inner{\phi(x,a)}{\mu(x')}$,
where $T(x' \mid x,a)$ specifies the probability of the next state
$x'$ given the previous state $x$ and action $a$.  Low rank MDPs
address the first issue above (on what constitutes a good
representation) in that if the features $\phi$ are known to the
learner, then sample efficient learning is
possible~\citep{jin2019provably,yang2019reinforcement}.

\begin{table}[t]
\begin{center}
\ifthenelse{\equal{\version}{arxiv}}{
\begin{tabular}{|>{\centering\arraybackslash} m{4.5cm}|>{\centering\arraybackslash} m{3cm}|>{\centering\arraybackslash} m{3.75cm}|>{\centering\arraybackslash} m{3cm}|}
}{
\begin{tabular}{|>{\centering\arraybackslash} m{3.5cm}|>{\centering\arraybackslash} m{2.5cm}|>{\centering\arraybackslash} m{3.75cm}|>{\centering\arraybackslash} m{2.2cm}|}
}
\hline
Algorithm & Setting & Sample Complexity & Computation\\
\hline\hline
{\small \pcid~\citep{du2019provably}} & block MDP & $d^4 H^2 K^4\rbr{\frac{1}{\eta^4\gamma^2} + \frac{1}{\veps^2}}$ & Oracle efficient\\
\hline
{\small \homer~\citep{misra2019kinematic}} & block MDP & $d^8 H^4 K^4\rbr{\frac{1}{\eta^3} + \frac{1}{\veps^2}}$ & Oracle efficient\\
\hline
{\small \olive~\citep{jiang2017contextual}} & low Bellman rank & \vspace*{0.05cm} $\left. \frac{d^2 H^3K}{\veps^2} \right.$ \vspace*{0.05cm}& Inefficient\\
\hline
{\small \cite{sun2018model}} & low Witness rank & \vspace*{0.05cm} $\frac{d^2 H^3K}{\veps^2}$ \vspace*{0.05cm}& Inefficient\\
\hline
{\small \ouralg (this paper)} & low rank MDP & \vspace*{0.05cm} $\left. \frac{d^{7}K^9H^{22}}{\veps^{10}}\right.$ \vspace*{0.05cm} & Oracle efficient\\
\hline
\end{tabular}
\end{center}
\ifthenelse{\equal{\version}{arxiv}}{\vspace{-0.5cm}}{}
\caption{Comparison of methods for representation learning in
  RL. Settings from least to most general are: block MDP, low rank
  MDP, low Bellman rank, low Witness rank. In all cases $d$ is the
  embedding dimension, $H$ is the horizon, $K$ is the number of
  actions, $\eta$ and $\gamma$ parameterize reachability and margin
  assumptions, and $\veps$ is the accuracy. Dependence on function
  classes and logarithmic factors are suppressed. Oracle and
  realizability assumptions vary. Block MDP algorithms discover a
  \emph{one-hot} representation to discrete latent states.
Bellman/Witness rank approaches can take a class $\Phi$ of embedding functions and search
  over simple policies or value functions composed with $\Phi$ (see~\pref{sec:complexity} and \pref{app:structural_2} for
  details).}
\label{tbl:comparison}
\ifthenelse{\equal{\version}{arxiv}}{\vspace{-0.25cm}}{\vspace{-0.75cm}}
\end{table}
\vspace{-0.5cm}
\paragraph{Our contributions.} We address the question of learning the representation $\phi$ in a low
rank MDP. To this end our contributions are both structural and
algorithmic.
\begin{enumerate}[leftmargin=*,topsep=0pt]
 \setlength{\parskip}{0pt}
\setlength{\parsep}{1pt}
\setlength{\itemsep}{1pt}
\item \textbf{Expressiveness of low rank MDPs.} Our algorithmic
  development leverages a re-formulation of the low
  rank dynamics in terms of an equally expressive, but more
  interpretable latent variable model. We provide several
  structural results for low rank MDPs, relating it to other models
  studied in prior work on representation learning for RL. In
  particular, we show that low rank MDPs are significantly more
  expressive than the block MDP
  model~\citep{du2019provably,misra2019kinematic}.
\item \textbf{Feature learning.} We develop a new algorithm, called
  \ouralg for ``Feature learning and model based exploration'', that
  learns a representation for low rank MDPs.  We prove that under
  realizability assumptions, \ouralg learns a \emph{uniformly
    accurate} model of the environment as well as a feature map that
  enables the use of linear methods for RL, in a statistically and
  computationally efficient manner. These guarantees enable downstream
  reward maximization, for \emph{any} reward function, with no
  additional data collection. 
\end{enumerate}
Our results and techniques provide new insights on representation
learning for RL and also significantly increase the scope for provably
efficient RL with rich observations (see~\pref{tbl:comparison}).
 \section{Low Rank MDPs}
\label{sec:model}

We consider an episodic Markov decision process $\Mcal$ with episode
length $H \in \NN$, state space $\Xcal$, and a finite action space
$\Acal = \{1,\ldots,K\}$. In each episode, a trajectory $\tau =
(x_0,a_0,x_1,a_1,\ldots,x_{H-1},a_{H-1},x_H)$ is generated, where (a)
$x_0$ is a starting state, and (b) $x_{h+1} \sim T_{h}(\cdot \mid
x_{h},a_{h})$, and (c) all actions $a_{0:H-1}$ are chosen by the
agent. We assume the starting state is fixed and that there is only
one available action at time $0$.\footnote{This easily accommodates
  the standard formulation with a non-degenerate initial distribution
  by defining $T_0(\cdot \mid x_0,a_0)$ to be the initial
  distribution. This setup is notationally more convenient, since we do not need special notation for the starting distribution.}
The operators $T_h: \Xcal \times \Acal \to \Delta(\Xcal)$ denote the
(non-stationary) transition dynamics for each time step.

As is standard in the literature, a policy $\pi:\Xcal \to
\Delta(\Acal)$ is a (randomized) mapping from states to actions. We
use the notation $\EE\sbr{ \cdot \mid \pi,\Mcal}$ to denote
expectations over states and actions observed when executing policy
$\pi$ in MDP $\Mcal$. We abuse notation slightly and use $[H]$ to
denote $\{0,\ldots,H-1\}$.

\begin{definition}
\label{def:lowrank}
An operator $T: \Xcal \times\Acal \to \Delta(\Xcal)$ admits a
\emph{low rank decomposition} with dimension $d \in \NN$ if there
exists two embedding functions $\phi^\star: \Xcal \times \Acal \to
\RR^d$ and $\mu^\star: \Xcal \to \RR^d$ such that
\begin{align*}
\textstyle\forall x,x' \in \Xcal, a \in \Acal: T(x' \mid x,a) = \inner{\phi^\star(x,a)}{\mu^\star(x')}.
\end{align*}
For normalization,\footnote{See the proof
  of Lemma B.1 in~\citet{jin2019provably} for this form of the normalization
  assumption.} we assume that $\nbr{\phi^\star(x,a)}_2 \leq 1$ for
all $x,a$ and for any function $g:\Xcal \to [0,1]$,
$\nbr{\int\mu^\star(x)g(x)dx}_2 \leq \sqrt{d}$. An MDP $\Mcal$ is a \emph{low rank MDP} if for each $h
\in [H]$, $T_h$ admits a low rank decomposition with
dimension $d$. We use $\phi_h^\star,\mu_h^\star$ to denote the embeddings for $T_h$.
\end{definition}

Throughout we assume that $\Mcal$ is a low rank MDP with dimension
$d$. Note that the condition on $\mu^\star$ ensures that the Bellman
backup operator is well-behaved.

\vspace{-0.1cm}
\paragraph{Function approximation for representation learning.}
We consider state spaces $\Xcal$ that are arbitrarily large, so that
some form of function approximation is necessary to generalize across
states.  For representation learning, it is natural to grant the agent
access to two function classes $\Phi \subset \Xcal \times \Acal \to
\RR^d$ and $\Upsilon \subset \Xcal \to \RR^d$ of candidate embeddings,
which we can use to identify the true embeddings
$(\phi^\star,\mu^\star)$.  To facilitate this model selection task, we
posit a \emph{realizability} assumption.
\begin{assum}[Realizability]
\label{assum:realizability}
We assume that for each $h \in [H]$: $\phi_h^\star \in \Phi$ and
$\mu_h^\star \in \Upsilon$.
\end{assum}
We desire sample complexity bounds that scale logarithmically
with the cardinality of the classes $\Phi$ and $\Upsilon$, which
we assume to be finite. 
Extensions that permit infinite classes with bounded statistical complexity (e.g., VC-classes) are not difficult.

In~\pref{app:structural}, we show that the low rank assumption alone, 
without~\pref{assum:realizability}, is not sufficient for obtaining
performance guarantees that are independent of the size of the state
space.  Hence, additional modeling assumptions are required, and we encode these in
$\Phi,\Upsilon$.

\vspace{-0.1cm}
\paragraph{Learning goal.}
We focus on the problem of reward-free
exploration~\citep{hazan2018provably,jin2020reward}, where the agent
interacts with the environment with no reward signal. When considering
model-based algorithms, a natural reward-free goal is \emph{system
  identification}: given function classes $\Phi,\Upsilon$, the
algorithm should learn a model $\Mhat \defeq
(\hat{\phi}_{0:H-1},\hat{\mu}_{0:H-1})$
that uniformly approximates the environment
$\Mcal$. We formalize this with the following performance criteria:
\begin{align}
\textstyle\forall \pi, h \in [H]: \EE\sbr{ \nbr{\big\langle\hat{\phi}_h(x_h,a_h)\hat{\mu}_h(\cdot)\big\rangle- T_h(\cdot \mid x_h,a_h)}_{\tv} \mid \pi,\Mcal} \leq \varepsilon.\label{eq:sys_id}
\end{align}
Here, we ask that our model accurately approximates the one-step
dynamics from the state-action distribution induced by following
\emph{any} policy $\pi$ for $h$ steps in the real environment.

System identification also implies a quantitative guarantee on the
learned representation $\hat{\phi}_{0:H-1}$: we can approximate the
Bellman backup of any value function on any data-distribution.
\begin{lemma}
\label{lem:representation}
If $\Mhat = (\hat{\phi}_{0:H-1},\hat{\mu}_{0:H-1})$ satisfies~\pref{eq:sys_id}, then
\begin{align*}
\textstyle \forall h \in [H], V: \Xcal \to [0,1], \exists \theta_h: \max_\pi \EE\sbr{ \abr{\big\langle\theta_h,\hat{\phi}_h(x_h,a_h)\big\rangle - \EE\sbr{V(x_{h+1}) \mid x_h,a_h}} \mid \pi,\Mcal} \leq \varepsilon.
\vspace{-0.1cm}
\end{align*}
\end{lemma}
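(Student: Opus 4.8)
The plan is to produce the linear parameter $\theta_h$ explicitly from the learned emission map. Fix $h \in [H]$ and $V : \Xcal \to [0,1]$, and set $\theta_h \defeq \int \hat{\mu}_h(x') V(x')\, dx'$. If one additionally wants $\nbr{\theta_h}_2 \le \sqrt{d}$ for downstream use, this is immediate from the normalization condition on $\hat{\mu}_h$ in \pref{def:lowrank} (which holds whenever $\hat{\mu}_h \in \Upsilon$), though the statement as written does not require it.

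With this choice, $\inner{\theta_h}{\hat{\phi}_h(x_h,a_h)} = \int \inner{\hat{\phi}_h(x_h,a_h)}{\hat{\mu}_h(x')} V(x')\, dx'$, which is exactly the expectation of $V(x_{h+1})$ under the \emph{learned} one-step transition from $(x_h,a_h)$, while $\EE\sbr{V(x_{h+1}) \mid x_h,a_h} = \int T_h(x' \mid x_h,a_h) V(x')\, dx'$ is the same expectation under the true transition. Hence the pointwise gap equals $\int \rbr{\inner{\hat{\phi}_h(x_h,a_h)}{\hat{\mu}_h(x')} - T_h(x' \mid x_h,a_h)} V(x')\, dx'$. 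Because $V$ takes values in $[0,1]$, the absolute value of this integral is at most the total variation distance $\nbr{\inner{\hat{\phi}_h(x_h,a_h)}{\hat{\mu}_h(\cdot)} - T_h(\cdot \mid x_h,a_h)}_{\tv}$ — this is just the variational characterization of $\tv$ against $[0,1]$-valued test functions. Taking the expectation over the state-action pair visited by following an arbitrary policy $\pi$ for $h$ steps in $\Mcal$, and then maximizing over $\pi$, yields precisely the left-hand side of \pref{eq:sys_id}, which is at most $\varepsilon$ by hypothesis.

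There is essentially no genuine obstacle: the argument is a one-line variational bound composed with an application of \pref{eq:sys_id}. The only points warranting care are (i) using $\|V\|_\infty \le 1$ to pass from the integral to $\nbr{\cdot}_{\tv}$ with the correct constant under whatever normalization of $\tv$ is in force, and (ii) observing that the constructed $\theta_h$ is independent of $\pi$, so the same parameter serves \emph{every} policy (equivalently, every induced data distribution) simultaneously — which is exactly what is claimed. Both are routine.
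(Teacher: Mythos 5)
Your proof is correct and follows essentially the same route as the paper's: define $\theta_h = \int \hat{\mu}_h(x')V(x')\,dx'$, observe that $\inner{\theta_h}{\hat{\phi}_h(x_h,a_h)}$ is the backup of $V$ under the learned model, bound the pointwise gap by the total variation distance via the variational characterization (the paper phrases it as H\"older's inequality), and invoke~\pref{eq:sys_id}. Nothing further is needed.
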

Thus, linear function approximation using our learned features
suffices to fit the $Q$ function associated with any policy and
explicitly given reward.\footnote{Formally, we append the immediate
reward to the features.} The guarantee also enables dynamic
programming techniques for policy optimization.  In other
words,~\pref{eq:sys_id} verifies that we have found a good
representation, in a quantitative sense, and enables tractable reward
maximization for any known reward function.

\ifthenelse{\equal{\version}{arxiv}}{\vspace{-0.18cm}}{\vspace{-0.1cm}}
\section{Related work}
\label{sec:related}
\ifthenelse{\equal{\version}{arxiv}}{\vspace{-0.18cm}}{\vspace{-0.1cm}}

Low rank models are prevalent in dynamics and
controls~\citep{thon2015links, littman2002predictive,
  singh2004predictive}. The low rank MDP in particular has been
studied in several works in the context of
planning~\citep{barreto2011reinforcement,barreto2011computing}, estimation~\citep{duan2020adaptive}, and in
the generative model
setting~\citep{yang2019sample}.
Regarding nomenclature, to our knowledge the name \emph{low rank
  MDP} appears first in~\citet{jiang2017contextual},
although~\citet{rendle2010factorizing} refer to it as
\emph{factorizing MDP},~\citet{yao2014pseudo} call it a factored linear MDP, and \citet{barreto2011reinforcement} refer to
a similar model as \emph{stochastic factorization}. More recently,
it has been called the \emph{linear MDP}
by~\citet{jin2019provably}. We use \emph{low rank MDP} because it
highlights the key structural property of the dynamics, and because we
study the setting where the embeddings are unknown, which necessitates
non-linear function approximation.

Turning to reinforcement learning with function approximation and
exploration, a large body of effort focuses on (essentially) linear
methods~\citep{yang2019reinforcement,jin2019provably,cai2019provably,modi2019sample,du2019dsec,wang2019optimism}. Closest
to our work are the results of \citet{jin2019provably}
and~\citet{yang2019reinforcement}, who consider low rank MDPs with
known feature maps $\phi_{0:H-1}^\star$ (\citet{yang2019reinforcement}
also assumes that $\mu^\star_{0:H-1}$ is known up to a linear
map).  These results are encouraging and
motivate our representation learning formulation, but, on their own,
these methods cannot leverage the inductive biases provided
by neural networks to scale to rich state spaces.

There are methods for more general, non-linear, function
approximation, but these works either (a) require strong environment assumptions
such as
determinism~\citep{wen2013efficient,du2020agnostic}, (b) require
strong function class assumptions such as bounded Eluder
dimension~\citep{russo2013eluder,osband2014model}, (c) have sample complexity scaling linearly with the function class size~\citep{lattimore2013sample,ortner2014selecting} 
or (d) are
computationally intractable~\citep{jiang2017contextual,sun2018model,dong2019sqrt}.
Note that~\citet{ortner2014selecting,jiang2015abstraction} consider
a form of representation learning, abstraction selection, but
the former scales linearly with the number of candidate abstractions,
while the latter does not address exploration.

\textbf{Bellman/Witness rank.}
We briefly expand on this final category of computationally
inefficient methods.  For model-free reinforcement
learning,~\citet{jiang2017contextual} give an algebraic condition, in
terms of a notion called the Bellman rank, on the environment and a
given function approximation class, under which sample efficient
reinforcement learning is always possible.~\citet{sun2018model} extend
the definition to model-based approaches, with the notion of Witness
rank. As we will see in the next section, the low rank MDP with a
function class derived from $\Phi$ (and $\Upsilon$) admits low Bellman
(resp., Witness) rank, and so these results imply that our setting is
statistically tractable.

\textbf{Block MDPs.}
Finally, we turn to theoretical works on representation learning for
RL.~\citet{du2019provably} introduce the \emph{block MDP} model, in
which there is a finite latent state space $\Scal$ that governs the
transition dynamics, and each ``observation'' $x \in \Xcal$ is
associated with a latent state $s \in \Scal$, so the state
is \emph{decodable}. The natural representation learning goal is to
recover the latent states,
and~\citet{du2019provably,misra2019kinematic} show that this can be
done, in concert with exploration, in a statistically and
computationally efficient manner.  Since the block MDP can be easily
expressed as a low rank MDP, our results can be specialized to this
setting, where they yield comparable guarantees. On the other hand, we
will see that the low rank MDP is significantly more expressive, and
so our results greatly expand the scope for provably efficient
representation learning and reinforcement learning.

\section{Expressiveness of low rank MDPs}
\label{sec:complexity}

Before turning to our algorithmic development, we discuss connections
between low-rank MDPs and models studied in prior work. This
discussion is facilitated by formalizing a connection between MDP
transition operators and latent variable graphical models.

\begin{definition}
\label{def:latent_var}
The \emph{latent variable representation} of a transition operator $T:
\Xcal \times \Acal \to \Delta(\Xcal)$ is a latent space $\Zcal$ along
with functions $\psi:\Xcal\times\Acal \to \Delta(\Zcal)$ and
$\nu:\Zcal \to \Delta(\Xcal)$, such that $T(\cdot \mid x,a) = \int
\nu(\cdot \mid z) \psi(z \mid x,a)dz$. The \emph{latent variable
  dimension} of $T$, denoted $\dlv$ is the cardinality of smallest
latent space $\Zcal$ for which $T$ admits a latent variable
representation.
\end{definition}
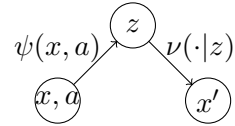
\begin{wrapfigure}{r}{0.25\textwidth}
\vspace{-0.5cm}
\begin{center}
\begin{tikzpicture}
\draw (0,0) circle [radius=0.3] node {$x,a$};
\draw (1,1) circle [radius=0.3] node {$z$};
\draw (2,0) circle [radius=0.3] node {$x'$};

\draw[->] (0.21,0.21) -- (0.79,0.79);
\draw[->] (1.21,0.79) -- (1.79,0.21);
\draw (0,0.7) node {$\psi(x,a)$};
\draw (1.9,0.7) node {$\nu(\cdot|z)$};
\end{tikzpicture}
 \end{center}
\caption{The latent variable interpretation.}
\vspace{-0.25cm}
\label{fig:lvm}
\end{wrapfigure}

See~\pref{fig:lvm}. In this representation, (1) each $(x,a)$ pair
induces a ``posterior'' distribution $\psi(x,a) \in \Delta(\Zcal)$
over $z$, (2) we sample $z \sim \psi(x,a)$, and (3) then sample $x'
\sim \nu(\cdot \mid z)$, where $\nu$ specifies the ``emission''
distributions. As notation, we typically write $\nu(x) \in
\RR^{\Zcal}$ with coordinates $\nu(x)[z] = \nu(x\mid z)$ and we call
$\psi,\nu$ the \emph{simplex features}, following the example
described by~\citet{jin2019provably}. When considering $H$-step MDPs,
this representation allows us to augment the trajectory $\tau$ with
the latent variables $\tau = (x_0,a_0,z_1,x_1,\ldots
z_{H-1},x_{H-1},a_{H-1},x_H)$. Here note that $z_h$ is the latent
variable that generates $x_h$.

Note that all transition operators admit a trivial latent variable
representation, as we may always take $\psi(x,a) = T(\cdot \mid x,a)$.
However, when $T$ is endowed with additional structure, the latent
variable representations are more interesting. For example, this
viewpoint already certifies a factorization $T(x' \mid x,a) =
\inner{\psi(x,a)}{\nu(x')}$ with embedding dimension $|\Zcal|$, and so
$\dlv$ (if it is finite) is an upper bound on the rank of the
transition operator. On the other hand, compared with~\pref{def:lowrank},
this factorization additionally requires that $\psi(x,a)$
and $\nu(\cdot \mid z)$ are probability distributions. Since
the factorization is non-negative,  $\dlv$ is the
\emph{non-negative rank} of the transition operator.

The latent variable representation enables a natural comparison of the
expressiveness of various models, and, as we will see in the next
section, yields insights that facilitate algorithm design. We now
examine models that have been introduced in prior works
and their properties relative to~\pref{def:lowrank}.

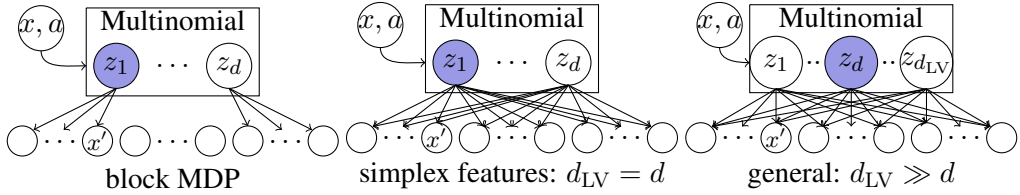
\begin{figure}[t]
\begin{center}
\definecolor{lblue}{rgb}{0.6,0.6,0.9}
\begin{tikzpicture}
\draw (0,-1.5) node {block MDP};
\draw(-1.75,0.5) circle [radius=0.3] node {$x,a$};
\draw[->] (-1.75,0.2) to [out=-90,in=180] (-1.15,0.0);
\draw (0, 0.6) node {Multinomial};
\draw (-1.15,0.77) -- (1.15,0.77) -- (1.15,-0.4) -- (-1.15,-0.4) -- (-1.15,0.77);
\draw[fill=lblue] (-0.75,0) circle [radius=0.3] node {$z_1$};
\draw (0,0) node {$\ldots$};
\draw (0.75,0) circle [radius=0.3] node {$z_d$};
\draw[->](-0.75,-0.3) -- (-1.85,-0.85);
\draw (-2,-1) circle [radius=0.2];
\draw[->](-0.75,-0.3) -- (-1.45,-0.85);
\draw (-1.5,-1) node {$\ldots$};
\draw[->](-0.75,-0.3) -- (-1,-0.8);
\draw (-1,-1) circle [radius=0.2] node {\footnotesize $x'$};
\draw (-0.5,-1) circle [radius=0.2];
\draw (0,-1) node {$\ldots$};
\draw (0.5,-1) circle [radius=0.2];
\draw[->](0.75,-0.3) -- (1,-0.8);
\draw (1,-1) circle [radius=0.2];
\draw[->](0.75,-0.3) -- (1.45,-0.85);
\draw (1.5,-1) node {$\ldots$};
\draw[->](0.75,-0.3) -- (1.85,-0.85);
\draw (2,-1) circle [radius=0.2];
\end{tikzpicture}
 \begin{tikzpicture}
\draw (0,-1.5) node {simplex features: $\dlv = d$};
\draw(-1.75,0.5) circle [radius=0.3] node {$x,a$};
\draw[->] (-1.75,0.2) to [out=-90,in=180] (-1.15,0.0);
\draw (0, 0.6) node {Multinomial};
\draw (-1.15,0.77) -- (1.15,0.77) -- (1.15,-0.4) -- (-1.15,-0.4) -- (-1.15,0.77);
\draw[fill=lblue] (-.75,0) circle [radius=0.3] node {$z_1$};
\draw (0,0) node {$\ldots$};
\draw (0.75,0) circle [radius=0.3] node {$z_d$};
\draw[->](-0.75,-0.3) -- (-1.85,-0.85);
\draw[->](0.75,-0.3) -- (-1.85,-0.85);
\draw (-2,-1) circle [radius=0.2];
\draw[->](-0.75,-0.3) -- (-1.45,-0.85);
\draw[->](0.75,-0.3) -- (-1.45,-0.85);
\draw (-1.5,-1) node {$\ldots$};
\draw[->](-0.75,-0.3) -- (-1,-0.8);
\draw[->](0.75,-0.3) -- (-1,-0.8);
\draw (-1,-1) circle [radius=0.2] node {\footnotesize $x'$};
\draw[->](-0.75,-0.3) -- (-0.38,-0.85);
\draw[->](0.75,-0.3) -- (-0.38,-0.85);
\draw (-0.5,-1) circle [radius=0.2];
\draw[->](-0.75,-0.3) -- (0,-0.85);
\draw[->](0.75,-0.3) -- (0,-0.85);
\draw (0,-1) node {$\ldots$};
\draw[->](-0.75,-0.3) -- (0.38,-0.85);
\draw[->](0.75,-0.3) -- (0.38,-0.85);
\draw (0.5,-1) circle [radius=0.2];
\draw[->](-0.75,-0.3) -- (1,-0.8);
\draw[->](0.75,-0.3) -- (1,-0.8);
\draw (1,-1) circle [radius=0.2];
\draw[->](-0.75,-0.3) -- (1.45,-0.85);
\draw[->](0.75,-0.3) -- (1.45,-0.85);
\draw (1.5,-1) node {$\ldots$};
\draw[->](-0.75,-0.3) -- (1.85,-0.85);
\draw[->](0.75,-0.3) -- (1.85,-0.85);
\draw (2,-1) circle [radius=0.2];
\end{tikzpicture}
 \begin{tikzpicture}
\draw (0,-1.5) node {general: $\dlv \gg d$};
\draw(-1.75,0.5) circle [radius=0.3] node {$x,a$};
\draw[->] (-1.75,0.2) to [out=-90,in=180] (-1.35,0.0);
\draw (0, 0.6) node {Multinomial};
\draw (-1.35,0.77) -- (1.35,0.77) -- (1.35,-0.4) -- (-1.35,-0.4) -- (-1.35,0.77);
\draw (-1,0) circle [radius=0.35] node {$z_1$};
\draw (-.5,0) node {$..$};
\draw[fill=lblue] (0,0) circle [radius=0.35] node {$z_d$};
\draw (0.5,0) node {$..$};
\draw (1.,0) circle [radius=0.35] node {$z_{\dlv}$};
\draw[->](-1.,-0.35) -- (-1.85,-0.85);
\draw[->](0,-0.35) -- (-1.85,-0.85);
\draw[->](1.,-0.35) -- (-1.85,-0.85);
\draw (-2,-1) circle [radius=0.2];
\draw[->](-1.,-0.35) -- (-1.45,-0.85);
\draw (-1.5,-1) node {$\ldots$};
\draw[->](-1.,-0.35) -- (-1,-0.8);
\draw[->](0,-0.35) -- (-1,-0.8);
\draw[->](1.,-0.35) -- (-1,-0.8);
\draw (-1,-1) circle [radius=0.2] node {\footnotesize $x'$};
\draw[->](-1.,-0.35) -- (-0.38,-0.85);
\draw[->](0,-0.35) -- (-0.38,-0.85);
\draw[->](1.,-0.35) -- (-0.38,-0.85);
\draw (-0.5,-1) circle [radius=0.2];
\draw[->](0,-0.35) -- (0,-0.85);
\draw (0,-1) node {$\ldots$};
\draw[->](-1.,-0.35) -- (0.38,-0.85);
\draw[->](0,-0.35) -- (0.38,-0.85);
\draw[->](1.,-0.35) -- (0.38,-0.85);
\draw (0.5,-1) circle [radius=0.2];
\draw[->](-1.,-0.35) -- (1,-0.8);
\draw[->](0,-0.35) -- (1,-0.8);
\draw[->](1.,-0.35) -- (1,-0.8);
\draw (1,-1) circle [radius=0.2];
\draw[->](1.,-0.35) -- (1.45,-0.85);
\draw (1.5,-1) node {$\ldots$};
\draw[->](-1.,-0.35) -- (1.85,-0.85);
\draw[->](0,-0.35) -- (1.85,-0.85);
\draw[->](1.,-0.35) -- (1.85,-0.85);
\draw (2,-1) circle [radius=0.2];
\end{tikzpicture}
 \end{center}
\ifthenelse{\equal{\version}{arxiv}}{\vspace{-0.5cm}}{\vspace{-0.2cm}}
\caption{The latent variable interpretation of low rank MDPs, where
  $(x,a)$ induces a distribution over latent variable $z$. Left:
  in block MDPs, latent variables induce a partition over the next
  state $x'$. Center: simplex features have embedding dimension equal
  to the number of latent variables. Right: low rank
  MDPs can have exponentially more latent variables than the
  dimension, $\dlv \gg d$.}
\label{fig:separation}
\ifthenelse{\equal{\version}{arxiv}}{}{\vspace{-0.5cm}}
\end{figure}

\textbf{Block MDPs.}
A block MDP~\citep{du2019provably,misra2019kinematic} is clearly a
latent variable model with $\Zcal$ corresponding to the latent state
space $\Scal$ and the additional restriction that two latent variables
$z$ and $z'$ have disjoint supports in their respective emissions
$\nu(\cdot\mid z)$ and $\nu(\cdot\mid z')$ (see the left panel
of~\pref{fig:separation}). Therefore, a block MDP is a low rank MDP
with rank $d \leq |\Scal|$, but the next result shows that a low rank
MDP is significantly more expressive.

\begin{proposition}
\label{prop:block_separation}
For any $d \geq 2$ and any $M \in \NN$ there exists an environment on
$|\Xcal| = M$ states, that can be expressed as a low rank MDP with
embedding dimension $d$, but for which any block MDP representation
must have $M$ latent states.
\end{proposition}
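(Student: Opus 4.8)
The plan is to exhibit an explicit horizon‑$2$ layered MDP. Take $\Xcal=\{x_0\}\cup A\cup B$ with $A=\{\alpha_1,\dots,\alpha_d\}$, $B=\{\beta_1,\dots,\beta_m\}$, and $m\defeq M-d-1$, so that $|\Xcal|=M$ (the finitely many cases $M\le d+1$ are elementary and I would dispatch them separately). From $x_0$, under its single action, go uniformly to $A$; from each $\alpha_i$, under every action, sample $x_2\sim p_i$ for some $p_i\in\Delta(B)$; extend $T_1$ to the never‑visited states ($x_0$ at time $1$, and all of $B$) in any way consistent with a rank‑$d$ model, e.g.\ by copying $p_1$. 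The whole construction is governed by the $d\times m$ row‑stochastic matrix $P=[p_i(\beta_j)]_{i,j}$, and I would choose $P$ with all entries positive, all $d$ rows pairwise distinct, and all $m$ columns pairwise non‑parallel (no column a positive scalar multiple of another). A normalized Vandermonde matrix does this: fix distinct $t_1,\dots,t_m>0$ and set $p_i(\beta_j)\propto t_j^{\,i-1}$ with the row‑$i$ constant chosen to make row $i$ sum to one; positivity is clear, and both row‑distinctness and column‑non‑parallelism reduce to the $t_j$ being distinct. (For $d=2$ this is just: ``$(x,a)$ induces a state‑dependent mixture $\theta(x,a)\mu_1+(1-\theta(x,a))\mu_2$ of two fixed next‑state distributions.'')

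Next I would verify that this is a low rank MDP of embedding dimension $d$ in the sense of~\pref{def:lowrank}. The operator $T_0$ is rank one. For $T_1$, factor $P=UW^\top$ with $U\in\RR^{d\times d}$ and $W\in\RR^{m\times d}$, put $\phi_1^\star(\alpha_i,\cdot)$ equal to the $i$‑th row of $U$ (and equal to the first row of $U$ on the never‑visited states), set $\mu_1^\star(\beta_j)$ to the $j$‑th row of $W$ and $\mu_1^\star=0$ elsewhere; a single global rescaling of $\phi^\star$ against $\mu^\star$ enforces $\|\phi^\star\|_2\le1$ and $\|\int\mu^\star g\|_2\le\sqrt d$, exactly as in the normalization argument behind Lemma~B.1 of~\citet{jin2019provably}. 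This step is routine bookkeeping.

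The crux is the lower bound against block MDPs. I would argue against an arbitrary block MDP representation of this environment, with decoder $f$ and emissions $q(\cdot\mid s)$, using two elementary facts. (a) In a block MDP the next‑state law depends on the current observation only through its latent state, so $f(x)=f(x')$ forces $T_h(\cdot\mid x,a)=T_h(\cdot\mid x',a)$ for every $a$. (b) If observations $y,y'$ reachable at time $h{+}1$ share a latent state $s$, then for every source $(x,a)$ occurring at time $h$ we have $\Pr[x_{h+1}=y\mid x,a]=\Pr[s\mid f(x),a]\,q(y\mid s)$ with $q(y\mid s),q(y'\mid s)>0$, so the vectors $\big(\Pr[x_{h+1}=y\mid x,a]\big)_{x,a}$ and $\big(\Pr[x_{h+1}=y'\mid x,a]\big)_{x,a}$ are positive multiples of each other. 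Applying (a) at $h=1$: distinct states of $A$ have distinct rows of $P$, hence distinct latent states, so $f$ is injective on $A$. Applying (b) at $h=1$: distinct states of $B$ have non‑parallel columns of $P$, hence distinct latent states, so $f$ is injective on $B$. Finally, $x_0$, all of $A$, and (since $P$ has full support) all of $B$ are visited with positive probability at times $0,1,2$ respectively, and because a latent state's emission is one fixed distribution, a latent state visited at time $1$ can emit only time‑$1$ observations, and likewise for times $0$ and $2$; hence the latent states used at these three times form disjoint sets. Therefore any block MDP representation uses at least $1+d+m=M$ latent states, and since the trivial one‑block‑per‑observation representation uses exactly $M$, the bound is tight.

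I expect the main obstacle to be a matter of care rather than depth: producing one matrix $P$ that is simultaneously positive, row‑stochastic, row‑distinct, and column‑non‑parallel (the normalized‑Vandermonde choice settles this, but one must confirm stochasticity), and then checking the precise normalization of~\pref{def:lowrank}. Conceptually, the single point to get right is fact (b) --- that a shared latent state forces the \emph{incoming} transition vectors to be parallel, precisely because the emission distribution cannot remember how the block was entered. Together with the trivial fact (a) applied on the previous layer, this is exactly what pushes the number of forced latent states all the way up to $|\Xcal|$.
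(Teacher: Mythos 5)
Your core mechanism is exactly the one the paper uses: your fact (b) --- that two next-states sharing a block must have incoming transition vectors that are positive scalar multiples of one another, because the emission cannot remember how the block was entered --- is precisely the paper's observation that same-block states $i,j$ force $T(x'=i\mid\cdot)$ and $T(x'=j\mid\cdot)$ to be pairwise linearly dependent, and both proofs then exhibit a stochastic matrix with pairwise non-parallel columns. Where you differ is the construction. The paper takes a single stage-$1$ state with two actions and the $2\times N$ matrix whose rows are the uniform distribution and a linearly increasing distribution; all separation happens among the stage-$2$ states, no analogue of your fact (a) is needed, the rank is $2\le d$ for every $d\ge 2$, and the construction works uniformly in $M$. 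Your three-layer construction with a row-normalized $d\times m$ Vandermonde matrix buys an exact match to the count $1+d+m=M$ and dynamics of rank exactly $d$, at the cost of additionally needing fact (a), the time-disjointness of the latent states (which you argue correctly), and a well-conditioned choice of $P$.

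That last requirement is where there is a concrete gap: you set aside only $M\le d+1$, but your construction also fails at $M=d+2$, i.e.\ $m=1$. In that case $\Delta(B)$ is a single point, so $P$ is the $d\times 1$ all-ones column: every $\alpha_i$ maps deterministically to the unique state of $B$, fact (a) separates nothing, and the incoming vectors into the states of $A$ (uniform from $(x_0,a_0)$) are all identical, so fact (b) does not separate them either. Indeed this environment admits a block MDP representation with $3$ latent states (one for $x_0$, one block containing all of $A$, one for $B$), far fewer than $M$, so the claimed bound ``at least $1+d+m=M$'' is false for $m=1$. The slip is fixable --- rebalance the layer sizes for that one value of $M$, or simply fall back to the paper's construction (one first-layer state, two actions, $M-1$ second-layer states with pairwise non-parallel incoming columns), which covers all $M$ without case analysis --- but as written the case $M=d+2$ is neither excluded nor handled.
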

In fact, the MDP that we construct for the proof, admits a latent
variable representation with $|\Zcal| = d$, but does not admit a
non-trivial block MDP representation. This separation exploits the
decodability restriction of block MDPs, which is indeed quite limiting
in terms of expressiveness.

\textbf{Simplex features.}
Given the latent variable representation and the fact that it
certifies a rank of at most $\dlv$, it is natural to ask if this
representation is canonical for all low rank MDPs. In other words, for
any transition operator with rank $d$, can we express it as a latent
variable model with $|\Zcal|=d$, or equivalently with simplex features
of dimension $d$?

As discussed above, this model is indeed more expressive than the
block MDP. However, the next result answers the above question
in the negative. The latent variable representation is
\emph{exponentially} weaker than the general low rank
representation in the following sense: 

\begin{proposition}
\label{prop:simplex_separation}
For any even $n \in \NN$, there exists an MDP that can be cast as a
low rank MDP with embedding dimension $O(n^2)$, but which has $\dlv
\geq 2^{\Omega(n)}$.
\end{proposition}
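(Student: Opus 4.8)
The plan is to exhibit, inside a two-step MDP, a single transition operator that — viewed as a nonnegative matrix — has ordinary rank $O(n^2)$ but nonnegative rank $2^{\Omega(n)}$. By the discussion following \pref{def:latent_var}, $\dlv$ of an operator equals its nonnegative rank, whereas a rank-$d$ factorization of a Markov kernel yields (after normalization) a low rank decomposition of dimension $d$; so such a matrix immediately produces the claimed separation. Concretely I would index two ``layers'' of states by the Boolean cube: take $\Xcal=\{x_0\}\cup\{u_a:a\in\{0,1\}^n\}\cup\{v_b:b\in\{0,1\}^n\}$, a single action, $H=2$, let $T_0$ be uniform over $\{u_a\}_a$, and set
\[
  T_1(v_b\mid u_a)\defeq\frac{(1-\inner{a}{b})^2}{Z_a},\qquad Z_a\defeq\sum_{b'\in\{0,1\}^n}(1-\inner{a}{b'})^2 = 2^{n-2}\big(|a|^2-3|a|+4\big),
\]
where $|a|$ is the Hamming weight of $a$; the closed form for $Z_a$ comes from grouping the sum over $b'$ by the value of $\inner{a}{b'}$, and since $k^2-3k+4>0$ for every $k$ we get $Z_a\ge 2^{n-1}>0$, so $T_1$ is a genuine Markov kernel. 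Writing $M$ for the $2^n\times 2^n$ matrix $M_{a,b}=(1-\inner{a}{b})^2$, the operator $T_1$ equals $\mathrm{diag}(Z_a)^{-1}M$, and since multiplication by a positive diagonal matrix preserves both rank and nonnegative rank it suffices to understand those two quantities for $M$.

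First I would bound the rank. Using $a_i^2=a_i$ for $a_i\in\{0,1\}$ one rewrites $(1-\inner{a}{b})^2 = 1-\sum_i a_ib_i+2\sum_{i<j}a_ia_jb_ib_j = \inner{F(a)}{G(b)}$ with $F(a)=\big(1,(a_i)_i,(a_ia_j)_{i<j}\big)$ and $G(b)=\big(1,(-b_i)_i,(2b_ib_j)_{i<j}\big)$ in $\RR^{r}$, $r=1+n+\binom n2=O(n^2)$; hence $\mathrm{rank}(M)\le r$. To make \pref{def:lowrank} hold literally I would take $\phi^\star_1(u_a)=c^{-1}F(a)/Z_a$ and $\mu^\star_1(v_b)=cG(b)$, embedded in $\RR^d$ with $d=\Theta(n^2)$ (padding with zeros) and a scalar $c\asymp 2^{-n}$: the elementary bounds $\|F(a)\|_2=O(n)$, $Z_a\ge 2^{n-1}$, and $\big\|\sum_b g(v_b)G(b)\big\|_2\le 2^{n-1}\sqrt{O(n^2)}$ for $g\colon\Xcal\to[0,1]$ show that, for a suitable choice of the hidden constants, $\|\phi^\star_1\|_2\le 1$ and $\sup_g\big\|\sum_b g(v_b)\mu^\star_1(v_b)\big\|_2\le\sqrt d$ can be made to hold simultaneously (alternatively, invoke the normalization argument underlying Lemma~B.1 of \citet{jin2019provably}). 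The operator $T_0$ is a single distribution, hence rank one and trivially normalizable at dimension $d$, so the MDP is a low rank MDP with embedding dimension $O(n^2)$.

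Finally I would lower-bound the nonnegative rank. The matrix $M$ is exactly the slack matrix of the correlation polytope $\mathrm{conv}\{bb^\top:b\in\{0,1\}^n\}$ relative to the valid inequalities $\inner{2\,\mathrm{diag}(a)-aa^\top}{X}\le 1$, since the slack of the vertex $bb^\top$ equals $1-\big(2\inner{a}{b}-\inner{a}{b}^2\big)=(1-\inner{a}{b})^2$. By the exponential extension-complexity lower bound of Fiorini, Massar, Pokutta, Tiwary, and de Wolf (building on de Wolf's nondeterministic communication bound for unique disjointness), $\mathrm{rank}_+(M)=2^{\Omega(n)}$; hence $\dlv\ge\dlv(T_1)=\mathrm{rank}_+(T_1)=\mathrm{rank}_+(M)=2^{\Omega(n)}$, giving the separation. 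The only genuinely nontrivial ingredient is this nonnegative-rank lower bound — the kernel construction, the $O(n^2)$ rank bound, and the rescaling for normalization are all routine — and a self-contained proof of it would proceed through the rectangle-covering (equivalently, nondeterministic communication complexity) lower bound for set-disjointness, which is where I expect the real work to lie.
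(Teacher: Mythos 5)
Your proposal is correct, and it reaches the separation by a genuinely different route than the paper. Both arguments reduce the claim to exhibiting a Markov kernel whose ordinary rank is $O(n^2)$ but whose \emph{non-negative} rank is $2^{\Omega(n)}$, using that $\dlv$ equals the non-negative rank (via the Cohen--Rothblum facts: invariance under positive diagonal rescaling, and the equivalence of non-negative rank with stochastic factorizations for row-stochastic matrices). The paper instantiates this with the slack matrix of the \emph{perfect matching polytope} of $K_n$ in Edmonds' odd-cut description, invoking Yannakakis' theorem together with Rothvoss' $2^{\Omega(n)}$ extension-complexity bound for the matching polytope; you instead use the matrix $M_{a,b}=(1-\inner{a}{b})^2$, i.e.\ the slack matrix of the correlation polytope, and the Fiorini--Massar--Pokutta--Tiwary--de~Wolf lower bound $\mathrm{rank}_+(M)\ge 2^{\Omega(n)}$ via unique disjointness. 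Your choice buys two things: the rank upper bound comes with a fully explicit quadratic feature map $F,G$ (the paper's $O(n^2)$ bound is implicit in how the slack matrix is written), and the non-negative-rank lower bound you need is the older and more elementary of the two results (rectangle covering/corruption for disjointness, rather than Rothvoss' matching-polytope theorem); the paper's choice is simply a different, equally valid slack matrix. One small bookkeeping point: with a single scalar $c$ your two normalization constraints are both of order $2^{-n}$ but the constants conflict slightly (the $\phi$ side forces roughly $c\ge 4\cdot 2^{-n}$ at $|a|=2$, while the $\mu$ side with $d=1+n+\binom{n}{2}$ forces roughly $c\le 2\cdot 2^{-n}$); this is harmless---pad the embedding dimension by a constant factor (still $O(n^2)$, which is all the proposition asserts) or rescale coordinates unevenly---and in any case the paper's own proof does not verify the normalization condition of \pref{def:lowrank} at all, so on this point your write-up is if anything more careful.
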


See the center and right panels
of~\pref{fig:separation}. The result is proved by recalling that the latent variable dimension
determines the non-negative rank of $T$, which can be much larger than
its rank~\citep{rothvoss2017matching,yannakakis1991expressing}. It
showcases how low rank MDPs are quite different from latent variable
models of comparable dimension and
demonstrates how embedding functions with negative values
can provide significant expressiveness.

\textbf{Bellman and Witness rank.}
As our last concrete connection,
we remark here that the low rank MDP with a function class
derived from $\Phi$ (and $\Upsilon$) admits low Bellman (resp.,
Witness) rank.

\begin{proposition}[Informal]
\label{prop:bellman_informal}
The low rank MDP model always has Bellman rank at most $d$.
Additionally, given $\Phi$ and assuming $\phi_{0:H-1}^\star\in \Phi$, we can
construct a function classes $(\Gcal,\Pi)$, so that \olive when run
with $(\Gcal,\Pi)$ has sample complexity $\otil\rbr{\poly(d, H, K,
  \log |\Phi|, \epsilon^{-1})}$.
\end{proposition}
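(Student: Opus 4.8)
The plan is to prove the two assertions separately: first, that a low rank MDP has Bellman rank at most $d$ for \emph{any} value-function class (hence in particular for the one we build), and second, that a suitable finite, realizable class $(\Gcal,\Pi)$ can be distilled from $\Phi$, after which the sample-complexity guarantee of \olive~\citep{jiang2017contextual} applies essentially verbatim.

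For the rank bound, the point I would stress is that the Bellman rank is controlled by the rank of the \emph{transition operator}, not by the value class. Fix $h\ge 1$ and any policy $\pi$. Since $x_h\sim T_{h-1}(\cdot\mid x_{h-1},a_{h-1})$ and $T_{h-1}(x\mid x_{h-1},a_{h-1})=\langle\phi^\star_{h-1}(x_{h-1},a_{h-1}),\mu^\star_{h-1}(x)\rangle$, the marginal law of $x_h$ under $\pi$ is
\[
\rho_{\pi,h}(x)\;=\;\big\langle\,\EE[\phi^\star_{h-1}(x_{h-1},a_{h-1})\mid\pi,\Mcal]\,,\ \mu^\star_{h-1}(x)\,\big\rangle ,
\]
i.e.\ it lies in the $d$-dimensional span of the coordinate functions of $\mu^\star_{h-1}$. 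Consequently, for any candidate value function $g$ with greedy policy $\pi_g$ (and $V_{g,h}(x)=\max_a g_h(x,a)$), the average Bellman error of $g$ along a roll-in of $\pi$ to level $h$, with the level-$h$ action taken greedily by $g$ as in the definition of Bellman rank, satisfies
\[
\EE\big[\,g_h(x_h,a_h)-r_h(x_h,a_h)-V_{g,h+1}(x_{h+1})\ \big|\ \pi,\ a_h=\pi_g(x_h)\,\big]\;=\;\big\langle\, v^{(h)}_\pi,\ \zeta^{(h)}_g\,\big\rangle ,
\]
with $v^{(h)}_\pi\defeq\EE[\phi^\star_{h-1}(x_{h-1},a_{h-1})\mid\pi,\Mcal]\in\RR^d$ depending only on the roll-in and $\zeta^{(h)}_g\defeq\int\mu^\star_{h-1}(x)\big(g_h(x,\pi_g(x))-r_h(x,\pi_g(x))-\EE[V_{g,h+1}(x')\mid x,\pi_g(x)]\big)dx\in\RR^d$ depending only on $g$. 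Hence the level-$h$ Bellman-error matrix factors through $\RR^d$ and has rank at most $d$ (at $h=0$ it has rank $1$, the start state being fixed), so the Bellman rank is $\le d$ for every value class. The normalization in \pref{def:lowrank} also gives $\|v^{(h)}_\pi\|_2\le1$ and $\|\zeta^{(h)}_g\|_2=O(H\sqrt d)$, which is all \olive requires.

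For the second part, fix any bounded reward; appending it as an extra coordinate to $\phi^\star_h$ and to every $\phi\in\Phi$ (and a zero coordinate to $\mu^\star_h$) lets us assume $r_h(x,a)=\langle\phi^\star_h(x,a),w_{r,h}\rangle$ for fixed bounded $w_{r,h}$, at the cost of replacing $d$ by $d+1$. Then $Q^\star_h(x,a)=\langle\phi^\star_h(x,a),w^\star_h\rangle$ with $w^\star_h=w_{r,h}+\int\mu^\star_h(x')V^\star_{h+1}(x')dx'$ and $\|w^\star_h\|_2\le B$ for some $B=O(H\sqrt d)$. Let $\mathcal W$ be an $\epsilon_0$-net of the radius-$B$ Euclidean ball in $\RR^{d+1}$ and put
\[
\Gcal=\big\{\,g=(\phi_{0:H-1},w_{0:H-1}):\phi_h\in\Phi,\ w_h\in\mathcal W,\ g_h(x,a)=\mathrm{clip}_{[0,H]}\!\big(\langle\phi_h(x,a),w_h\rangle\big)\,\big\},\qquad \Pi=\{\pi_g:g\in\Gcal\} .
\]
By \pref{assum:realizability}, choosing $\phi_h=\phi^\star_h$ and $w_h$ the nearest net point to $w^\star_h$ gives $g^\dagger\in\Gcal$ with $\|g^\dagger_h-Q^\star_h\|_\infty=O(\epsilon_0)$ at every $h$, hence Bellman residuals $O(\epsilon_0)$ everywhere, while $\log|\Gcal|=\Otilde(H\log|\Phi|+dH)$. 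Feeding Bellman rank $\le d+1$, this $\epsilon_0$-approximate realizer (the \olive analysis extends routinely to $O(\epsilon_0)$ misspecification, and one may anyway take $\epsilon_0$ polynomially small), and $\log|\Gcal|$ into the \olive bound from \pref{tbl:comparison}, and setting $\epsilon_0=\epsilon/H$, yields an $O(\epsilon)$-optimal policy after $\Otilde(d^2H^3K\log|\Gcal|/\epsilon^2)=\Otilde(\poly(d,H,K,\log|\Phi|,\epsilon^{-1}))$ episodes.

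The only genuinely delicate step is the rank computation: the tempting route of writing $g$'s Bellman residual as a linear function of $\phi^\star_h(x_h,a_h)$ works only for value functions linear in $\phi^\star$ itself, and for the union of such classes over all $\phi\in\Phi$ it would give a useless bound of order $d|\Phi|$; routing the factorization instead through the \emph{previous} step's $\mu^\star_{h-1}$ — equivalently, through the $d$-dimensionality of the reachable next-state distributions — makes the bound class-independent, and is the whole trick. Everything else (the covering-number estimate, checking the normalization so \olive's hypotheses hold, reconciling \olive's reward-driven setup with the reward-free framework by fixing an arbitrary reward, and the identical argument with the model class $\{(\phi,\mu):\phi\in\Phi,\mu\in\Upsilon\}$ and the model-based algorithm of \citet{sun2018model} for the Witness-rank statement) is routine.
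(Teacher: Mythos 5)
Your rank computation is exactly the paper's: the paper factors the average Bellman error at level $h$ as $\Ecal(\pi,(g,\pi'),h) = \inner{\EE[\phi^\star_{h-1}(x_{h-1},a_{h-1})\mid\pi]}{\int \mu^\star_{h-1}(x_h)\Delta((g,\pi'),x_h)\,d(x_h)}$, i.e.\ it also routes through the \emph{previous} step's factorization so that the bound $d$ (with norm parameter $O(B\sqrt{d})$) holds for an arbitrary bounded class, which is precisely the point you flag as the key trick. Where you diverge is the construction of $(\Gcal,\Pi)$: you discretize the coefficient ball with an $\epsilon_0$-net to get a finite class and then invoke an $O(\epsilon_0)$-misspecified version of \olive, whereas the paper keeps the continuous ball $B_d(H\sqrt{d})$ composed with $\Phi$ (and simply adds the given reward $R$ into the function form rather than augmenting the features), so that $Q^\star$ lies in $\Gcal(\Phi)$ \emph{exactly}; it then pays for the infinite class by bounding the Natarajan dimension of $\Pi(\Phi)$ and the pseudo-dimension of $\Gcal(\Phi)$ by $\Otilde(H(d+\log|\Phi|))$ via a growth-function argument and plugs these into \olive's dimension-based bound. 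Your route buys a directly finite class whose log-cardinality slots into the simplest form of \olive's guarantee, at the price of leaning on the (standard but not proved in \citet{jiang2017contextual}'s main statement) robustness of \olive to $O(\epsilon_0)$ approximate realizability — if you use this route you should either cite or sketch that elimination with slackened thresholds never removes the near-realizer $g^\dagger$ and degrades optimality only by $O(H\epsilon_0)$; the paper's route avoids any misspecification issue but requires the combinatorial-dimension calculation. Both yield the claimed $\otil(\poly(d,H,K,\log|\Phi|,\epsilon^{-1}))$ bound, and your closing remark that the Witness-rank analogue follows the same pattern matches the paper's \pref{prop:witness}.
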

See~\pref{prop:bellman} for a more precise statement. An analogous
result hold for the Witness rank notion of~\citet{sun2018model}
(see~\pref{prop:witness} in the appendix). Unfortunately both \olive,
and the algorithm of~\citet{sun2018model} are not computationally
tractable, as they involve enumeration of the employed function
class. We turn to the development of computationally tractable
algorithms in the next section.

\section{Main results}
\label{sec:results}

We now turn to the design of algorithms for representation learning
and exploration in low rank MDPs. As a computational abstraction, we
consider the following optimization and sampling oracles.

\begin{definition}[Computational oracles]
 Define the following oracles for the classes $\Phi, \Upsilon$:
\begin{enumerate}[leftmargin=*]
\item
The \emph{maximum likelihood oracle}, \mle, takes a dataset $D$ of $(x,a,x')$ triples, and returns
\begin{center}
$\mle(D) \defeq \argmax_{\phi\in\Phi,\mu \in \Upsilon} \sum_{(x,a,x') \in D} \log( \inner{\phi(x,a)}{\mu(x')} )$.
\end{center}
\item The \emph{sampling oracle}, \samp, is a subroutine which, for any $(\phi,\mu) \in \Phi\times\Upsilon$ and any $(x,a)$, returns a sample $x'\sim \inner{\phi(x,a)}{\mu(\cdot)}$. Multiple calls to the procedure result in independent samples.
\end{enumerate}
\label{def:oracles}
\end{definition}

We assume access to both oracles as a means towards practical
algorithms that avoid explicitly enumerating over all functions in $\Phi$ and $\Upsilon$.
Note that related assumptions are quite common in the
literature~\citep{misra2019kinematic,du2019provably,agarwal2014taming},
and in practice, both oracles can be reasonably approximated whenever
optimizing over $\Phi,\Upsilon$ is feasible (e.g., neural
networks). Regarding \mle, other optimization oracles are possible,
and in the appendix (\pref{rem:fgan}) we sketch how our proof can
accommodate a generative adversarial oracle as a
replacement~\citep{goodfellow2014generative,arora2017generalization}.
While the sampling oracle is less standard, one might implement \samp
via optimization methods like the Langevin
dynamics~\citep{welling2011bayesian} or through reparametrization
techniques such as the Gumbel-softmax
trick~\citep{jang2017categorical, figurnov2018implicit}.\footnote{We
do not explicitly consider approximate oracles, but additive
approximations can be accommodated in our proof. In particular,
if \samp returns a sample from a distribution that is
$\epssamp$ close in total variation to the target distribution in
$\poly(1/\epssamp)$ time, then we retain computational efficiency.}
\ifthenelse{\equal{\version}{arxiv}}{
In addition, the sampling oracle can be avoided at the cost of
additional real world experience, an approach we describe formally
in~\pref{thm:real_world_main} below. }{}

\subsection{Algorithm description}
The algorithm is called \ouralg, for ``Feature Learning And
Model-Based Exploration.'' Pseudocode is displayed
in~\pref{alg:main}. \ouralg is iterative in nature, where in iteration
$j$, we use an exploratory policy $\rho_{j-1}$ to collect a dataset of
transitions, and then we pass all previously collected transitions to
the MLE oracle.
The optimization oracle returns embedding functions
$(\hat{\phi}_h,\hat{\mu}_h)$ for each $h$ which define transition
operators $\hat{T}_h$ for our updated learned model
$\Mhat$. Then \ouralg calls a planning sub-routine to compute the
exploratory policy $\rho_{j}$ that we use in the next iteration. After $J_{\max}$ iterations, we simply output our current model $\Mhat$.

For the planning step, intuitively we seek an exploratory policy
$\rho$ that induces good coverage over the state space when executed
in the model. We do this by solving one planning problem per time step $h$
in~\pref{alg:general_planner} using a technique inspired by elliptical
potential arguments from linear bandits~\citep{dani2008stochastic}.
Using the $h$-step model $\hat{T}_{0:h-1}$, we iteratively maximize
certain quadratic forms of our learned features $\hat{\phi}_{h-1}$ to
find new directions not covered by the previously discovered policies,
and we update the exploratory policy to include the maximizer.
The planning algorithm terminates when no policy can achieve large quadratic form, which
implies that we have found all reachable directions in
$\hat{\phi}_{h-1}$. 
This yields a \emph{mixture policy} $\rhopre$ that is executed by
sampling one of the mixture components and executing that policy for
the entire episode. The component policies are linear in the learned
features $\hat{\phi}_{0:h-1}$. The challenge in our analysis is to
relate this coverage in the model to that in the true environment as
we discuss in the next section.

\pref{alg:general_planner} is a model-based planner, so it requires no interaction with the environment.
The main computational step is the optimization
problem~\pref{eq:planning_optimization}, which can be solved
efficiently with access to the sampling oracle, essentially by running
the algorithm of~\citet{jin2019provably}
(See~\pref{lem:sampling_planner} in the appendix). Note that we are
optimizing over \emph{all policies}, which is possible because the
Bellman backups in a low rank MDP are linear functions of the features
(c.f.,~\pref{lem:representation}). The sampling oracle can also be
used to approximate all expectations, and, with sufficient accuracy,
this has no bearing on the final results. Our proofs do account for
the sampling errors.

\setlength{\textfloatsep}{4pt plus 0.0pt minus 2.0pt}\setlength{\floatsep}{4pt plus 0.0pt minus 2.0pt}
\begin{algorithm}[t]
\begin{algorithmic}
\STATE \textbf{Input:} Environment $\Mcal$, function classes $\Phi, \Upsilon$, subroutines \mle and \samp, parameters $\beta, n$.
\STATE Set $\rho_0$ to be the random policy, which takes all actions uniformly at random. 
\STATE Set $D_h = \emptyset$ for each $h \in \{0,\ldots,H-1\}$. 
\FOR{$j=1,\ldots,J_{\max}$}
\FOR{$h=0,\ldots,H-1$}
\STATE Collect $n$ samples $(x_h,a_h,x_{h+1})$ by rolling into $x_h$ with $\rho_{j-1}$ and taking $a_h \sim \unif(\Acal)$. 
\STATE Add these samples to $D_h$.
\STATE Solve maximum likelihood problem: $(\hat{\phi}_h,\hat{\mu}_h) \gets \mle(D_{h})$.
\STATE Set $\hat{T}_h(x_{h+1} \mid x_h,a_h) = \inner{\hat{\phi}_h(x_h,a_h)}{\hat{\mu}_h(x_{h+1})}$.
\ENDFOR
\STATE For each $h$, call planner (\pref{alg:general_planner}) with $h$ step model $\hat{T}_{0:h-1}$ and $\beta$ to obtain $\rhopre$.
\STATE Set $\rho_j = \unif(\{\rhopre \circ \random \}_{h=0}^{H-1})$, to be uniform over the discovered $h$-step policies, augmented with random actions. 
\ENDFOR
\end{algorithmic}
\caption{\ouralg: Feature Learning And Model-Based Exploration}
\label{alg:main}
\end{algorithm}

\begin{algorithm}[t]
\begin{algorithmic}
\STATE \textbf{Input:} MDP $\Mtil =
(\phi_{0:\tilde{h}}, \mu_{0:\tilde{h}})$, subroutine \samp, parameter $\beta > 0$. Initialize $\Sigma_0 = I_{d \times d}$.
\FOR{$t = 1,2,\ldots,$}
\STATE Compute (see text for details)
\vspace{-0.2cm}
\begin{align}
\pi_t = \argmax_\pi \EE\sbr{ \phiht^\top \Sigma_{t-1}^{-1} \phiht \mid \pi,\Mtil}. \label{eq:planning_optimization}
\end{align}
\vspace{-0.2cm}
\STATE If the objective is at most $\beta$, halt and output $\rho = \unif(\{\pi_\tau\}_{\tau < t})$.
\STATE Compute $\Sigma_{\pi_t} = \EE\sbr{ \phiht\phiht^\top \mid \pi,\Mtil}$. Update $\Sigma_t \gets \Sigma_{t-1} + \Sigma_{\pi_t}$.
\ENDFOR
\end{algorithmic}
\caption{Elliptical planner}
\label{alg:general_planner}
\end{algorithm}

\subsection{Theoretical Results}

\setlength{\textfloatsep}{20.0pt plus 2.0pt minus 4.0pt}
\setlength{\floatsep}{12.0pt plus 2.0pt minus 2.0pt}

We now state the main guarantee.
\begin{theorem}
\label{thm:main}
Fix $\delta \in (0,1)$. If $\Mcal$ is a low rank MDP with dimension
$d$ and horizon $H$ and~\pref{assum:realizability} holds, then \ouralg
with subroutine~\pref{alg:general_planner} and appropriate
settings\footnote{The precise settings for $\beta,J_{\max}$, and $n$
are given in the appendix. } of $\beta, J_{\max}$, and $n$,
computes a model $\Mhat$ such that~\pref{eq:sys_id} holds with
probability at least $1-\delta$. The total number of trajectories
collected is 
\begin{align*}
\tilde{\order}\rbr{ \frac{H^{22}K^9d^7\log(|\Phi||\Upsilon|/\delta)}{\veps^{10}}},
\end{align*}
and the algorithm runs in polynomial time with
polynomially many calls to \mle and \samp (\pref{def:oracles}).
\end{theorem}

Thus, \ouralg provably learns low rank MDP models in a statistically
and computationally efficient manner, under~\pref{assum:realizability}
and~\pref{assum:reachability}.  
While the result
is comparable to prior work in the dependencies on $d$, $H$, $K$ and
$\varepsilon$, we instead highlight the more conceptual advances over
prior work. \begin{itemize}[leftmargin=*]
\item The key advancement over the block MDP
  algorithms~\citep{du2019provably,misra2019kinematic} is that \ouralg
  applies to a significantly richer class of models with comparable
  function approximation assumptions. A secondary, but important,
  improvement is that \ouralg does not require any reachability
  assumptions, unlike these previous results.
We remark that~\citet{feng2020provably} avoid reachability
  restrictions in block MDPs, but their function approximation/oracle
  assumptions are much stronger than ours.
\item Over~\citet{jin2019provably,yang2019reinforcement}, the key advancement is that we
  address the representation learning setting where the embeddings
  $\phi_{0:H-1}^\star$ are not known a priori. On the other hand, our
  bound scales 
  with the number of actions $K$. We believe that additional structural
  assumptions on $\Phi$ are required to avoid the dependence on $K$ in the representation learning setting.
    \item Over~\citet{jiang2017contextual,sun2018model}, the key
  advancement is computational efficiency. However, the low rank MDP
  is less general than what is covered by their theory, and our sample
  complexity is worse in the polynomial factors. 
\end{itemize}
As remarked earlier, the logarithmic dependence
on the sizes of $\Phi, \Upsilon$ can be relaxed to alternative notions
of capacity for continuous classes.

We also state a sharper bound for a version of \ouralg that operates
directly on the simplex factorization.  The main difference is that we
use a conceptually simpler planner (See~\pref{alg:simplex_planner} in
the appendix) and the sample complexity bound scales with $\dlv$.

\begin{theorem}
\label{thm:improved}
Fix $\delta \in (0,1)$. If $\Mcal$ admits a simplex factorization with
embedding dimension $\dlv$,~\pref{assum:realizability} holds, and all $\phi\in\Phi$ satisfy
$\phi(x,a)\in \Delta([\dlv])$, then \ouralg
with~\pref{alg:simplex_planner} as the subroutine and appropriate
setting\footnote{This version does not require the parameter $\beta$.}
of $J_{\max}$ and $n$ computes a model $\Mhat$ such that~\pref{eq:sys_id} holds with
probability at least $1-\delta$. The total number of trajectories
collected is
\begin{align*}
\tilde{\order}\rbr{\frac{H^{11}K^5\dlv^5\log(|\Phi||\Upsilon|/\delta)}{\veps^3}}.
\end{align*}
The algorithm runs in polynomial time with polynomially many calls to \mle and \samp (\pref{def:oracles}).
\end{theorem}
This bound scales much more favorably with $H,K$ and $\veps$, but
incurs a polynomial dependence on the latent variable dimension
$\dlv$, instead of the embedding dimension $d$. For many problems,
including block MDPs, we expect that $\dlv \approx d$, in which case
using this version of \ouralg may be
preferable. However,~\pref{thm:improved} requires that we encode
simplex constraints into our function class $\Phi$, for example using
the softmax. When $\dlv$ is small, this may be a practically useful
design choice.

\ifthenelse{\equal{\version}{arxiv}}{
\paragraph{Planning in the real world. }
As our final result, we consider replacing the model-based planning
subroutine with one that collects trajectories from the
environment. This allows us to avoid using the sampling oracle, but
our analysis requires one additional assumption. Recall the latent
variable representation of~\pref{def:latent_var} and the fact that we
can augment the trajectories with the latent variables, and let
$\Zcal_h$ denote the latent state space for $T_h$, i.e., the values
that $z_{h+1}$ can take.  Our
\emph{reachability} assumption posits that for the MDP $\Mcal$, the
latent variables can be reached with non-trivial probability.

\begin{assum}[Reachability]
\label{assum:reachability}
There exists a constant $\etamin > 0$ such that
\begin{align*}
\forall h \in \{0,\ldots,H-1\}, z \in \Zcal_h:~~ \max_\pi \PP\sbr{z_{h+1} = z \mid \pi,\Mcal} \geq \etamin.
\end{align*}
\end{assum}

The assumption generalizes prior reachability assumptions in block
MDPs~\citep{du2019provably, misra2019kinematic}, where the latent
variables are referred to as ``latent states.''  Note that
reachability does not eliminate the exploration problem, as a random
walk may still visit a latent variable with exponentially small
probability.
However, reachability is a limitation that unfortunately imposes an
upper bound on the latent variable dimension $\dlv$, as formalized in
the next proposition.
\begin{proposition}
\label{prop:reachability}
If MDP $\Mcal$ has rank $d$ and satisfies~\pref{assum:reachability},
then for each $h$, the latent variable dimension of $T_h$ satisfies
$\dlv \leq \nicefrac{dK^2}{\etamin^2}$.
\end{proposition}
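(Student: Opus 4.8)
The plan is to work directly with a latent variable representation of $T_h$ that achieves the latent variable dimension, say with latent space $\Zcal_h$ (so $|\Zcal_h| = \dlv$) and maps $\psi_h,\nu_h$ as in~\pref{def:latent_var}, and to bound $|\Zcal_h| \le dK^2/\etamin^2$ using~\pref{assum:reachability}. Write $d_h^\pi \in \Delta(\Xcal)$ for the law of $x_h$ under $\pi$ in $\Mcal$, and for $z \in \Zcal_h$ set the ``posterior feature'' $\bar g_z(x) \defeq \tfrac1K\sum_{a\in\Acal}\psi_h(z\mid x,a) \in [0,1]$; since $\psi_h(\cdot\mid x,a)\in\Delta(\Zcal_h)$ we have $\sum_{z\in\Zcal_h}\bar g_z(x) = 1$ for all $x$. \textbf{Step 1 (occupancies are $d$-dimensional).} For $h\ge 1$, $d_h^\pi(x) = \inner{\EE[\phi^\star_{h-1}(x_{h-1},a_{h-1})\mid\pi,\Mcal]}{\mu^\star_{h-1}(x)}$, so $P \defeq \{d_h^\pi : \pi\}\subseteq\Delta(\Xcal)$ lies in the at most $d$-dimensional span of the coordinate functions of $\mu^\star_{h-1}$ and is convex (policies can be mixed); for $h=0$, $P = \{\delta_{x_0}\}$. \textbf{Step 2 (reachability via uniform actions).} Because $\max_a\psi_h(z\mid x,a)\le\sum_a\psi_h(z\mid x,a) = K\bar g_z(x)$,
\[
\PP[z_{h+1}=z\mid\pi,\Mcal] = \EE_{x_h\sim d_h^\pi}\EE_{a\sim\pi(x_h)}[\psi_h(z\mid x_h,a)] \le K\inner{d_h^\pi}{\bar g_z},
\]
so by~\pref{assum:reachability}, for each $z\in\Zcal_h$ there is $p_z\in P$ with $\inner{p_z}{\bar g_z}\ge\etamin/K$.

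\textbf{Step 3 (design + counting).} Apply a D-optimal design (Kiefer--Wolfowitz) argument in the at most $d$-dimensional span of $P$: there is a distribution $\lambda$ over $P$ so that, with $\Sigma\defeq\EE_{p\sim\lambda}[pp^\top]$, every $p\in P$ obeys $p^\top\Sigma^{-1}p\le d$ (pseudoinverse on the span). For each $z$, Cauchy--Schwarz gives $\etamin/K\le\inner{p_z}{\bar g_z}\le\sqrt{p_z^\top\Sigma^{-1}p_z}\sqrt{\bar g_z^\top\Sigma\bar g_z}\le\sqrt d\sqrt{\bar g_z^\top\Sigma\bar g_z}$, hence $\bar g_z^\top\Sigma\bar g_z\ge\etamin^2/(dK^2)$. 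Since $\Sigma = \EE_\lambda[pp^\top]$ and $\inner{p}{\bar g_z}\in[0,1]$ for $p\in\Delta(\Xcal)$,
\[
\bar g_z^\top\Sigma\bar g_z = \EE_{p\sim\lambda}[\inner{p}{\bar g_z}^2] \le \EE_{p\sim\lambda}[\inner{p}{\bar g_z}] = \inner{\bar p}{\bar g_z}, \qquad \bar p\defeq\EE_{p\sim\lambda}[p]\in P.
\]
Thus $\inner{\bar p}{\bar g_z}\ge\etamin^2/(dK^2)$ for all $z\in\Zcal_h$, and summing over $z$ using $\sum_z\bar g_z\equiv 1$ and $\bar p\in\Delta(\Xcal)$ yields $|\Zcal_h|\cdot\etamin^2/(dK^2)\le\inner{\bar p}{\sum_z\bar g_z} = 1$, i.e. $|\Zcal_h|\le dK^2/\etamin^2$. (If $\Zcal_h$ were a priori infinite, running Step 3 on an arbitrary finite subset gives the same cardinality bound.)

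The crux is Step 3: converting the per-latent-variable, per-policy reachability guarantees from Step 2 into a \emph{single} state distribution $\bar p$ that simultaneously places non-negligible mass on the region $\bar g_z$ of every latent variable. This is exactly where the rank $d$ enters (through the design), and it is responsible both for the $\etamin \mapsto \etamin^2$ loss (Cauchy--Schwarz) and for the extra factor of $K$ (from routing through uniform actions in Step 2). A minor technicality is that $P\subseteq\Delta(\Xcal)$ need not linearly span its ambient $d$-dimensional subspace, so the Kiefer--Wolfowitz guarantee and the Cauchy--Schwarz step should be carried out after projecting $\bar g_z$ onto $\mathrm{span}(P)$ (harmless, since every $p\in P$ lies in $\mathrm{span}(P)$); this affects only constants.
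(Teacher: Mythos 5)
Your argument is correct, and it reaches the paper's bound $\dlv \leq dK^2/\etamin^2$ by a genuinely different route. The paper also routes reachability through a uniform last action (costing the factor $K$) and uses that the time-$h$ state distribution is linear in $\EE[\phi^\star_{h-1}]$, but from there it is a bare-hands linear-algebra argument: it writes $\PP[z_{h+1}=z\mid\pi]$ as a bilinear form $v_{\pi}^\top U_{h-1}A_h^\top\Psi e_z$, builds a single sign vector $w\in\{\pm 1\}^{\dlv}$ by iteratively peeling off reachable latent variables so that each contributes $(\etamin/K)^2$ to $\|U_{h-1}A_h^\top\Psi w\|_2^2$ with non-negative cross terms, and upper bounds that same quantity by $d$ using row-stochasticity of $A_h^\top\Psi$ together with the normalization convention on $\mu^\star$ from Definition 1. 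You instead build a single exploratory occupancy $\bar p$ via a G-optimal (Kiefer--Wolfowitz) design over the set $P$ of achievable state distributions (which lies in a $\leq d$-dimensional span), use Cauchy--Schwarz in the $\Sigma$-norm to show $\inner{\bar p}{\bar g_z}\geq \etamin^2/(dK^2)$ for every $z$, and count via the partition of unity $\sum_z \bar g_z\equiv 1$; the $\etamin\mapsto\etamin^2$ loss appears through Cauchy--Schwarz rather than through squaring inside a Euclidean norm. Two trade-offs are worth noting. Your proof uses only the rank-$d$ structure of $T_{h-1}$ and never the norm bounds on $\phi^\star,\mu^\star$, which the paper's proof does invoke, so your statement is marginally more robust to normalization conventions; on the other hand it imports the existence of an (approximately) optimal design, which requires a compactness/closure step for $P$ (an $\epsilon$-approximate design suffices, since $\dlv \leq d(1+\epsilon)K^2/\etamin^2$ for all $\epsilon>0$ gives the claim), whereas the paper's peeling argument is self-contained. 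Minor points: you do not actually need convexity of $P$ or $\bar p\in P$, only $\bar p\in\Delta(\Xcal)$, which is automatic for a mixture of distributions; and your range/projection caveat in the Cauchy--Schwarz step is handled automatically because $\Sigma^{1/2}\bar g_z=\Sigma^{1/2}\Pi\bar g_z$ and the design's covariance spans $\mathrm{span}(P)$. Finally, note the paper's own proof also restricts to finite $\Xcal,\Zcal_h$, so your finite-subset remark puts you at the same level of rigor.
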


With this new assumption, we have the following theorem, which is proved in~\pref{app:real_world}.
\begin{theorem}
\label{thm:real_world_main}
Fix $\delta \in(0,1)$. In the setup of~\pref{thm:improved} with~\pref{assum:reachability}, a version
of \ouralg (described in~\pref{app:real_world}) computes a model
$\Mhat$ such that~\pref{eq:sys_id} holds with probability
$1-\delta$. The algorithm collects
$\poly(\dlv,H,K,\nicefrac{1}{\etamin},\nicefrac{1}{\varepsilon},\log(|\Phi||\Upsilon|/\delta))$
trajectories and runs in polynomial time with $H$ calls to \mle.
\end{theorem}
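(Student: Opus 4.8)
The plan is to keep the analysis of \pref{thm:improved} essentially intact and modify only the one module in which the sampling oracle \samp is invoked---the planner. Recall that the proof of \pref{thm:improved} is an induction on the stage $h$ whose hypothesis is that (i) the learned operators $\hat T_{0:h-1}$ satisfy the one-step guarantee \pref{eq:sys_id} on levels $0,\dots,h-1$, and (ii) the policies $\rho_{0:h}$ form a ``policy cover'': each reachable latent variable at level $h'<h$ is reached with probability $\Omega(\etamin/\dlv)$ by some mixture component of $\rho_{h'}$, equivalently the induced feature second-moment matrices dominate every reachable direction (up to the planner threshold). Given (ii), the \mle guarantee together with the low-rank change-of-measure lemma---accuracy of $\hat T_h$ under the well-covered training distribution $\rhotrain$ transfers to accuracy under \emph{every} policy, using the normalization of $\mu^\star$ and \pref{assum:reachability}---upgrades $\hat T_h$ to satisfy \pref{eq:sys_id} on level $h$. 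The only step remaining each stage is to output a cover $\rho_{h+1}$ for the next level, and this is exactly the planning step.

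I would implement the planner directly in $\Mcal$. Inspecting \pref{alg:general_planner} and the simpler \pref{alg:simplex_planner}, the only model-dependent operations are: (a) finding a policy $\pi$ that approximately maximizes a bounded feature-based scalar objective---either the elliptical quadratic form $\EE\sbr{\hat\phi_{\tilde h}(x_{\tilde h},a_{\tilde h})^\top\Sigma^{-1}\hat\phi_{\tilde h}(x_{\tilde h},a_{\tilde h})\mid\pi}$, or the occupancy $\EE\sbr{\hat\phi_{\tilde h}(x_{\tilde h},a_{\tilde h})[z]\mid\pi}$ of a target latent variable $z$, where $\tilde h$ is the terminal level of the current model---and (b) estimating the second moment $\EE\sbr{\hat\phi_{\tilde h}(x_{\tilde h},a_{\tilde h})\hat\phi_{\tilde h}(x_{\tilde h},a_{\tilde h})^\top\mid\pi}$ for a fixed $\pi$. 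For (b), since $\nbr{\hat\phi_{\tilde h}}_2\le\nbr{\hat\phi_{\tilde h}}_1=1$ in the simplex setting, a matrix Bernstein bound shows that $\poly(\dlv,1/\varepsilon,\log(1/\delta))$ rollouts of $\pi$ in $\Mcal$ suffice. For (a), the objective is a per-step reward $r(x,a)\in[0,1]$ depending only on the \emph{known} features $\hat\phi_{\tilde h}$; by hypothesis (i) and \pref{lem:representation}, the learned features $\hat\phi_{0:\tilde h}$ approximate the Bellman backup of every bounded value function along every policy up to an error shrinking with the \mle sample size $n$, so $\Mcal$ restricted to its first $\tilde h+1$ levels is a mildly misspecified linear MDP with the known feature map $\hat\phi_{0:\tilde h}$. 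Running \ucbvi~\citep{jin2019provably} (in a form robust to misspecification) with those features and reward $r$ for $N=\poly(\dlv,H,K,1/\varepsilon,\log(1/\delta))$ real episodes returns a policy optimal up to $O(1/\sqrt N)$ plus the misspecification level.

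It then remains to check that the approximate maximizers and approximate second moments do not damage the planner's guarantee---the same robustness computation as in the model-based proof. Whichever planner is used, it still terminates in $O(\dlv\log(1+\dlv/\etamin))$ rounds (for the elliptical planner, because a $\log\det$ potential increases by a constant whenever the objective exceeds $\beta$ and is bounded above; for the latent-variable-targeting planner, because there are only $\dlv$ latent variables), and on termination the discovered policies, post-composed with $\unif(\Acal)$ and mixed uniformly, satisfy property (ii) for $\rho_{h+1}$---with the extra planning slack from (a) and (b) absorbed by taking $N$ and $n$ polynomially large. Summing over the $H$ stages, each running $O(\dlv\log)$ planning rounds that cost $N+\poly$ episodes apiece, yields a total of $\poly(\dlv,H,K,1/\etamin,1/\varepsilon,\log(|\Phi||\Upsilon|/\delta))$ trajectories; \mle is invoked once per stage, for $H$ calls total, and \samp is never used. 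The main obstacle is the misspecification bookkeeping in (a): one must convert the statement ``$\hat T_{0:\tilde h}$ is close to $T_{0:\tilde h}$ in expected total variation along arbitrary policies'' into a uniform-over-value-functions bound on the Bellman-backup misspecification that \ucbvi can ingest, and then propagate the resulting planning suboptimality through the potential argument and the downstream change-of-measure step so that the final accuracy in \pref{eq:sys_id} degrades only by a factor that enlarging $n$ recovers. A secondary subtlety is the apparent circularity---the planner at stage $h$ relies on models built from data gathered under earlier planners---which the induction on $h$ dissolves exactly as in the proof of \pref{thm:improved}.
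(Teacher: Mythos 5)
Your skeleton matches the paper's: keep the induction of \pref{thm:improved} intact, replace only the planner by one that interacts with $\Mcal$, use per-coordinate rewards $\hat{\phi}_{h-1}(\cdot,\cdot)[i]$, mix the $\dlv$ resulting policies, compose with $\unif(\Acal)$, and invoke \mle once per stage. But the core of the argument --- the planning lemma --- is where your proposal has a genuine gap. You propose to run \ucbvi in the real environment, treating $\Mcal$ with the learned features $\hat{\phi}_{0:\tilde h}$ as a mildly misspecified linear MDP. The only guarantee available from the induction, however, is \pref{eq:induction_tv}: the model error (hence the Bellman-backup error of linear functions of $\hat\phi$, via \pref{lem:representation}) is controlled \emph{in expectation along policy-induced distributions}. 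Optimistic algorithms of the \ucbvi type need a \emph{uniform} (sup over $(x,a)$) misspecification bound to certify the validity of their bonuses; an average-case bound cannot be upgraded to a pointwise one, since $\hat{\phi}_h$ may be arbitrarily wrong on low-probability states. You flag exactly this conversion as the ``main obstacle'' but do not resolve it, and as stated it does not go through. There is also no need for optimism inside the planner: by the induction hypothesis \pref{eq:induction_coverage}, the already-constructed policies $\rho_{0:H-1}$ cover every reachable latent variable, so exploratory data is available for free.

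This is precisely what the paper exploits. Its planner (\pref{alg:real_world_planner}) runs \textsc{Linear-Fqi}: an offline least-squares dynamic program on data collected by $\rho_h\circ\unif(\Acal)$ at each level, with the function class $\{\inner{\theta}{\hat\phi_h}:\nbr{\theta}_2\le H\sqrt{d}\}$. The regression's approximation error is measured under the \emph{training} distribution and bounded by $H^2\epstv$ using \pref{eq:induction_tv} (since backups in the learned model are exactly linear in $\hat\phi_h$), the squared error is transferred to the roll-in distribution of an arbitrary policy through the latent-variable coverage \pref{eq:induction_coverage} at cost $\kappa K$, and a standard FQI value-difference decomposition propagates this to a suboptimality of $2H^3\sqrt{2\kappa K\epstv}$ (\pref{lem:real_world_optimization}, with the least-squares bound \pref{lem:least_squares}). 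Average-case error bounds suffice at every step, which is why no uniform misspecification is ever needed. The rest of the paper's proof matches your outline: instantiate the rewards as the coordinates of $\hat{\phi}_{h-1}$, take $\rhopre$ uniform over the $\dlv$ learned policies, absorb the statistical slack $\epsstat$ by requiring $\dlv\epsstat+(1+\dlv)\sqrt{\epstv}\le\etamin/2$ (leading to $\epstv\le O(\etamin^2/(H^6\dlv^2K^2))$ and $\kappa=2\dlv K$), and sum the $nH$ trajectories for \mle and $nH\dlv$ trajectories for the FQI calls. To repair your proposal you would either need to supply a uniform misspecification bound (which the assumptions do not give) or replace the \ucbvi step with an offline, coverage-based planner such as the paper's \textsc{Linear-Fqi}.
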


Importantly, this instantiation of \ouralg does not require that the
function classes support efficient sampling, i.e., we do not
use \samp. On the other hand, the sample complexity degrades in
comparison with our results using model-based planners. We also note
that the result considers simplex representations as
in~\pref{thm:improved} because (a) the calculations are considerably
simpler, and (b) in light of~\pref{prop:reachability} handling general
representations under~\pref{assum:reachability} only incurs a
polynomial overhead. We believe that extending the result to
accommodate general representations directly is possible.  }{}

\paragraph{Challenges in the analysis.}
We highlight three main challenges in the analysis.  The first
challenge arises in the analysis of the model learning step, where we want to show
that our model $\hat{T}_h$, learned by maximum likelihood estimation,
accurately approximates the true dynamics $T_h$ on the data
distributions induced by the exploratory policies $\rho_{0:j-1}$. This
requires a generalization argument, but both the empirical MLE
objective and population version --- the KL divergence --- are
unbounded, so we cannot use standard uniform convergence
techniques. Instead, we employ (and slightly adapt) results from the
statistics literature~\citep{geer2000empirical,zhang2006from} to show
that MLE yields convergence in the Hellinger and total variation
distances. While these arguments are well-known in the statistics
community, we highlight them here because we believe they may be
broadly useful in the context of model-based RL.

The second challenge is to transfer the model error guarantee from the
exploratory distributions to distributions induced by other
policies. Intuitively, error transfer should be possible if the
exploratory policies cover the state space, but we must determine how
we measure and track coverage. Leveraging the low rank dynamics, we
show that if a policy $\pi$ induces a distribution over true features
$\phi^\star_{h-1}$ that is in the span of the directions visited by
the exploratory policies, then we can transfer the MLE error guarantee
for $\hat{T}_h$ to $\pi$'s distribution. This suggests measuring
coverage for time $h$ in terms of the second moment matrix of the true
features at the previous time induced by the exploratory policies
$\rho$, that is $\Sigma_{h-1}
= \EE_{\rho}\phi_{h-1}^\star \phi_{h-1}^{\star,\top}$. Using these
matrices, we prove a sharp \emph{simulation lemma} that bounds the
model error observed by a policy $\pi$ in terms of the model error on
the exploratory distribution and the probability that $\pi$ visits
features for which
$\phi_{h-1}^{\star,\top}\Sigma_{h-1}^{-1} \phi_{h-1}^\star$ is large.

The simulation lemma suggests a planning strategy and an
``explore-or-terminate'' argument: the next exploratory policy should
maximize $\phi_{h-1}^{\star,\top}\Sigma_{h-1}^{-1} \phi_{h-1}^\star$,
in which case either we visit some new direction and make progress, or
we certify~\pref{eq:sys_id}, since no policy can make this quantity
large. However, we cannot maximize this objective directly, since we
do not know $\phi_{h-1}^\star$ or $\Sigma_{h-1}$! Moreover, we cannot
use $\hat{\phi}_{h-1}$ to approximate $\phi^\star_{h-1}$ in the
objective, since even if the model is accurate, the features may not
be. Instead we plan to find a policy that induces a well-conditioned
covariance matrix in terms of $\hat{\phi}_{h-2}$, the learned features
at time $h-2$. By composing this policy with a random action and
applying our simulation lemma, we can show that this policy either
explores at some previous time or approximately maximizes
$\phi_{h-1}^{\star,\top}\Sigma_{h-1}^{-1} \phi_{h-1}^\star$, which
allows us to apply the explore-or-terminate argument. Combining this
reasoning with an elliptical potential argument, we can bound the
number of iterations in which exploration can happen, which leads to
the final result.

\section{Discussion}
\label{sec:discussion}
This paper studies representation learning and exploration for low
rank MDPs. We provide an intuitive interpretation of these models in
terms of a latent variable representation, and we prove a number of
structural results certifying that low rank MDPs are significantly
more expressive than models studied in prior work. We also develop
\ouralg, a computationally and statistically efficient model based
algorithm for system identification in low rank MDPs. Policy
optimization follows as a corollary.

Our results raise a number of promising directions for future work. On
the theoretical side, 
can we develop provably efficient model-free algorithms for representation
learning in the low rank MDP? On the empirical side, can we leverage
the algorithmic insights of~\ouralg to develop practically effective
representation learning algorithms for complex reinforcement learning
tasks? We look forward to answering these questions in future work.

\subsection*{Acknowledgements}
We thank Ruosong Wang for insightful discussions regarding the
reachability assumption.

\appendix
\section{Proofs for the structural results}
\label{app:structural}

In this appendix we provide proofs for the structural results in the
paper. We first provide the proof
of~\pref{lem:representation}. In~\pref{app:structural_1} we focus on
results relating to realizability and reachability. Then
in~\pref{app:structural_2} we turn to the separation results
of~\pref{prop:block_separation}
and~\pref{prop:simplex_separation}. Finally
in~\pref{app:structural_3} we provide details about the connection to
the Bellman and Witness rank.

\subsection{Proof of~\pref{lem:representation}}

\begin{proof}[Proof of~\pref{lem:representation}]
Fix $h$ and $V:\Xcal \to [0,1]$. We drop the dependence on $h$ from
the notation, with $x,a$ always corresponding to states and actions at
time $h$ and $x'$ corresponding to an action at time $h+1$. Observe
that as $\Mhat$ is a low rank MDP, we have
\begin{align*}
\forall x,a:~~ \EE\sbr{V(x') \mid x,a,\Mhat} = \inner{\hat{\phi}(x,a)}{\int \hat{\mu}(x')V(x')} =: \inner{\hat{\phi}(x,a)}{\theta}
\end{align*}
Combining with~\pref{eq:sys_id}, we have that for any policy $\pi$:
\begin{align*}
& \EE\sbr{ \abr{\inner{\hat{\phi}(x,a)}{\theta} - \EE\sbr{V(x') \mid x,a,\Mcal}} \mid \pi,\Mcal}\\
& = \EE\sbr{ \abr{\EE\sbr{V(x') \mid x,a,\Mhat} - \EE\sbr{V(x') \mid x,a,\Mcal}} \mid \pi,\Mcal}\\
& \leq \EE\sbr{ \nbr{\inner{\hat{\phi}(x,a)}{\hat{\mu}(\cdot)} - T(\cdot \mid x,a)}_{\tv} \mid \pi,\Mcal} \leq \varepsilon.\tag*\qedhere
\end{align*}
\end{proof}

\subsection{On realizability and reachability}
\label{app:structural_1}

\begin{proposition}
\label{prop:realizability}
Fix $M\in \NN$, $n \leq \nicefrac{M}{2}$, and any algorithm. There exists a low rank MDP over
$M$ states with rank $2$ and horizon $2$ such that, if the algorithm
collects $n$ trajectories and outputs a policy $\hat{\pi}$, then with
probability at least $\nicefrac{1}{8}$, $\hat{\pi}$ is at least $\nicefrac{1}{8}$-suboptimal for the MDP.
\end{proposition}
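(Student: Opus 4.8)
I would prove \pref{prop:realizability} by a needle-in-a-haystack (Yao-style) argument. Build a family of rank-$2$, horizon-$2$ MDPs indexed by a hidden bit string $a^\star=(a^\star_1,\dots,a^\star_M)\in\{1,2\}^M$, one bit per ``middle'' state. Intuitively, at each middle state one action is ``good'' and the other ``bad'', but which one is good is hidden and can only be discovered by visiting that state. Since only one action is available at time $0$, each episode visits a single middle state, so $n$ episodes reveal at most $n$ of the $M$ bits; with $n\le M/2$ at least half the bits are unknown, forcing constant suboptimality on average and hence, on some fixed instance, with constant probability.

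\textbf{The construction.} Take $\Xcal=\{1,\dots,M\}$ with $x_0=1$, a single action at time $0$, and $T_0(\cdot\mid x_0,a_0)=\mathrm{unif}(\Xcal)$; this is rank $1$. At time $1$ there are $K=2$ actions, and from middle state $i$, playing $a^\star_i$ transitions deterministically to the ``rewarding'' state $x_0$ while the other action transitions to a fixed ``non-rewarding'' state (say state $2$); the reward is $r_2(x_0)=1$ and $0$ elsewhere. Writing $\phi^\star_1(i,a^\star_i)=e_1$, $\phi^\star_1(i,a)=e_2$ for $a\ne a^\star_i$, $\mu^\star_1(x_0)=e_1$, $\mu^\star_1(\text{state }2)=e_2$, and $\mu^\star_1\equiv 0$ otherwise, one checks $T_1(\cdot\mid i,a)=\inner{\phi^\star_1(i,a)}{\mu^\star_1(\cdot)}$, so $T_1$ has rank $\le 2$, and the normalization of \pref{def:lowrank} holds since $\|\phi^\star\|_2=1$ and $\|\int\mu^\star g\|_2\le\sqrt{2}$ for $g:\Xcal\to[0,1]$. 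The policy that plays $a^\star_i$ at every middle state always reaches $x_0$, so $V^\star=1$. Draw $a^\star$ uniformly at random with independent coordinates.

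\textbf{Information-theoretic core.} Each episode reveals exactly one middle state $x_1$ (sampled by the environment) together with the outcome of the chosen action there, and a binary outcome determines $a^\star_{x_1}$; hence over $n$ episodes at most $n$ distinct middle states are ever encountered. For any middle state $i$ not encountered, $a^\star_i$ is independent of the entire interaction transcript — the coordinates of $a^\star$ are independent, and $a^\star_i$ influences the transcript only through a transition out of $x_1=i$. Therefore, for the output policy $\hat\pi$ (a measurable function of the transcript, possibly randomized), $\EE_{a^\star_i}[\hat\pi(i)[a^\star_i]]=\tfrac12$ for every such $i$. Consequently $\EE[V^{\hat\pi}]\le \tfrac1M\big(n+(M-n)\cdot\tfrac12\big)=\tfrac12+\tfrac{n}{2M}\le\tfrac34$ whenever $n\le M/2$, and so $\EE[V^\star-V^{\hat\pi}]\ge\tfrac14$, the expectation being over $a^\star$, the environment, and the algorithm.

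\textbf{Finishing.} By averaging over $a^\star$ there is a fixed assignment for which $\EE[V^\star-V^{\hat\pi}]\ge\tfrac14$ over the remaining randomness; since $V^\star-V^{\hat\pi}\in[0,1]$ and $\EE X\le \PP(X\ge t)+t$, taking $t=\tfrac18$ gives $\PP(V^\star-V^{\hat\pi}\ge\tfrac18)\ge\tfrac14-\tfrac18=\tfrac18$, as claimed. The main obstacle is the information-theoretic core: making precise that an adaptive algorithm gleans no usable information about $a^\star_i$ for unvisited $i$, which relies on the independence of the coordinates of $a^\star$ and on the transcript depending on $a^\star_i$ only through visits to state $i$, and that this conclusion is robust to randomized, history-dependent algorithms and output policies. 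The remaining steps — verifying the rank-$2$ factorization and its normalization, and the expectation-to-probability conversion — are routine. (The same argument goes through verbatim in the reward-free model of \pref{sec:model}, since the transition to $x_0$ versus state $2$ is observed during exploration and the revealed reward $r_2(x_0)=1$ adds nothing for unvisited states.)
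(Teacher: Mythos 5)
Your proposal is correct and matches the paper's proof in essentially every respect: both embed a per-state binary classification ("hidden good action") problem into a horizon-2, rank-2 MDP with a uniform distribution over the $M$ intermediate states, realize it with the same one-hot $\phi^\star,\mu^\star$ in $\RR^2$, argue that labels of unvisited states remain uniformly random so the average suboptimality is at least $\tfrac{1}{2}(1-\nicefrac{n}{M})\geq \tfrac14$, and convert to a constant-probability statement via the same $\EE[Z]\leq \nicefrac18+\PP[Z\geq \nicefrac18]$ step. The only cosmetic difference is that you bound the number of visited states deterministically by $n$ while the paper computes $\PP[j\notin S]=(1-\nicefrac1M)^n$; both yield the same $\tfrac12(1-\nicefrac{n}{M})$ bound.
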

The result shows that if the low rank MDP has $M$ states, then we
require $n = \Omega(M)$ samples to find a near-optimal policy with
moderate probability. Thus low rank structure alone is not sufficient
to obtain sample complexity guarantees that are independent of the
number of states.

\begin{proof}[Proof of~\pref{prop:realizability}]
The result is obtained by embedding a binary classification problem
into a low rank MDP and appealing to a standard binary classification
lower bound argument.  We construct a family of one-step transition
operators, all of which have rank $2$. The state space at the current
time is of size $M$ and there are two actions $\Acal \defeq \{0,1\}$. From
each $(x,a)$ pair we transition deterministically either to $x_{g}$ or
$x_{b}$, and we receive reward $1$ from $x_g$ and reward $0$ from
$x_b$.

Formally, we denote the states as $\{x_1,\ldots,x_M\}$ and index each
instance by a binary vector $v \in \{0,1\}^M$, which specifies the
good action for each state. The transition operator is
\begin{align*}
T_v(\cdot \mid x_j,a) = \left\{\begin{aligned}
x_g &\textrm{ if } a = v_j\\
x_b &\textrm{ if } a \ne v_j
\end{aligned}\right.
\end{align*}
There are therefore $2^M$ instances. Note that as there are only two
states at the next time, we trivially see that that transition
operator of each instance is rank $2$ and the linear MDP representation is:
\[\phi_v^\star(x_j,a) = (\one\{a=v_j\},\one\{a \ne v_j\})\quad\mbox{and}\quad \mu_v^\star(x') = (\one\{x'=x_g\},\one\{x'=x_b\}).\]

The starting distribution is uniform over $[M]$, so that in $n$
episodes, the agent collects a dataset
$\{(x^{(i)},a^{(i)},y^{(i)}\}_{i=1}^n$ where $x^{(i)} \sim
\unif(x_1,\ldots,x_M)$, $a^{(i)}$ is chosen by the agent and $y^{(i)}$
denotes whether the agent transitions to $x_g$ or $x_b$. Information
theoretically, this is equivalent to obtaining $n$ samples from the
following data generating process: sample $j \in \unif([M])$
and reveal $v_j$.

In this latter process, we can apply a standard binary classification lower bound argument.
Let $P_v$ denote the data distribution where indices
$j$ are sampled uniformly at random and labeled by $v_j$. Let
$P_v^{(n)}$ denote the product measure where $n$ samples are generated
iid from $P_v$. By randomizing the instance, for any example that
does not appear in the sample, the probability of error is
$\nicefrac{1}{2}$. Therefore the probability of error for any
classifier is
\begin{align*}
& \max_{v} \EE_{S \sim P_v^{(n)}} \PP_{j}[\hat{f}(j) \ne v_j] \ge \EE_{v \sim \textrm{Unif}(\{0,1\}^M)} \EE_{S \sim P_v^{(n)}} \PP_{j}[\hat{f}(j) \ne v_j]\\
& = \frac{1}{M}\sum_{j=1}^M \EE_{v} \EE_{S \sim P_v^{(n)}} \one\{\hat{f}(j) \ne v_j\} \geq \frac{1}{M}\sum_{j=1}^{M} \frac{1}{2} \PP_{S}[j \notin S]\\
& = \frac{1}{2} \rbr{1 - \frac{1}{M}}^n  \geq \frac{1}{2} \rbr{1 - \nicefrac{n}{M}}.
\end{align*}
The second inequality uses the fact that if $j$ does not appear in the
sample then $v_j\sim \textrm{Ber}(\nicefrac{1}{2})$. Equivalently, we
can first sample $n$ unlabeled indices, then commit to the label just
on these indices, so that the label for any index not in the sample
remains random. Thus for any classifier, there exists some instance
for which on average over the sample, the probability of error is at
least $\nicefrac{1}{4}$ as long as $n \leq \nicefrac{M}{2}$. This also
implies that with constant probability over the sample the error rate
is at least $\nicefrac{1}{8}$, since for any random variable $Z$ taking values in $[0,1]$, we have
\begin{align*}
\EE[Z] \leq \nicefrac{1}{8}\rbr{1 - \PP[Z > \nicefrac{1}{8}]} + \PP[Z \geq \nicefrac{1}{8}] \leq \nicefrac{1}{8} + \PP[Z \geq \nicefrac{1}{8}].
\end{align*}
Taking $Z = \PP_j[\hat{f}(j) \ne v_j]$, we have
\begin{align*}
\PP_{S\sim P_v^{(n)}}\sbr{ \PP_j[\hat{f}_j \ne v_j] \geq \nicefrac{1}{8}} \geq \frac{1}{2}\rbr{1 - \nicefrac{n}{M}} - \nicefrac{1}{8} \geq \nicefrac{1}{8},
\end{align*}
where the last inequality holds with $n \leq \nicefrac{M}{2}$.

Now, notice that we can identify any predictor with a policy in the
obvious way and also that the suboptimality for a policy is precisely
the classification error for the predictor. With this correspondence,
we obtain the result.
\end{proof}

\begin{proof}[Proof of~\pref{prop:reachability}]
Fix stage $h$. Assume that $X \defeq |\Xcal|, \dlv \defeq |\Zcal_h|$
are finite, where $\Zcal_h$ is the latent state space associated with
$T_h$. Recall that $\psi_h(x,a) \in \Delta(\Zcal)$ maps each state
action pair to a distribution over latent states. As $X, |\Acal|,
\dlv$ are all finite, we may collect these vectors as a matrix $\Psi
\in \RR^{XK \times \dlv}$ with rows corresponding to $\psi_h(x,a)$. A
policy $\pi$ induces a distribution over $(x,a)$ pairs, which we call
$p_{\pi,h} \in \Delta(\Xcal\times\Acal)$. The corresponding
distribution over latent variables at stage $h$ is therefore
$p_{\pi,h}^\top \Psi$.

We re-express $p_{\pi,h}$ in two steps. First, we can write $p_{\pi,h}
= A_{\pi,h}\times\tilde{p}_{\pi,h}$ where $A_{\pi,h} \in \RR^{XK\times
  X}$ is a matrix where the $x^{\textrm{th}}$ column describes the
distribution $\pi(\cdot \mid x) \in \Delta(\Acal)$, and
$\tilde{p}_{\pi,h} \in \Delta(\Xcal)$ is the distribution over $x_h$
induced by policy $\pi$. Note that $A_{\pi,h}$ is column stochastic
(it is non-negative with each column summing to $1$). In fact it has
additional structure, since in column $x$, only the rows corresponding
to $(x,\cdot)$ are non-zero, but this will not be essential for our
arguments. As $A_{\pi,h}^\top$ is therefore row-stochastic and the
product of two row-stochastic matrices is also row-stochastic, we have
that $A_{\pi,h}^\top \Psi \in \RR^{X \times \dlv}$ is also
row-stochastic.

Next, we use the dynamics at stage $h-1$ to re-write
$\tilde{p}_{\pi,h}$, which is the state distribution induced by policy
$\pi$ at stage $h$. As $T_{h-1}$ is also rank $d$, we can write $T(x_h
\mid x_{h-1},a_{h-1}) =
\inner{\phi_{h-1}(x_{h-1},a_{h-1})}{\mu_{h-1}(x_h)}$, and we can
collect the embeddings $\mu_{h-1}(x_h)$ as columns of a $d \times X$
matrix $U_{h-1}$. With these definitions, 
\begin{align*}
\tilde{p}_{\pi,h} = \EE\sbr{ U_{h-1}^\top \phi_{h-1}(x_{h-1},a_{h-1}) \mid \pi,\Mcal} = U_{h-1}^\top v_{\pi,h-1}.
\end{align*}
Here $\Mcal$ is the MDP in consideration.  In summary, for any policy
$\pi$, the distribution over latent variable $z_{h+1}$ (which
generates $x_{h+1}$) induced by policy $\pi$ can be written as
\begin{align*}
\PP\sbr{z_{h+1} = \cdot \mid \pi,\Mcal} = v_{\pi,h-1}^\top U_{h-1} A_{\pi,h}^\top\Psi \in \RR^{\dlv}.
\end{align*}
Now, let us use our linear-algebraic re-writing to express the
reachability condition. If a latent variable $z \in \Zcal_h$ is
reachable, then there exists some policy $\pi_z$ such that $\PP[z_{h+1} =
  z \mid \pi_z,\Mcal] \geq \etamin$. First of all, by importance
weighting on the last action $a_h$, we have:
\begin{align*}
\PP\sbr{z_{h+1} = z \mid a_{0:h-1} \sim \pi_z, a_h \sim \unif(\Acal),\Mcal} \geq \etamin/K.
\end{align*}
The normalization condition on $\phi_{h-1}$ leads to the upper bound
\begin{align*}
\frac{\etamin}{K} \leq \abr{ v_{\pi_z,h-1}^\top U_{h-1} A_{h}^\top \Psi e_z } \leq \max_{v: \nbr{v}_2 \leq 1} \abr{ v^\top U_{h-1} A_{h} \Psi e_z} = \nbr{U_{h-1}A_h \Psi e_z}_2.
\end{align*}
where we use $A_h$ to denote the action-selection matrix induced by
the uniform policy.

Next, consider some $\ell_{\infty}$-bounded vector $w \in \RR^{\Zcal}$
with $\nbr{w}_{\infty} \leq 1$. The fact that $A_{h}^\top\Psi$ is
row-stochastic implies that
\begin{align*}
\nbr{A_{h}^\top \Psi w}_{\infty} \leq \max_{x,a} \abr{\psi(x,a)^\top w} \leq \nbr{w}_{\infty}.
\end{align*}
Therefore, using the normalization condition on $U_{h-1}$ we have
\begin{align*}
\max_{w: \nbr{w}_{\infty} \leq 1} \nbr{U_{h-1} A_{h}^\top \Psi w}^2_{2} \leq \max_{w: \nbr{w}_{\infty} \leq 1} \nbr{U_{h-1} w}_2^2 \leq d.
\end{align*}
We will select a vector $w \in \{\pm 1\}^{\dlv}$, for which we know this
upper bound holds. We select the vector iteratively, peeling off
latent variables that are reachable. For brevity, define $B \defeq
U_{h-1}A_h^\top\Psi$ and observe that
\begin{align*}
\nbr{B w}^2_2  &= \nbr{B e_{z_1} w[z_1]}_2^2 + 2 \langle B e_{z_1} w[z_1], \sum_{z \ne z_1}B e_{z} w[z]\rangle + \| \sum_{z \ne z_1}B e_{z} w[z] \|_2^2.
\end{align*}
If $z_1$ is $\etamin$ reachable, then the first term is at least
$\rbr{\nicefrac{\etamin}{K}}^2$ by the above calculation and the fact that we take $w[z_1]
\in \{\pm 1\}$. Then we ensure that the cross-term is non-negative by
setting $w[z_1]$ appropriately. Note that $w[z_1]$ is formally a
function of the remaining coordinates of $w$, but we have not
introduced any constraint on these remaining coordinates. Therefore,
for $z_1$ is $\etamin$ reachable, we get (for this partially specified $w$)
\begin{align*}
\nbr{Bw}_2^2 \geq \rbr{\nicefrac{\etamin}{K}}^2 + \| \sum_{z \ne z_1}B e_{z} w[z]\|_2^2
\end{align*}
Continuing in this way, we iteratively peel off latent variables that
are $\etamin$ reachable, and for each we gain $\rbr{\nicefrac{\etamin}{K}}^2$
in the lower bound. Therefore, if all $\dlv$ latent variables are
reachable, there exists some $w \in \{\pm 1\}^{\dlv}$ such that
\begin{align*}
\dlv\cdot \rbr{\nicefrac{\etamin}{K}}^2 \leq \nbr{U_{h-1}^\top A_h^\top \Psi w}_2^2 \leq d,
\end{align*}
which implies that we must have $\dlv \leq \nicefrac{dK^2}{\etamin^2}$.
\end{proof}

\subsection{Separation results}
\label{app:structural_2}
\begin{proof}[Proof of~\pref{prop:block_separation}]
Fix $N$ and consider a MDP with horizon $2$, where at stage $1$ there
is only one state $x$ and two actions $a_1,a_2$. At stage $2$ there
are $N$ possible states, so that $T(\cdot \mid x,a_i) \in \Delta([N])$
for each $i \in \{1,2\}$.
We define the transition operator for stage $1$, called $T$ for
brevity, explicitly in terms of its factorization. Let $\phi(x,a_1) =
e_1$ and $\phi(x,a_2) = e_2$ where $e_1,e_2 \in \RR^2$ denotes the two
standard basis elements in two dimensions. We define $\mu_1(i) =
\nicefrac{1}{N}$, $\mu_2(i) = i/(\sum_{j=1}^N j)$ and $\mu(i) =
(\mu_1(i), \mu_2(i)) \in \RR^2$. Thus $T(x'=i \mid x,a) =
\inner{\phi(x,a)}{\mu(i)}$, which can be easily verified to be a valid
transition operator. By construction $T$ has rank $2$.

For clarity we express $T$ as the $2 \times N$
matrix.
\begin{align*}
T \defeq \rbr{\begin{matrix}
\nicefrac{1}{N} & \nicefrac{1}{N} & \ldots & \nicefrac{1}{N}\\
\nicefrac{1}{(\sum_{j=1}^N j)} & \nicefrac{2}{(\sum_{j=1}^N j)} & \ldots & \nicefrac{N}{(\sum_{j=1}^N j)}
\end{matrix}}.
\end{align*}

We now show that the block MDP representation must have $N$ latent
states. Suppose the block MDP representation is $T(x'=i \mid x,a) =
\inner{\phi_B(x,a)}{\mu_B(i)}$. The block MDP representation requires
that for each index $i$ the vector $\mu_B(i)$ is one-sparse. From this, we
deduce a constraint that arises when two states belong to the same
block. If $i,j$ belong to the same block, say block $b$, then for
each $(x,a) \in \Xcal\times\Acal$, we have
\begin{align*}
T(x'=i \mid x,a) &= \phi_B(x,a)[b]\mu_B(i)[b] = \frac{\mu_B(i)[b]}{\mu_B(j)[b]} \cdot \phi_B(x,a)[b] \mu_B(j)[b] \\
&= \frac{\mu_B(i)[b]}{\mu_B(j)[b]} \cdot T(x'=j \mid x,a)
\end{align*}
In words, if states $i,j$ at stage 2 belong to the same
block, then the vectors $T(x'=i\mid \cdot), T(x'=j\mid \cdot)$ must be
pairwise linearly dependent.\footnote{Note that this is equivalent to
  the notion of backward kinematic
  inseparability~\citep{misra2019kinematic}.} Based on our
construction, $T(x' = i \mid\cdot) = \mu(i)$, which is just the
$i^{\textrm{th}}$ column of the matrix $T$. By inspection, all $N$
vectors are pairwise linearly independent, and so we can
  conclude that the block MDP representation must have $N$ latent
  states.
\end{proof}

\begin{proof}[Proof of~\pref{prop:simplex_separation}]
We consider a one step transition operator $T$ that we instantiate to
be the slack matrix describing a certain polyhedral set. Let $n$ be
even and let $K_n$ be the complete graph on $n$ vertices.  To set up
the notation we will work with vectors $x \in \RR^{{n \choose 2}}$
that associate a weight to each edge. We index the vectors as
$x_{u,v}$ where $u\ne v \in [n]$ correspond to vertices.

A result of~\citet{edmonds1965maximum} states that the perfect
matching polytope, which is the convex hull of all edge-indicator
vectors corresponding to perfect matchings, can be explicitly written
in terms of ``odd-cut'' constraints:
\begin{align*}
\Pcal_n &\defeq \textrm{conv}\cbr{\one_{M} \in \RR^{{n \choose 2}} \mid M \textrm{ is a perfect matching in } K_n}\\
& = \cbr{x \in \RR^{{n \choose 2}}: x \succeq 0, \forall v: \sum_u x_{u,v} = 1, \forall U \subset [n], |U| \textrm{ odd } \sum_{v \notin U}\sum_{u \in U} x_{u,v} \geq 1}.
\end{align*}
This polytope has exponentially many vertices and exponentially many
constraints. Formally, there are $V \defeq
\frac{n!}{2^{\nicefrac{n}{2}}(\nicefrac{n}{2})!}$ vertices,
corresponding to perfect matchings in $K_n$, and the number of
constraints is $C \defeq 2^{\Omega(n)}$ corresponding to the number of
odd-sized subsets of $[n]$. By adding one dimension to account for the
offsets in the inequality constraints, we can enumerate the vertices
$v_1,\ldots,v_V \in \RR^{{n \choose 2}+1}$ and the constraints
$c_1,\ldots,c_C \in \RR^{{n \choose 2}+1}$, such that
$\inner{c_i}{v_j} \geq 0$ for all $i,j$. Then, we define the
\emph{slack matrix} for this polytope to be $Z \in \RR_+^{C \times V}$
with entries $Z_{i,j} = \inner{c_i}{v_j}$.

This slack matrix clearly has rank ${n \choose 2} + 1 = O(n^2)$. On
the other hand, we claim that the non-negative rank is at least
$2^{\Omega(n)}$. This follows from (a) the fact that $\Pcal_n$ has
extension complexity $2^{\Omega(n)}$~\citep{rothvoss2017matching}, (b)
the extension complexity of a polytope is exactly the non-negative
rank of its slack
matrix~\citep{yannakakis1991expressing,fiorini2013combinatorial}.

Next, we define the transition operator $T$. We associate each $(x,a)$
pair with a constraint $c_i$ and each $x'$ with a vertex $v_j$. Then
we define
\begin{align*}
T(x' \mid x,a) = \frac{\inner{c_i}{v_j}}{\sum_{k=1}^{V} \inner{c_i}{v_k}}
\end{align*}
This is easily seen to be a distribution for each $(x,a)$ pair. We can
represent $T$ as a $C \times V$ matrix $T = DZ$ where $D$ is a
diagonal matrix (with strictly positive diagonal) and $Z$ is the slack
matrix defined above.

We conclude the proof with two facts
from~\citet{cohen1993nonnegative}. First, the non-negative rank is
preserved under positive diagonal rescaling, and so the non-negative
rank of $T$ is also $2^{\Omega(n)}$. Second, for a row-stochastic
matrix $P$, the non-negative rank is equal to the smallest number of
factors we can use to write $P = RS$ where both $R$ and $S$ are
row-stochastic (here factors refers to the internal dimension). It is
immediate that the simplex features representation corresponds to such
a row-stochastic factorization, and so we see that any simplex
features representation of $T$ must have embedding dimension at least
$2^{\Omega(n)}$.
\end{proof}

\subsection{On Bellman and Witness rank}
\label{app:structural_3}
We now state the formal version of~\pref{prop:bellman_informal}. We
consider the value-function/policy decomposition studied
by~\citet{jiang2017contextual} where we approximate the value
functions with a class $\Gcal: \Xcal \to [0,H]$ and the policies with
a class $\Pi: \Xcal \to \Acal$. Given an explicit reward
function $R$ with range $[0,1]$ and the function class $\Phi$ of
candidate embeddings, we define these two classes as:
\begin{align*}
\Pi(\Phi) &\defeq \cbr{\pi: x_h \mapsto \argmax_{a \in \Acal} \inner{\phi_h(x_h,a)}{\theta_h} + R(x_h,a_h): \theta_{0:H-1} \in B_{d}(H\sqrt{d}), \phi_{0:H-1} \in \Phi},\\
\Gcal(\Phi) &\defeq\cbr{g: x_h \mapsto \max_a\inner{\phi_h(x_h,a)}{\theta_h} + R(x_h,a_h): \theta_{0:H-1} \in B_{d}(H\sqrt{d}), \phi_{0:H-1} \in \Phi}.
\end{align*}
Here $B_d(\cdot)$ is the Euclidean ball in $d$ dimensions with the specified radius.
We have the following proposition:
\begin{proposition}
\label{prop:bellman}
The low rank MDP model with \emph{any} function classes $\Gcal \subset
\Xcal \to [0,B]$ and $\Pi\subset \Xcal \to \Delta(\Acal)$ has bellman
rank at most $d$ with normalization parameter $O(B\sqrt{d})$. Additionally,
for any known reward function $R$ with range $[0,1]$ and assuming
$\phi_{0:H-1}^\star \in \Phi$, the optimal policy and value function
lie in $(\Gcal(\Phi),\Pi(\Phi))$, and so \olive has sample complexity
$\otil\rbr{\poly(d, H, K, \log |\Phi|, \epsilon^{-1})}$.
\end{proposition}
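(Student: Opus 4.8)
The plan is to treat the two claims separately. For the Bellman rank bound I would work directly from the definition of average Bellman error: for a hypothesis $f = (g_f,\pi_f)\in\Gcal\times\Pi$ (inducing value estimate $g_f$ and policy $\pi_f$) rolled in with a policy $\pi'$ for the first $h$ steps, $\mathcal{E}_h(f,\pi') = \EE\sbr{ g_f(x_h) - R(x_h,a_h) - g_f(x_{h+1}) \mid a_{0:h-1}\sim\pi',\ a_h\sim\pi_f,\ \Mcal}$. The single essential move is to take the conditional expectation over $x_{h+1}$ and substitute $T_h(x'\mid x,a) = \inner{\phi_h^\star(x,a)}{\mu_h^\star(x')}$, so that $\EE\sbr{g_f(x_{h+1})\mid x_h,a_h} = \inner{\phi_h^\star(x_h,a_h)}{w_f}$ with $w_f \defeq \int g_f(x')\mu_h^\star(x')dx'$; this collapses the integrand to a function $\rho_f(x) \defeq g_f(x) - \EE_{a\sim\pi_f(x)}\sbr{R(x,a)+\inner{\phi_h^\star(x,a)}{w_f}}$ of $x_h$ alone that depends only on $f$. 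I would then invoke low-rankness one step earlier: the law of $x_h$ under any roll-in $\pi'$ is $\PP\sbr{x_h = x\mid\pi'} = \inner{\bar\phi_{\pi'}}{\mu_{h-1}^\star(x)}$ with $\bar\phi_{\pi'}\defeq\EE\sbr{\phi_{h-1}^\star(x_{h-1},a_{h-1})\mid\pi'}$, so that $\mathcal{E}_h(f,\pi') = \inner{\bar\phi_{\pi'}}{\int\rho_f(x)\mu_{h-1}^\star(x)dx}$. This is a $d$-dimensional bilinear form whose left factor depends only on the roll-in policy and whose right factor depends only on $f$; hence the matrix $\sbr{\mathcal{E}_h(f,\pi_{f'})}_{f,f'}$ has rank at most $d$ (and rank $1$ at $h=0$, where the roll-in is degenerate). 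For the normalization constant: $\nbr{\bar\phi_{\pi'}}_2 \le \EE\nbr{\phi_{h-1}^\star}_2 \le 1$, and since $g_f,\ \EE\sbr{g_f(x')\mid x,a}\in[0,B]$ and $R\in[0,1]$ we have $|\rho_f|\le B+1$, so writing $\rho_f = (B+1)(\rho_f^+-\rho_f^-)$ with $\rho_f^\pm:\Xcal\to[0,1]$ and applying the normalization assumption on $\mu^\star$ to each part gives a right-factor norm of $O(B\sqrt d)$.

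For the realizability claim I would use the Bellman optimality equations for the MDP equipped with reward $R$: $Q_h^\star(x,a) = R(x,a) + \EE\sbr{V_{h+1}^\star(x')\mid x,a} = \inner{\phi_h^\star(x,a)}{\theta_h^\star} + R(x,a)$ where $\theta_h^\star\defeq\int V_{h+1}^\star(x')\mu_h^\star(x')dx'$ and $V_H^\star\equiv 0$. Since $V_{h+1}^\star/H$ maps into $[0,1]$, the normalization on $\mu^\star$ gives $\nbr{\theta_h^\star}_2\le H\sqrt d$, i.e. $\theta_{0:H-1}^\star\in B_d(H\sqrt d)$. Combined with $\phi_{0:H-1}^\star\in\Phi$ (\pref{assum:realizability}), this exhibits $V_h^\star = \max_a Q_h^\star(\cdot,a)\in\Gcal(\Phi)$ and $\pi_h^\star = \argmax_a Q_h^\star(\cdot,a)\in\Pi(\Phi)$, so $(\Gcal(\Phi),\Pi(\Phi))$ is realizable for this reward. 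I would close by noting that $\Gcal(\Phi)$ and $\Pi(\Phi)$ are finite up to an $\epsilon$-scale discretization of the norm ball $B_d(H\sqrt d)$ --- giving $\log|\Gcal(\Phi)|,\log|\Pi(\Phi)| = O(\log|\Phi| + d\log(Hd/\epsilon))$ --- and that the functions in $\Gcal(\Phi)$ have magnitude $O(H\sqrt d)$; plugging Bellman rank $d$, normalization $O(Hd)$, and these log-cardinalities into the sample-complexity theorem for \olive~\citep{jiang2017contextual} yields $\otil\rbr{\poly(d,H,K,\log|\Phi|,\epsilon^{-1})}$.

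I expect the main obstacle to be purely a matter of matching conventions: one must use the version of average Bellman error in which the action at the evaluation step $h$ is drawn from the hypothesis being evaluated (not from the roll-in hypothesis), since it is exactly that which makes $\rho_f$ depend on a single hypothesis and gives the clean one-sided $d$-dimensional factorization; with any other convention the argument needs a cosmetic reworking. A minor point to get right is that $\Gcal(\Phi)$ as written is only $O(H\sqrt d)$-bounded rather than $[0,H]$-valued --- harmless, since either one clips it to $[0,H]$ (legitimate as $V^\star\in[0,H]$) or simply carries the $O(H\sqrt d)$ range through OLIVE's polynomial bound. Everything else --- the low-rank substitutions, the two norm estimates, and the optimality-equation computation --- is routine.
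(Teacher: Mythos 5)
Your proposal follows essentially the same route as the paper's proof: you factorize the average Bellman error through the low rank dynamics at step $h-1$, pairing the roll-in embedding $\EE[\phi_{h-1}^\star\mid\pi']$ (norm at most $1$) with $\int \mu_{h-1}^\star(x)\,\rho_f(x)\,dx$ (norm $O(B\sqrt{d})$ via the $\mu^\star$ normalization), and you establish realizability by writing $Q^\star_h = R + \inner{\phi_h^\star}{\theta_h^\star}$ with $\|\theta_h^\star\|_2 \le H\sqrt{d}$, exactly as in the appendix. The only divergence is the capacity step --- you discretize the parameter ball to get finite classes (which strictly speaking requires \olive to tolerate the resulting approximate realizability), whereas the paper bounds the Natarajan and pseudo-dimensions of $\Pi(\Phi),\Gcal(\Phi)$ via a growth-function argument --- but both are routine and yield the same $\otil(\poly(d,H,K,\log|\Phi|,\epsilon^{-1}))$ bound.
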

\begin{proof}[Proof of~\pref{prop:bellman}]
The result is essentially Proposition 9
in~\citet{jiang2017contextual}, who address the simplex representation
case. We address the general case and also verify the
realizability assumption.

Consider any explicitly specified reward function $R: \Xcal \times
\Acal \times \{0,\ldots,H-1\} \to [0,1]$ and any low rank MDP with
embedding functions $\phi_{0:H-1}^\star,\mu_{0:H-1}^\star$ and
embedding dimension $d$.  For any policy $\pi,\pi'$ and any value
function $g: \Xcal \to \RR$ we define the \emph{average Bellman
  error}~\citep{jiang2017contextual} as
\begin{align*}
\Ecal(\pi,(g,\pi'),h) \defeq \EE\sbr{g(x_h) - R_h(x_h,a_h) - g(x_{h+1}) \mid a_{0:h-1}\sim\pi, a_h = \pi'(x_h), \Mcal},
\end{align*}
We also introduce the shorthand
\begin{align*}
\Delta((g,\pi'),x_h) \defeq \EE\sbr{g(x_h) - R_h(x_h,a_h) - g(x_{h+1}) \mid x_h, a_h = \pi'(x_h)}.
\end{align*}
Then, in the low rank MDP, the average Bellman error admits a factorization as follows
\begin{align*}
\Ecal(\pi,(g,\pi'),h) &= \EE\sbr{\Delta((g,\pi'),x_h) \mid x_h \sim \pi} \\
&= \inner{\EE\sbr{\phi_{h-1}^\star(x_{h-1},a_{h-1}) \mid \pi}}{\int \mu_{h-1}^\star(x_h)\Delta((g,\pi'),x_h)d(x_h)}\\
& =: \inner{\nu_h(\pi)}{\xi_h((g,\pi'))}
\end{align*}
We also have the normalization $\nbr{\nu_h(\pi)}_2 \leq 1$ and
$\nbr{\xi_h((g,\pi))}_2 \leq (2B+1)\sqrt{d}$. This final calculation
is based on the triangle inequality, the bounds on $g$ and $R$ and the
normalization of $\mu_{h-1}^\star$. Thus for any low rank
MDP and \emph{any} (bounded) function class $\Gcal,\Pi$, the Bellman
rank is at most $d$ with norm parameter $O(B\sqrt{d})$.

To prove that \olive has low sample complexity, we need to verify that
the optimal policy and optimal value function lie in $\Pi(\Phi)$ and
$\Gcal(\Pi)$ respectively. Then we must calculate the statistical
complexity of these two classes. Observe that we can express the
Bellman backup of any function $V:\Xcal \to \RR$ as a linear function
in the optimal embedding $\phi^\star$:
\begin{align*}
(\Tcal_h V)(x,a) &\defeq \EE[ R_h(x,a) + V(x') \mid x,a,h] = R_h(x,a) + \inner{\phi_h^\star(x,a)}{\int\mu_h^\star(x')V(x')d(x')}\\
& = R_h(x,a) + \inner{\phi^\star_h(x,a)}{w}.
\end{align*}
for some vector $w$. Moreover, if $V: \Xcal \to [0,H]$, we know that
$\nbr{w} \leq H\sqrt{d}$. In particular, this implies that the optimal
$Q$ function is a linear function in the true embedding functions
$\phi_{0:H-1}^\star$, and so realizability holds for
$\Gcal(\Phi),\Pi(\Phi)$. These function classes have range $B =
O(H\sqrt{d})$ so the normalization parameter in the Bellman rank
definition is $O(Hd)$.

Finally, we must calculate the statistical complexity of these two
classes. For $\Pi(\Phi)$ the Natarajan dimension is at most
$\otil\rbr{H (d + \log |\Phi|)}$, since for each $h$, we choose
$\phi_h$ and a $d$-dimensional linear classifier. Analogously the
pseudo-dimension of $\Gcal(\Phi)$ is $\otil\rbr{H(d +\log |\Phi|)}$.
Formally, we give a crude upper bound on the growth function, focusing on $\Pi(\Phi)$.
Fix $h$, let $S$ be
a sample of $n$ pairs $(x,a)$, and let $h_1,h_2: S \to \{0,1\}$
such that $h_1(x,a) \ne h_2(x,a)$ for all points in the sample. Since
once we fix $\phi \in \Phi$, we have a linear class, we can vary
$\theta$ to match $h_1,h_2$ on at most $(n+1)^d$ subsets $T \subset
S$. Then by varying $\phi \in \Phi$ we can match $h_1,h_1$ in total on
$|\Phi| (n+1)^d \leq n^{\order(d + \log |\Phi|)}$ subsets. If $S$ is
shattered, this means that $2^n \leq n^{\order(d + \log |\Phi|)}$,
which means that the Natarajan dimension is $O((d + \log|\Phi|)\log(d
+ \log |\Phi|))$. This calculation is for a fixed $h$, but the same
argument yields the bound of $\tilde{O}(H(d + \log |\Phi|))$.
Instantiating, we obtain the sample complexity bound for \olive.
\end{proof}

For the model-based version using the witness rank, the arguments are
more straightforward.
\begin{proposition}
\label{prop:witness}
The low rank MDP model with any candidate model class $\Pcal$ has
witness rank at most $d$, with norm parameter
$O(\sqrt{d})$. Additionally, for any explicitly specified reward
function $R$ with range $[0,1]$ and under~\pref{assum:realizability},
the algorithm of~\citet{sun2018model} (with witness class of all
bounded functions) has sample complexity $\otil\rbr{\poly(d,K,H,\log
  |\Phi||\Upsilon|,\varepsilon^{-1})}$.
\end{proposition}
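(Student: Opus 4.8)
The plan is to mirror the proof of \pref{prop:bellman}, replacing the average Bellman error of a $(\text{value},\text{policy})$ pair with the model-based witnessed discrepancy of \citet{sun2018model}, and replacing the realizability analysis for value/policy classes with the (simpler) realizability of the induced model class $\Pcal(\Phi,\Upsilon)\defeq\{(\phi_{0:H-1},\mu_{0:H-1}):\phi_h\in\Phi,\ \mu_h\in\Upsilon\}$. Throughout, $\pi_M$ denotes the greedy policy of model $M$ with respect to the given reward $R$, $\hat T_{M,h}$ its level-$h$ kernel, and $\Fcal$ the discriminator class of all $[0,1]$-valued functions.

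\textbf{Witness rank via $\phi^\star_{h-1}$.} For a roll-in policy $\pi_{M'}$ and a tested model $M$, the relevant level-$h$ quantities (both the witnessed misfit $\sup_{f\in\Fcal}\abr{\EE[\EE_{x'\sim\hat T_{M,h}}f-\EE_{x'\sim T^\star_h}f\mid\pi_{M'},\Mcal]}$ and the value-discrepancy term that appears in the witness-rank definition) are of the form $\EE[g(x_h)\mid\pi_{M'},\Mcal]$ for some function $g:\Xcal\to[-1,1]$ that depends only on $M$ and on the chosen test function, where $x_h$ is reached by following $\pi_{M'}$ for $h$ steps. Since $T^\star_{h-1}$ has rank $d$, the law of $x_h$ under $\pi_{M'}$ is $x_h\mapsto\inner{\EE[\phi^\star_{h-1}(x_{h-1},a_{h-1})\mid\pi_{M'},\Mcal]}{\mu^\star_{h-1}(x_h)}$, hence $\EE[g(x_h)\mid\pi_{M'},\Mcal]=\inner{\nu_h(M')}{\xi_h(M)}$ with $\nu_h(M')\defeq\EE[\phi^\star_{h-1}(x_{h-1},a_{h-1})\mid\pi_{M'},\Mcal]$ and $\xi_h(M)\defeq\int\mu^\star_{h-1}(x_h)\,g(x_h)\,dx_h$ (taking $g$ for the witness-game maximizer, and fixing signs so the form is nonnegative). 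The normalization on $\phi^\star$ gives $\nbr{\nu_h(M')}_2\le1$, and the normalization on $\mu^\star_{h-1}$ applied to the $[-1,1]$-bounded $g$ (via its positive and negative parts) gives $\nbr{\xi_h(M)}_2\le\sqrt d$; because $\Fcal$ contains all $[0,1]$-valued functions, the test function appearing in the value-discrepancy term is itself admissible in the witness game, so that term is dominated by the same bilinear form up to a factor $\kappa^{-1}=O(H)$ (coming from the $[0,H]$ range of value functions, and equal to $O(1)$ under the episode-level reward normalization of \citet{jiang2017contextual}). This argument uses nothing about $\Pcal$ beyond the low-rank structure of $\Mcal$, so the witness rank is at most $d$ with norm parameter $O(\sqrt d)$.

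\textbf{Realizability and the sample-complexity bound.} Under \pref{assum:realizability}, $\phi^\star_h\in\Phi$ and $\mu^\star_h\in\Upsilon$ for all $h$, so $M^\star=(\phi^\star_{0:H-1},\mu^\star_{0:H-1})$ lies in $\Pcal(\Phi,\Upsilon)$; model realizability holds and the induced greedy policies and value functions are well-defined by dynamic programming (using the normalization on $\Upsilon$). It then remains to invoke the main guarantee of \citet{sun2018model}: with witness rank $\le d$, norm parameter $O(\sqrt d)$, the factor $\kappa^{-1}=\poly(H)$, $K$ actions, horizon $H$, and $\log|\Pcal(\Phi,\Upsilon)|=O(H\log(|\Phi||\Upsilon|))$, their algorithm instantiated with the all-bounded-functions discriminator returns an $\varepsilon$-optimal policy using $\otil\rbr{\poly(d,K,H,\log(|\Phi||\Upsilon|),\varepsilon^{-1})}$ trajectories. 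I expect the main obstacle to be confirming that, with $\Fcal$ the class of \emph{all} bounded functions, the per-iteration estimation step of \citet{sun2018model} is still controlled by $\log|\Pcal(\Phi,\Upsilon)|$ rather than by a separate discriminator-complexity term: this holds because testing against all bounded $f$ reduces to a Scheffé-type minimum-distance estimate over the finitely many candidate kernels $\{\inner{\phi_h(x,a)}{\mu_h(\cdot)}:\phi_h\in\Phi,\ \mu_h\in\Upsilon\}$, which contributes no extra complexity.
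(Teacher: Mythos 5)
Your proposal is correct and takes essentially the same approach as the paper: factor the witnessed model misfit through $\phi^\star_{h-1}$ on the roll-in side and $\int \mu^\star_{h-1}(x_h)\Delta(x_h,M)\,dx_h$ on the tested-model side, bound the two factors by $1$ and $O(\sqrt{d})$ via the normalization conditions, then combine model-class realizability with the guarantee of \citet{sun2018model}. You spell out details the paper leaves implicit (the $\kappa^{-1}=O(H)$ relation between the value discrepancy and the witnessed misfit, the Scheff\'e-type reduction showing the all-bounded-functions discriminator adds no statistical complexity, and the $O(H\log(|\Phi||\Upsilon|))$ model-class complexity), but the core argument is the same.
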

\begin{proof}
Given a model $M$ and an explicit reward function $R$, we use $\pi_M$
to denote the optimal policy for $R$ with transitions governed by
$M$. Then, for two models $M_1,M_2$ and a time step $h$ the witness
model misfit, when instantiated with the test function class as all
bounded functions, is defined as
\begin{align*}
\Wcal(M_1,M_2, h) \defeq \EE\sbr{ \nbr{M_2(\cdot \mid x_h,a_h) - \Mcal(\cdot \mid x_h,a_h)}_{\tv} \mid a_{0:h-1} \sim \pi_{M_1}, a_h = \pi_{M_2},\Mcal}.
\end{align*}
Here we use the notation $M(\cdot \mid x_h,a_h)$ to denote the
transition operator implied by $M$ at stage $h$. Recall that $\Mcal$
is the true MDP. In words, the witness model misfit is the one-step TV
error between candidate model $M_2$ and the true environment $\Mcal$
on the data distribution induced by executing policy $\pi_{M_1}$ for
$h$ steps.

Using the backing up argument from the proof of~\pref{prop:bellman},
it is easy to see that the witness model misfit admits a factorization as
\begin{align*}
\Wcal(M_1,M_2,h) = \inner{\EE\sbr{\phi_{h-1}^\star(x_{h-1},a_{h-1}) \mid \pi_{M_1},\Mcal}}{\int \nu_{h-1}^\star(x_{h})\Delta(x_h,M_2)}
\end{align*}
where $\Delta(x_h,M_2)$ is the expected total variation distance
between $M_2$ and $\Mcal$ on $(x_h,\pi_{M_2}(x_h))$. Based on this
calculation, the witness rank is at most $d$ and the normalization
parameter is at most $O(\sqrt{d})$. It is more straightforward to see
that realizability holds here, and so the algorithm
of~\citet{sun2018model} has the stated sample complexity.
\end{proof}

\section{Analysis of \ouralg}
\label{app:algos}
As a reminder, \ouralg interacts with a low rank MDP $\Mcal$, with
time horizon $H$ and with non-stationary dynamics $T_h(x_{h+1} \mid
x_h,a_h) = \inner{\phi_h^\star(x_h,a_h)}{\mu_h^\star(x_{h+1})}$. We
assume that for each $h$ the operators $\phi_h^\star, \mu_h^\star$
embed into $\RR^d$.  We use the shorthand $\EE_{\pi}\sbr{\cdot} =
\EE\sbr{\cdot \mid \pi,\Mcal}$ to denote expectations when policy
$\pi$ interacts with the real MDP $\Mcal$ and
$\hat{\EE}_\pi\sbr{\cdot} = \EE\sbr{\cdot \mid \pi,\Mhat}$ for
expectations when the policy interacts with the estimated MDP $\Mhat$,
which has dynamics $\hat{T}_{0:H-1}$. Note that this MDP model changes
from iteration to iteration. When necessary we will use
$\hat{\EE}_{j,\pi}\sbr{\cdot}$ to denote the MDP model learned in the
$j^{\textrm{th}}$ iteration of \ouralg.

The analysis of \ouralg is based on a potential function argument. The
key quantities are the second moment matrices of the real features
induced by the policies $\rho_0,\rho_1,\ldots$ at each time
$h$. Formally, for $h \in \{0,\ldots,H-1\}$ and $j \in
[J_{\max}]$ we define
\begin{align*}
\Sigma_{h,j} \defeq \lambda I_{d \times d} + \sum_{i=0}^{j-1} \EE_{\rho_i}\sbr{\phi^\star_h(x_h,a_h)\phi^\star_h(x_h,a_h)^\top},
\end{align*}
where $\lambda >0$ is a small constant we will set towards the end of
the proof. Note that $\Sigma_{h,j} \succ 0$ for all $h,j$.

The importance of $\Sigma_{h,j}$ is demonstrated in the next result,
which establishes an accuracy guarantee for the model
$\hat{T}_{0:H-1}$ learned in iteration $j$. The result is a corollary
of~\pref{thm:mle}.
\begin{corollary}
\label{corr:mle}
Fix $j \geq 1$, $h \in \{1,\ldots H-1\}$, $\delta \in (0,1)$, and let
$\rho_0,\ldots,\rho_{j-1}$ be any (possibly data-dependent) policies,
with $\Sigma_{h,j}$ defined accordingly. Let $D_h$ be a dataset of
$nj$ examples where for each $0 \leq i < j$ we collect $n$ triples
$(x_h,a_h,x_{h+1})$ by rolling in with $\rho_i$ to $x_h$ and taking
$a_h$ uniformly at random. Then with probability $1-\delta$ the output
$(\hat{\phi}_h,\hat{\mu}_h)$ of $\mle(D_h)$ satisfies
\begin{align*}
\nbr{ \int \mu_{h-1}(x_{h}) \unif(a_h)\nbr{\inner{\hat{\phi}_h(x_h,a_h)}{\hat{\mu}_h(\cdot)} - T_h(\cdot \mid x_h,a_h)}_{\tv}}_{\Sigma_{h-1,j}}^2 \leq \lambda d + \frac{2 \log (|\Phi||\Upsilon|/\delta)}{n}.
\end{align*}
Additionally, for any $j \geq 1$, with probability at least $1-\delta$
we have
\begin{align*}
\nbr{\inner{\hat{\phi}_0(x_0,a_0)}{\hat{\mu}_0(\cdot)} - T(\cdot \mid x_0,a_0)}_{\tv}^2 \leq \frac{2 \log (|\Phi||\Upsilon|/\delta)}{n}.
\end{align*}
\end{corollary}
\begin{proof}
For shorthand, we use $v_h$ to denote the $d$-dimensional vector on
the left hand side of the desired bound. Then, the left hand side is
\begin{align*}
\nbr{v_h}_{\Sigma_{h-1},j}^2 &= \lambda \nbr{v_h}_2^2 + \sum_{i=0}^{j-1} \EE_{\rho_i} \sbr{\rbr{\phi_{h-1}^\star(x_{h-1},a_{h-1})^\top v_h}^2}\\
 &= \lambda \nbr{v_h}_2^2 + \sum_{i=0}^{j-1} \EE_{\rho_i} \sbr{\rbr{\EE\sbr{ \nbr{\inner{\hat{\phi}_h(x_h,a_h)}{\hat{\mu}_h(\cdot)} - T_h(\cdot \mid x_h,a_h)}_{\tv} \mid x_{h-1},a_{h-1}}}^2}\\
& \leq \lambda d + \sum_{i=0}^{j-1} \EE_{\rho_i} \sbr{\nbr{\inner{\hat{\phi}_h(x_h,a_h)}{\hat{\mu}_h(\cdot)} - T_h(\cdot \mid x_h,a_h)}^2_{\tv} \mid a_h\sim\unif(\Acal)}.
\end{align*}
The first term appears in the desired bound, so now we focus on the
second term. We have $nj$ total examples that form a martingale
process, since $\rho_i$ depends on all of the data collected
in previous iterations. Applying~\pref{thm:mle}, we see that with probability
$1-\delta$:
\begin{align*}
\sum_{i=0}^{j-1} n\cdot \EE_{\rho_i} \sbr{\nbr{\inner{\hat{\phi}_h(x_h,a_h)}{\hat{\mu}_h(\cdot)} - T_h(\cdot \mid x_h,a_h)}^2_{\tv} \mid a_h\sim\unif(\Acal)} \leq 2 \log (|\Phi||\Upsilon/\delta),
\end{align*}
where the factor of $n$ arises since we collect $n$ examples from
$\rho_i$. Re-arranging we obtain the first bound. The bound for $h=0$
is a direct application of~\pref{thm:mle}, since we assume there is a
fixed starting state $x_0$ with a single available action.
\end{proof}

Now that we have established an accuracy guarantee in terms of the
previous exploratory policies, we state and prove the main technical
``simulation'' lemma. The following notation is helpful. Given an MDP
model $\hat{\phi}_{0:H-1},\hat{\mu}_{0:H-1}$ and positive definite matrices
$\Sigma_0,\ldots,\Sigma_{H-1}$, define
\begin{align*}
\forall h \geq 1: \err_h(\Sigma_{h-1}) &\defeq \nbr{\int \mu_{h-1}(x_h)\unif(a_h) \nbr{\inner{\hat{\phi}_h(x_h,a_h)}{\hat{\mu}_h(\cdot)} - T_h(\cdot \mid x_h,a_h)}_{\tv}}_{\Sigma_{h-1}}^2,\\
\err_0 &\defeq \nbr{\inner{\hat{\phi}_0(x_0,a_0)}{\hat{\mu}_0(\cdot)} - T_0(\cdot \mid x_0,a_0)}^2_{\tv}.
\end{align*}
Further, for each $h\geq 1$, define $\Kcal_h(\Sigma_h) \defeq \cbr{
  (x,a) \in \Xcal \times \Acal:
  \nbr{\phi_h^\star(x_h,a_h)}_{\Sigma_h^{-1}}^2 \leq 1}$. Let
$M_\Kcal$ be the MDP with non-stationary transition operator
$T_{h,\Kcal}$ defined as
\begin{align*}
T_{h,\Kcal}(x_{h+1} \mid x_h,a_h) = \left\{\begin{aligned}
\inner{\phi_h^\star(x_h,a_h)}{\mu_h^\star(x_{h+1})} & \textrm{ if } (x_h,a_h) \in \Kcal_h(\Sigma_h)\\
\one\{x_{h+1} = \xabsorb\} & \textrm{ if } (x_h,a_h) \notin \Kcal_h(\Sigma_h)
\end{aligned}\right.,
\end{align*}
where $\xabsorb$ is a special self-looping absorbing state with a
single action $\aabsorb$ such that $T(\xabsorb \mid \xabsorb,\aabsorb)
= 1$ always. The initial transition $T_{0,\Kcal}$ is identical to
$T_0$. The intuition is that $\Kcal$ denotes the set of ``known''
state-action pairs, and the MDP $M_\Kcal$ terminates any episode that
escapes the known set.  In all of these definitions, we suppress the
dependence on $\Sigma_h$ when it is clear from context. We always
consider $(\xabsorb,\aabsorb)$ to be \emph{known}.

\begin{lemma}
\label{lem:clipped_simulation}
Let $\hat{\phi}_{0:H-1},\hat{\mu}_{0:H-1}$ be an MDP model and let $\Sigma_{0:H-1}$ be positive definite matrices. Assume that
\begin{align*}
\forall h \in \{0,\ldots,H-1\}: ~ \err_h(\Sigma_{h-1}) \leq \epstv.
\end{align*}
Let $f: \Xcal\times\Acal \to [0,1]$ be any function such that $f(\xabsorb,\aabsorb) = 0$,
and let $\pi$ be any policy. Then for any $h \in\{0,\ldots,H-1\}$
\begin{align*}
\EE_\pi\sbr{f(x_h,a_h) \mid \Mcal_\Kcal} - HK\sqrt{\epstv} \leq \hat{\EE}_\pi\sbr{f(x_h,a_h)} &\leq \EE_\pi\sbr{f(x_h,a_h) \mid \Mcal_\Kcal} + HK \sqrt{\epstv}\\
& ~~~~~~~~~~~ + \sum_{h'=0}^{h-1} \PP\sbr{(x_{h'},a_{h'}) \notin \Kcal_{h'} \mid \pi,\Mcal_{\Kcal}}.
\end{align*}
\end{lemma}
This lemma establishes a sharp relationship between the learned MDP
$\Mhat$ and an \emph{absorbing} MDP $\Mcal_\Kcal$, defined in terms of
the matrices $\Sigma_h$, which also governs the estimation error for
$\Mhat$. Intuitively the error guarantee implies that $\Mhat$ closely
approximates $\Mcal_\Kcal$ provided we stay within the known set
$\Kcal_{0:H-1}$. Conversely the difference in value between the two
MDPs can be bounded in terms of the escaping probability, which is the
third term on the right hand side of the bound.

Note also that the above lemma, with $\epstv=0$, can be used to compare
$\Mcal$ with $\Mcal_{\Kcal}$, which yields that for any non-negative function
$f$ and any policy $\pi$:
\begin{align}
\EE_\pi\sbr{f(x_h,a_h) \mid \Mcal_\Kcal}\leq \EE_\pi\sbr{f(x_h,a_h)} \leq \EE_\pi\sbr{f(x_h,a_h) \mid \Mcal_\Kcal} + \sum_{h=0}^{h-1} \PP\sbr{(x_{h'},a_{h'})\notin\Kcal_{h'} \mid \pi,\Mcal_\Kcal}. \label{eq:rmax_lemma}
\end{align}
This bound actually holds for any sets $\Kcal_h$. We now
turn to the proof of~\pref{lem:clipped_simulation}.

\begin{proof}
Let $\hat{V}_{h'}(x) \defeq \hat{\EE}_\pi[f(x_h,a_h) \mid x_{h'} = x]$
denote the value function (relative to $f$) in the model and let $V_{h',\Kcal}(x)$
denote the analogous quantity in the absorbing MDP. We have the
following telescoping identity:
\begin{align*}
& \hat{\EE}_\pi [f(x_h,a_h)] - \EE_\pi\sbr{f(x_h,a_h) \mid M_\Kcal} = \int \rbr{\hat{T}_0(x_1 \mid x_0,a_0) - T_{0,\Kcal}(x_1 \mid x_0,a_0)} \hat{V}_1(x_1) \\
& ~~~~~~~~~~ + \EE_\pi\sbr{\hat{V}_1(x_1) - V_{1,\Kcal}(x_1) \mid M_\Kcal}\\
& = \sum_{h'=0}^{h-1}\EE_\pi\sbr{\int (\hat{T}_{h'}(x_{h'+1} \mid x_{h'},a_{h'}) - T_{h',\Kcal}(x_{h'+1} \mid x_{h'},a_{h'})) \hat{V}_{h'+1}(x_{h'+1}) \mid M_\Kcal}.
\end{align*}

Note that $\hat{V}_h(x) = V_{h,\Kcal}(x)$ at the time $h$ where we
apply function $f$. This means that we only accumulate errors up to
time $h-1$. We now work with one of these terms. In the remainder of
the proof, unless otherwise specified, all expectations are taken by
executing $\pi$ in $M_\Kcal$. By adding and subtracting $T_{h'}$, we
get two terms
\begin{align*}
\mathrm{Term 1}_{h'} \defeq \EE\sbr{ \int (\hat{T}_{h'}(x_{h'+1} \mid x_{h'},a_{h'}) - T_{h'}(x_{h'+1} \mid x_{h'},a_{h'}) )\hat{V}_{h'+1}(x_{h'+1})}\\
\mathrm{Term 2}_{h'} \defeq \EE\sbr{\int (T_{h'}(x_{h'+1} \mid x_{h'},a_{h'}) - T_{h',\Kcal}(x_{h'+1} \mid x_{h'},a_{h'})) \hat{V}_{h'+1}(x_{h'+1})}.
\end{align*}
For $\mathrm{Term 1}_{h'}$, note that the expression evaluates to zero
if $x_{h'} = \xabsorb$ since both $\hat{T}_{h'}$ and $T_{h'}$ agree that
$\xabsorb$ has a single self-looping action. We now bound
$\mathrm{Term 1}$ at time $h'=1$, although exactly the same argument
applies to $h' > 0$.  Defining $\err(x_1,a_1) \defeq
\nbr{\hat{T}_1(\cdot \mid x_1,a_1) - T_1(\cdot \mid x_1,a_1)}_{\tv}$
and by applying Holder's inequality, we have
\begin{align*}
\mathrm{Term 1}_1 &\leq \EE\sbr{\one\{x_1 \ne \xabsorb\} \nbr{\hat{T}_1(\cdot \mid x_1,a_1) - T_1(\cdot \mid x_1,a_1)}_{\tv}}\\
& \leq K \cdot \EE\sbr{\one\{x_1 \ne \xabsorb\} \unif(a_1) \err(x_1,a_1)}\\
& = K \cdot \EE\sbr{\phi_{0}^\star(x_{0},a_{0})\one\cbr{\nbr{\phi_{0}^\star(x_{0},a_{0})}_{\Sigma_{0}^{-1}}^2\leq 1}} \cdot \int \mu_{0}^\star(x_1)\unif(a_1)\err(x_1,a_1)\\
& \leq K \cdot \EE \sbr{\nbr{\phi_{0}^\star(x_{0},a_{0})}_{\Sigma_{0}^{-1}}\one\cbr{\nbr{\phi_{0}^\star(x_{0},a_{0})}_{\Sigma_{0}^{-1}}^2\leq 1}} \cdot \sqrt{\err_1(\Sigma_{0})}\\
& \leq K\sqrt{\epstv}
\end{align*}
The first inequality is Holder's inequality, while the second is an
importance weighting argument to replace $a_1 \sim \pi(x_1)$ with the
uniform distribution. Next we re-write the expectation using the low
rank dynamics, and also use the fact that $x_1 \ne \xabsorb$ implies
that the previous transition was non-absorbing, which yields the
indicator. Finally, we use the Cauchy-Schwarz inequality in the
$\Sigma_{0}$ norm, along with the implication of the indicator and the
assumed bound on $\err_1(\Sigma_{0})$. This argument applies as is to
all indices $h' > 0$ and for $h' = 0$ we simply apply Holder's
inequality and the definition of $\err_0$ to obtain the upper bound
$\sqrt{\epstv}$. In total, these terms account for the
$HK\sqrt{\epstv}$ terms on both sides of the lemma statement.

Next we turn to $\mathrm{Term 2}_{h'}$. For the first inequality in
the lemma statement, we need to upper bound $-\mathrm{Term 2}_{h'}$,
but this term is easily seen to be non-positive, since
$\hat{V}(\xabsorb) = 0$ always. So this proves the first
inequality. For the second inequality, we have (again focusing on time
$1$)
\begin{align*}
\mathrm{Term 2}_1 = \EE\sbr{ \int T_1(x_{2} \mid x_1,a_1) \one\{(x_1,a_1) \notin \Kcal_1\} \hat{V}_{2}(x_2)} \leq \PP\sbr{(x_1,a_1) \notin \Kcal_1 \mid \pi,M_\Kcal}.
\end{align*}
The same argument applies for all $h \geq 1$.
\end{proof}

In the next lemma, we consider the case where $\rho_j$ has large
escaping probability, measured with respect to the known sets
$\Kcal_h(\Sigma_{h,j})$. Recall that $\Sigma_{h,j}$ is the second
moment matrix of the true features $\phi_h^\star$ at time $h$ induced
by the previous roll-in policies $\rho_0,\ldots,\rho_{j-1}$.
\begin{lemma}
\label{lem:case_1}
Consider iteration $j$ of \ouralg and assume that
$\err_h(\Sigma_{h-1,j}) \leq \epstv$ for each $h$ with our current model
$\Mhat$. Define $R_h(x,a) \defeq \one\{(x,a) \notin
\Kcal_h(\Sigma_{h,j})\}$ for each $h$. Then,
\begin{align*}
\max_{h} \tr\rbr{\EE_{\rho_j}\sbr{\phi_{h}^\star(x_{h},a_{h})\phi_{h}^\star(x_{h},a_{h})} \Sigma_{h,j-1}^{-1}} \geq \frac{1}{H}\max_{h}\cbr{\hat{\EE}_{\rho_j}\sbr{R_h(x_h,a_h)} - HK\sqrt{\epstv}}
\end{align*}
\end{lemma}
\begin{proof}
For shorthand let $\Kcal_h \defeq \Kcal_h(\Sigma_{h,j})$ denote the
known set at round $j$, and let $\Mcal_\Kcal$ denote the corresponding
absorbing MDP.  Then, applying the second inequality
in~\pref{lem:clipped_simulation} we have
\begin{align*}
 \hat{\EE}_{\rho_j}\sbr{R_h(x_h,a_h)}
& \leq \EE\sbr{R_h(x_h,a_h) \mid \rho_j,\Mcal_\Kcal} + HK\sqrt{\epstv} + \sum_{h'=0}^{h-1} \PP\sbr{(x_{h'},a_{h'}) \notin \Kcal_{h'} \mid \rho_j,\Mcal_\Kcal}\\
& \leq HK\sqrt{\epstv} + \sum_{h'=0}^h \PP\sbr{(x_{h'},a_{h'}) \notin \Kcal_{h'} \mid \rho_j,\Mcal_\Kcal}\\
& \leq HK\sqrt{\epstv} + \sum_{h'=0}^h \PP\sbr{(x_{h'},a_{h'}) \notin \Kcal_{h'} \mid \rho_j,\Mcal}\\
& = HK\sqrt{\epstv} + \sum_{h'=0}^h \PP\sbr{\nbr{\phi_{h'}^\star(x_{h'},a_{h'})}_{\Sigma_{h',j}^{-1}} \geq 1 \mid \rho_j,\Mcal}\\
& \leq HK\sqrt{\epstv} + \sum_{h'=0}^h \EE_{\rho_j}\sbr{\tr\rbr{\phi_{h'}^\star(x_{h'},a_{h'})\phi_{h'}^\star(x_{h'},a_{h'})^\top\Sigma_{h',j}^{-1}}}
\end{align*}
The last step follows from Markov's inequality.  Since both matrices
are positive semidefinite, the trace terms are all
non-negative. Therefore, by the pigeonhole principle, there exists
some $h' \in \{0,\ldots,h\}$ for which
\begin{align*}
\EE_{\rho_j}\sbr{\tr\rbr{\phi_{h'}^\star(x_{h'},a_{h'})\phi_{h'}^\star(x_{h'},a_{h'})^\top\Sigma_{h',j}^{-1}}} \geq \frac{1}{H}\rbr{\hat{\EE}_{\rho_j}\sbr{R_h(x_h,a_h)} - HK\sqrt{\epstv}}.
\end{align*}
This argument applies for all $R_h$, and so we obtain the lemma.
\end{proof}

Next we argue that there cannot be too many iterations for which
$\max_h \hat{\EE}_{\rho_j}\sbr{R_h(x_h,a_h)}$ is large. For notation,
here we use $\Mhat^{(j)}$ to denote the MDP model in iteration $j$ and
we use $R_h^{(j)}$ to denote the reward functions in~\pref{lem:case_1}
derived from the known sets in iteration $j$.
\begin{corollary}
\label{corr:case_1}
Assume that for each round $j \in [J_{\max}]$ and for all $h$ we have
$\err_h(\Sigma_{h-1,j}) \leq \epstv$. Set
\begin{align}
J_{\max} \defeq \frac{4Hd}{\lambda K\sqrt{\epstv}}\cdot \log\rbr{1 + \frac{4H}{\lambda K\sqrt{\epstv}}}.\label{eq:t_max}
\end{align}
Then there exists some $j \in [J_{\max}]$ for which $\max_h
\EE\sbr{R^{(j)}_h(x_h,a_h) \mid \rho_j, \Mhat^{(j)}} \leq
2HK\sqrt{\epstv}$.
\end{corollary}
\begin{proof}
Suppose that in round $j$, it holds that $\max_h
\EE\sbr{R^{(j)}_h(x_h,a_h) \mid \rho_j, \Mhat^{(j)}} \geq
2HK\sqrt{\epstv}$. Then, by~\pref{lem:case_1}, there exists some time
step $h$ for which
\begin{align*}
\tr\rbr{\EE_{\rho_j}\sbr{\phi_h^\star(x_h,a_h) \phi_h^\star(x_h,a_h)^\top}\Sigma_{h,j}^{-1}} \geq K\sqrt{\epstv}.
\end{align*}
Note that we also have $\Sigma_{h,j+1} = \Sigma_{h,j} +
\EE_{\rho_j}\sbr{\phi_h^\star(x_h,a_h) \phi_h^\star(x_h,a_h)^\top}$,
so we are in a position to apply the elliptical potential
argument. Specifically if $J$ is the number of iterations for which
the above inequality holds for some $h$, then
applying~\pref{lem:log_det} for each $h$ and summing across $h$ yields
\begin{align*}
JK\sqrt{\epstv} \leq (1 + \nicefrac{1}{\lambda})dH\log(1+\nicefrac{J_{\max}}{d})
\end{align*}
Plugging in our choice of $J_{\max}$, and using the fact that $\lambda < 1$ we have
\begin{align*}
J &< \frac{2Hd}{\lambda K\sqrt{\epstv}} \log\rbr{1 + \frac{4H}{\lambda K\sqrt{\epstv}} \log\rbr{1 + \frac{4H}{\lambda K\sqrt{\epstv}}}} \\
& \leq \frac{2Hd}{\lambda K\sqrt{\epstv}} \log\rbr{1 + \rbr{\frac{4H}{\lambda K\sqrt{\epstv}}}^2} \leq J_{\max}.
\end{align*}
This means that in $J_{\max}$ iterations, we can have $\max_h
\EE\sbr{R_h^{(j)}(x_h,a_h) \mid \rho_j,\Mhat^{(j)}}\geq
2HK\sqrt{\epstv}$ in at most $J<J_{\max}$ of them. Thus we must have
one where this quantity is small, which proves the lemma.
\end{proof}

Next, we state a guarantee provided by~\pref{alg:general_planner},
which is a more convenient form of~\pref{lem:general_planner}.
\begin{lemma}
\label{lem:planner_intermediate}
Fix any iteration $j$, time $h$, function $f:\Xcal\times\Acal \to [0,1]$,
policy $\pi$, any $\alpha > 0$. Then
\begin{align*}
\EE\sbr{ f(x_h,a_h) \mid \pi,\Mhat^{(j)}} \leq \frac{T\beta}{2\alpha} + \frac{\alpha d}{2T} + \frac{\alpha KH}{2} \EE\sbr{f(x_h,a_h) \mid \rho_j,\Mhat^{(j)}},
\end{align*}
where $T \leq 4d\log(1+\nicefrac{4}{\beta})/\beta$ and $\beta > 0$ is
the parameter to~\pref{alg:general_planner}.
\end{lemma}
\begin{proof}
We suppress the dependence on $j$. Let us first focus on $\rhopre$,
which is output of~\pref{alg:general_planner} for some time step
$h$. $\rhopre$ induces a distribution over states at time step $h$,
and we argue that this distribution adequately covers all possible
roll-in distributions in the model $\Mhat = \Mhat^{(j)}$. Consider any
function $f: \Xcal\times\Acal \to [0,1]$, any policy $\pi$, any
$\Sigma \succ 0$, and $\alpha > 0$. Calling $f_\pi(x_h) = \int \pi(a_h
\mid x_h) f(x_h)$, we have
\begin{align*}
\hat{\EE}_\pi f(x_h,a_h) &= \hat{\EE}_\pi \inner{\hat{\phi}_{h-1}(x_{h-1},a_{h-1})}{\int \hat{\mu}_{h-1}(x_h)f_\pi(x_h)}\\
& \leq \hat{\EE}_\pi \nbr{\hat{\phi}_{h-1}(x_{h-1},a_{h-1})}_{\Sigma^{-1}}\cdot \nbr{\int \hat{\mu}_{h-1}(x_h)f_\pi(x_h)}_{\Sigma}\\
& \leq \frac{1}{2\alpha}\hat{\EE}_\pi \nbr{\phi_{h-1}(x_{h-1},a_{h-1})}_{\Sigma^{-1}}^2 + \frac{\alpha}{2}\nbr{\int \hat{\mu}_{h-1}(x_h)f_\pi(x_h)}_{\Sigma}^2.
\end{align*}
Here we expand $\hat{T}_{h-1}$ in terms of its low rank representation
and then apply the Cauchy-Schwarz inequality in the norm induced by
$\Sigma$. Finally we use the AM-GM inequality which holds for any
non-negative $\alpha$.

We instantiate $\Sigma$ to be the covariance matrix induced by
$\rhopre$. First, for any policy $\pi$ we define
the $h-1$ step model covariance as $\Sigma_\pi \defeq \hat{\EE}_\pi
\hat{\phi}_{h-1}(x_{h-1},a_{h-1})
\hat{\phi}_{h-1}(x_{h-1},a_{h-1})^\top$, where the dependence on $h-1$
is suppressed in the notation. Note that both the expectation and the
embedding are taken with respect to the model $\Mhat$. Then, the
output of~\pref{alg:general_planner} is a $h$-step policy $\rhopre$
that is defined as a mixture over $T$ policies
$\pi_1,\ldots,\pi_T$. Using these policies, we define $\Sigma$ as
follows:
\begin{align*}
\Sigma = \Sigma_{\rhopre} + \frac{I_{d \times d}}{T} = \frac{1}{T} \sum_{t=1}^T \Sigma_{\pi_t} + \frac{I_{d \times d}}{T}.
\end{align*}
As we run~\pref{alg:general_planner} using $\hat{T}_{0:h-1}$ we can
apply~\pref{lem:general_planner} on the $h$ step MDP
$\hat{T}_{0:h-1}$. In other words, in~\pref{lem:general_planner}, we
set $H \gets h$ and $\Mtil \gets \Mhat$. The conclusion is that $T
\leq 4d \log (1 + \nicefrac{4}{\beta})/\beta$, where $\beta$ is the
parameter to the subroutine, and we can also bound the first term
above:
\begin{align*}
\hat{\EE}_\pi \nbr{\hat{\phi}_{h-1}(x_{h-1},a_{h-1})}^2_{\Sigma^{-1}}
 = \hat{\EE}_\pi \phi_{h-1}(x_{h-1},a_{h-1})^\top \rbr{\Sigma_{\rhopre} + \frac{I_{d\times d}}{T}}^{-1} \phi_{h-1}(x_{h-1},a_{h-1}) \leq T\beta.
\end{align*}
Next, we turn to the second term. Expanding the definition of $\Sigma$, we have
\begin{align*}
& \nbr{\int \hat{\mu}_{h-1}(x_h)f_\pi(x_h)}_{\Sigma}^2 \\
& = \hat{\EE}_{\rhopre} \rbr{\inner{\hat{\phi}_{h-1}(x_{h-1},a_{h-1})}{\int \hat{\mu}_{h-1}(x_h)f_\pi(x_h)}}^2 + \frac{\nbr{\int \hat{\mu}_{h-1}(x_h)f_\pi(x_h)}_2^2}{T}\\
& = \hat{\EE}_{\rhopre} \rbr{\hat{\EE}\sbr{f_\pi(x_h) \mid x_{h-1},a_{h-1}}}^2 + \frac{\nbr{\int \hat{\mu}_{h-1}(x_h)f_\pi(x_h)}_2^2}{T}\\
& \leq \hat{\EE}_{\rhopre} f_\pi(x_h) + \frac{\nbr{\int \hat{\mu}_{h-1}(x_h)f_\pi(x_h)}_2^2}{T} \leq \hat{\EE}_{\rhopre} f_\pi(x_h) + \nicefrac{d}{T}.
\end{align*}
The first inequality is Jensen's inequality along with the fact that
$f(x_h)^2 \leq f(x_h)$ since $f: \Xcal \to [0,1]$. The second
inequality is based on our normalization assumptions on $\mu_{h-1}$,
which we also impose on $\hat{\mu}_{h-1}$. Finally, collecting all the
terms and importance weighting the last action, we obtain the bound
\begin{align*}
\hat{\EE}_\pi f(x_h) \leq \frac{T\beta}{2\alpha} + \frac{\alpha K}{2} \hat{\EE}_{\rhopre\circ\unif(\Acal)}f(x_h,a_h) + \frac{\alpha d}{2T}.
\end{align*}
This bound applies to $\rhopre$. As $\rho_j$ is a uniform mixture of
these policies and as $f$ is non-negative, we see that
$\hat{\EE}_{\rhopre\circ\unif(\Acal)}f(x_h,a_h) \leq H \cdot \hat{\EE}_{\rho_j}
f(x_h,a_h)$, which proves the lemma.
\end{proof}

Finally, we use the guarantee for~\pref{alg:general_planner}, to prove
that our model $\Mhat$ universally approximate the true MDP as soon as
$\max_h \EE\sbr{R_h^{(j)}(x_h,a_h) \mid \rho_j,\Mhat^{(j)}} \leq
2HK\sqrt{\epstv}$. For the lemma, we use the concept of a sparse
reward function. $R: \Xcal \times \Acal \to [0,1]$ is called
\emph{sparse} if all value functions are in $[0,1]$. For example, this
holds if $R$ is only associated with state-action pairs at a single
time point.
\begin{lemma}
\label{lem:case_2}
Assume that for each round $j \in [J_{\max}]$ and for all $h$, we have
$\err_h(\Sigma_{h-1,j}) \leq \epstv$, and set $J_{\max}$ as
in~\pref{eq:t_max}. Then the final MDP model $\Mhat$ satisfies the
following guarantee: For any sparse reward function $R: \Xcal \times \Acal
\to [0,1]$, any policy $\pi$, and any $\alpha > 0$ we have
\begin{align*}
\abr{V(\pi;R,\Mhat) - V(\pi;R,\Mcal)} \leq H K\sqrt{\epstv} + H \epsescape,
\end{align*}
where $\epsescape \defeq \alpha H^2K^2\sqrt{\epstv} +
\frac{T\beta}{2\alpha} + \frac{\alpha d}{2T} + HK\sqrt{\epstv}$, $T
\leq 4d\log(1+\nicefrac{4}{\beta})/\beta$ and $\beta>0$ is the
parameter to~\pref{alg:general_planner}.
\end{lemma}
\begin{proof}
Via~\pref{corr:case_1}, there must be some round $j$ for which
\begin{align*}
\max_h \PP\sbr{(x_h,a_h)\notin \Kcal_h(\Sigma_{h,j}) \mid \rho_j,\Mhat^{(j)}} = \max_h\EE\sbr{R_h^{(j)}(x_h,a_h) \mid \rho_j,\Mhat^{(j)}}\leq
2HK\sqrt{\epstv}.
\end{align*}
We will prove the guarantee for this round $j$, and at the end of the
proof argue that this also applies to the final learned model.

Combining the lower bound of the simulation lemma
(\pref{lem:clipped_simulation}) with the planning guarantee
(\pref{lem:planner_intermediate}) we see that, for any policy $\pi$
\begin{align*}
& \PP\sbr{(x_h,a_h) \notin \Kcal_h(\Sigma_{h,j}) \mid \pi,\Mcal^{(j)}_{\Kcal}} \leq \PP\sbr{(x_h,a_h)\notin\Kcal_h(\Sigma_{h,j} \mid \pi,\Mhat^{(j)}} + HK\sqrt{\epstv}\\
& \leq \frac{\alpha KH}{2} \PP\sbr{(x_h,a_h)\notin\Kcal_h(\Sigma_{h,j}) \mid \rho_j,\Mhat^{(j)}} + \frac{T\beta}{2\alpha} + \frac{\alpha d}{2T} + HK\sqrt{\epstv}\\
& \leq \alpha K^2H^2\sqrt{\epstv} + \frac{T\beta}{2\alpha} + \frac{\alpha d}{2T} + HK\sqrt{\epstv} =: \epsescape
\end{align*}
Now that we have upper bounded the escaping probability, we can turn
to the approximation guarantee. While we are not in the exact setting
of~\pref{lem:clipped_simulation}, since we have a sparse reward
function, all values are in $[0,1]$ so the same argument applies. 
For one side of the error guarantee, since we assume
that $\err_h(\Sigma_{h-1,j}) \leq \epstv$ for all iterations, we have
\begin{align*}
V(\pi;R,\Mhat^{(j)}) &\leq V(\pi;R,\Mcal_{\Kcal}^{(j)}) + HK\sqrt{\epstv} + \sum_{h=0}^{H-1}\PP\sbr{(x_h,a_h)\notin \Kcal_h(\Sigma_{h,j}) \mid \pi,\Mcal^{(j)}_{\Kcal}}\\
&\leq V(\pi;R,\Mcal) + HK\sqrt{\epstv} + \sum_{h=0}^{H-1}\PP\sbr{(x_h,a_h)\notin \Kcal_h(\Sigma_{h,j}) \mid \pi,\Mcal^{(j)}_{\Kcal}}\\
& \leq V(\pi;R,\Mcal) + HK\sqrt{\epstv} + H\epsescape.
\end{align*}
Here the first inequality is~\pref{lem:clipped_simulation}, while the second is due to~\pref{eq:rmax_lemma}.
For the other direction, we first use~\pref{eq:rmax_lemma} and then~\pref{lem:clipped_simulation}:
\begin{align*}
V(\pi;R,\Mcal) &\leq V(\pi;R,\Mcal_{\Kcal}^{(j)}) + \sum_{h=0}^{H-1}\PP\sbr{(x_h,a_h)\notin \Kcal_h(\Sigma_{h,j}) \mid \pi,\Mcal^{(j)}_{\Kcal}}\\
& \leq V(\pi;R,\Mhat^{(j)}) + H K\sqrt{\epstv} + H \epsescape.
\end{align*}
This proves the result for the MDP model $\Mhat^{(j^\star)}$ at the
time $j^\star$ where the exploratory policy $\rho_{j^\star}$ fails to
achieve large reward on $R_h^{(j^\star)}$. We now claim this applies
for all iterations after $j^\star$, and in particular it holds at the
end of the algorithm. To see why, observe that $\Sigma_{h,j+1} \succeq
\Sigma_{h,j}$ for all $j$, and so $\Kcal_h(\Sigma_{h,j}) \subset
\Kcal_h(\Sigma_{h,j+1})$ for all rounds. Since the known set increases
with $j$, the escaping probability is decreasing, so it is upper
bounded by $\epsescape$ for all rounds after $j\geq
j^\star$. Additionally, we assume that $\err_h(\Sigma_{h-1,j}) \leq
\epstv$ for all $j \in [J_{\max}]$, so the total variation term
in~\pref{lem:clipped_simulation} remains bounded. Working through the
proof of~\pref{lem:clipped_simulation}, we can see that the bound
continues to hold for all $j^\star \leq j \leq J_{\max}$, which
proves the result.
\end{proof}

\paragraph{Final steps.}
Let us collect all of the conditions and bounds here. At the end of the algorithm, we have
\begin{align}
\max_{\pi,R} \abr{V(\pi;R,\Mhat) - V(\pi;R,\Mcal)} \leq H K\sqrt{\epstv} + H \epsescape, \label{eq:value_approx}
\end{align}
where we may set
\begin{align*}
\epsescape \defeq \min_{\alpha > 0} \cbr{\alpha H^2K^2\sqrt{\epstv} + \frac{T\beta}{2\alpha} + \frac{\alpha d}{2T} + HK\sqrt{\epstv}}, \qquad
T \leq \frac{4d\log(1+\nicefrac{4}{\beta})}{\beta},
\end{align*}
and $\beta > 0$ is the parameter to~\pref{alg:general_planner}. Applying~\pref{corr:mle} and taking
a union bound over all iterations $j \in [J_{\max}]$ and all times $h$, we can set
\begin{align*}
\epstv \defeq \lambda d + \frac{2 \log(J_{\max}H|\Phi||\Upsilon|/\delta)}{n},
\end{align*}
where $\lambda > 0$ is a parameter in the analysis. Finally, the total number of samples collected is
\begin{align*}
nHJ_{\max}, \qquad \textrm{where}, \qquad J_{\max} \defeq \frac{4Hd}{\lambda K\sqrt{\epstv}}\cdot \log\rbr{1 + \frac{4H}{\lambda K\sqrt{\epstv}}}
\end{align*}

We start by optimizing for $\alpha$ in the definition of $\epsescape$,
which yields $\alpha = \sqrt{\frac{T\beta}{2(H^2K^2\sqrt{\epstv} +
    d/(2T))}}$. Plugging into $\epsescape$ and using the bound on $T$,
we get
\begin{align*}
\epsescape & \leq \sqrt{2T\beta} \cdot \rbr{ HK\epstv^{1/4} + \sqrt{d/(2T)}} + HK\sqrt{\epstv}\\
& \leq 2 \sqrt{8 d \log(1+\nicefrac{4}{\beta})}HK\epstv^{1/4} + \sqrt{d\beta}.
\end{align*}
Here we are using the fact that $\epstv \leq 1$, which is without loss
of generality, since~\pref{eq:value_approx} is trivial when $\epstv
\geq 1$.  Now we set $\beta = H^2K^2\sqrt{\epstv}$ so that
\begin{align*}
\epsescape \leq 16 \sqrt{d \log (1+\nicefrac{4}{\epstv})} HK\epstv^{1/4}.
\end{align*}
Thus, we may restate the final accuracy guarantee as
\begin{align*}
\max_{\pi,R} \abr{V(\pi;R,\Mhat) - V(\pi;R,\Mcal)} &\leq H K\sqrt{\epstv} + 16 \sqrt{d\log(1+\nicefrac{4}{\epstv})}H^2 K\epstv^{1/4}\\
& \leq 17 \sqrt{d\log(1+\nicefrac{4}{\epstv})}H^2 K\epstv^{1/4}.
\end{align*}
We want this to be upper bounded by $\veps$, the final accuracy parameter, which means we can take
\begin{align*}
\epstv = c \frac{\veps^{4}H^{-8}K^{-4} d^{-2}}{\log^2(1 + \nicefrac{1}{\veps})},
\end{align*}
where $c>0$ is a universal constant. Looking at the definition of
$\epstv$ and $T_{\max}$, we set
\begin{align*}
\lambda = c \frac{\veps^4 H^{-8}K^{-4}d^{-3}}{\log^2(1+\nicefrac{1}{\veps})}, \quad T_{\max} = \tilde{O}\rbr{ \frac{H^{13}d^5 K^5}{\veps^6}}, \quad n = \tilde{O}\rbr{ \frac{H^{8}K^4d^2 \log(|\Phi||\Upsilon|/\delta)}{\veps^4}},
\end{align*}
where we are ignoring logarithmic factors. This gives the final sample complexity of
\begin{align*}
\tilde{O}\rbr{ \frac{H^{22} K^{9} d^{7} \log (|\Phi||\Upsilon|/\delta)}{\veps^{10}}}.
\end{align*}
Finally, note that~\pref{eq:sys_id} is implied by the final accuracy
guarantee, since we may choose $R$ to be the total variation distance
between our model and the true transition dynamics at time $h$, which
is clearly a sparse reward function.

\paragraph{Analysis with the sampling oracle.}
With the sampling oracle, the argument is very similar. The main
difference is in~\pref{lem:planner_intermediate}, which is the only
place where we use the exact planner. Instead, we modify the proof
of~\pref{lem:planner_intermediate} to instead
use~\pref{lem:gen_sampling_planner} to obtain $\hat{\Sigma}$ and
$\rhopre$, and we do the Cauchy-Schwarz step using $\hat{\Sigma}$.
By~\pref{lem:gen_sampling_planner} the first term is still $O(T\beta)$
and for the second term we pay an additive $O(\beta)$ to translate
from $\hat{\Sigma}$ to $\Sigma$ (since the spectral norm error is
$O(\nicefrac{\beta}{d})$ and Euclidean norm of the term involving
$\hat{\mu}_{h-1}$ is at most $d$). This we have an additional
$O(\alpha\beta)$ term in the sample-based analog
of~\pref{lem:planner_intermediate}. However, as we use the bound $T
\leq O(d\log(1+\nicefrac{1}{\beta})/\beta)$ in the remaining
calculations, the new $O(\alpha \beta)$ term is only larger than the
$O(\alpha d/T)$ term by a logarithmic factor. In particular, above we
have a $\sqrt{d\beta}$ term in the bound for $\epsescape$, but with
the sampling oracle, we will additionally have a $O(\sqrt{T}\beta) =
O(\sqrt{d\beta\log(1+\nicefrac{1}{\beta})})$ term in this bound.
Ultimately, this only affects the final sample complexity bound in
logarithmic factors.  We adjust the failure probability accordingly,
using $\delta/2$ probability for invocations
of~\pref{lem:gen_sampling_planner}, and $\delta/2$ for the invocations
of~\pref{corr:mle}. As we invoke the planner polynomially many times,
the total number of calls to the sampling oracle is polynomial in all
parameters.

\subsection{Refined analysis for simplex representations.}
Here we prove~\pref{thm:improved} by considering a different potential
function argument and a different instantiation of the planning
algorithm that directly attempts to visit each latent state. In
particular, we instantiate~\pref{alg:main} with the planning routine
presented in~\pref{alg:simplex_planner}. Note that this planner does
not require the parameter $\beta$, but it does assume that
$\hat{\phi}(x,a) \in \Delta([\dlv])$ for each $(x,a)$.

The analog of $\Sigma_{h,j}$ is the cumulative probability of hitting
latent variables $z_{h} \in \Zcal_{h}$. Formally, we define
\begin{align*}
p_{h,j}(z) \defeq \sum_{i=0}^{j-1} \PP\sbr{z_{h} = z \mid \rho_i,\Mcal}.
\end{align*}
We make two remarks. First $p_{h,j}$ is not a distribution, rather it
is the sum of $j$ probability distributions. Second, we use $p_{h,j}$
to measure the coverage at time $h$, since $z_{h}$ is the latent
variable that generates $x_{h}$. This indexing is different from how
we use $\Sigma_{h,j}$ to measure coverage at time $h+1$ in the general
case.

We now state the analog of~\pref{corr:mle}.
\begin{corollary}
\label{corr:simplex_mle}
For $j \geq 1, h \in \{0,\ldots,H-1\}, \delta \in (0,1)$ and let
$\rho_0,\ldots,\rho_{j-1}$ be any (possibly data-dependent) policies,
with $p_{h,j}$ defined accordingly. Let $D_h$ be a dataset of $nj$
examples, where for each $0\leq i < j$, we collect $n$ triples
$(x_h,a_h,x_{h+1})$ by rolling in with $\rho_i$ to $x_h$ and taking
$a_h$ uniformly at random. Then, with probability at least $1-\delta$
the output $(\hat{\phi}_h,\hat{\mu}_h)$ of $\mle(D_h)$ satisfies
\begin{align*}
\sum_{z \in \Zcal_h} p_{h,j}(z) \EE\sbr{\nbr{\inner{\hat{\phi}(x_h,a_h)}{\hat{\mu}_h(\cdot)} - T_h(\cdot \mid x_h,a_h)}^2_{\tv} \mid z_h = z, a_h \sim \unif(\Acal)} \leq \frac{2 \log(|\Phi||\Upsilon|/\delta)}{n}.
\end{align*}
\end{corollary}
\begin{proof}
This is an immediate consequence of~\pref{thm:mle}, using the
definition of $p_{h,j}$. 
\end{proof}

We denote the LHS of the above lemma as $\err_h(p_{h,j})$.  Now we
define the known set $\Kcal_h$ and the absorbing MDP. In the simplex
features setting, the known set $\Kcal_h$ is instead defined in terms
of latent variables. Recall that we can augment every trajectory
$\tau$ with the latent variables generated along the trajectory that
is $\tau=(z_0,x_0,a_0,
z_1,x_1,a_1,\ldots,z_{H-1},x_{H-1},a_{H-1})$. We therefore define
$\Kcal_{h,j} \defeq \{z \in \Zcal_h: p_{h,j}(z) \geq \Delta\}$ where
$\Delta$ is some parameter we will set towards the end of the
proof. The absorbing MDP $\Mcal_{\Kcal}$ in iteration $j$ is defined
to have transition operator that, for each $h$, transitions from
$z_{h}$ to $\xabsorb$ if $z_h \notin \Kcal_{h,j}$ and otherwise
transitions as in $\Mcal$. As in the more general analysis, $\xabsorb$
is an absorbing state with a single self-looping action $\aabsorb$ and
we always consider $(\xabsorb,\aabsorb)$ to be known.

We now state the analog of~\pref{lem:clipped_simulation}.
\begin{lemma}
\label{lem:simplex_clipped_simulation}
Let $\hat{\phi}_{0:H-1},\hat{\mu}_{0:H-1}$ be an MDP model with
simplex features and let $p_{0:H-1}$ be non-negative vectors. Assume
that $\err_h(p_{h-1}) \leq \epstv$ for each $h$. Let $f:\Xcal
\times \Acal \to [0,1]$ be any function such that
$f(\xabsorb,\aabsorb) = 0$ and let $\pi$ be any policy. Then, for any $h$
\begin{align*}
\EE_\pi\sbr{f(x_h,a_h) \mid \Mcal_{\Kcal}} - HK\sqrt{\nicefrac{\epstv}{\Delta}} \leq \hat{\EE}_\pi\sbr{f(x_h,a_h)} &\leq \EE_\pi\sbr{f(x_h,a_h)\mid \Mcal_\Kcal} + HK\sqrt{\nicefrac{\epstv}{\Delta}} \\
& ~~~~~~~~~~~~~~ + \sum_{h'=1}^{h}\PP\sbr{z_{h'}\notin \Kcal_{h'} \mid \pi,\Mcal_\Kcal}
\end{align*}
\end{lemma}
\begin{proof}
As in the proof of~\pref{lem:clipped_simulation}, we must control two terms for each $h' < h$:
\begin{align*}
\mathrm{Term 1}_{h'} \defeq \EE\sbr{ \int (\hat{T}_{h'}(x_{h'+1} \mid x_{h'},a_{h'}) - T_{h'}(x_{h'+1} \mid x_{h'},a_{h'})) \hat{V}_{h'+1}(x_{h'+1}) \mid \pi,\Mcal_{\Kcal}}\\
\mathrm{Term 2}_{h'} \defeq \EE\sbr{ \int (T_{h'}(x_{h'+1} \mid x_{h'},a_{h'}) - T_{h',\Kcal}(x_{h'+1} \mid x_{h'},a_{h'})) \hat{V}_{h'+1}(x_{h'+1}) \mid \pi,\Mcal_{\Kcal}}.
\end{align*}
For $\mathrm{Term 1}_{h'}$, as we are in $\Mcal_{\Kcal}$ we can ignore
the trajectories where $z_{h'} \notin \Kcal_{h'}$. Thus considering
$h'=1$
\begin{align*}
\mathrm{Term 1}_{1} & =\sum_{z \in \Kcal_{1}} \PP_\pi[z_{1}=z \mid \Mcal_\Kcal]\cdot\EE_\pi\sbr{ \int (\hat{T}_{1}(x_{2} \mid x_{1},a_{1}) - T_{1}(x_{2} \mid x_{1},a_{1})) \hat{V}_{2}(x_{2}) \mid z_{1}=z}\\
& \leq K \sum_{z \in \Kcal_{1}} \PP_\pi[z_{1}=z \mid \Mcal_\Kcal]\cdot \EE\sbr{ \nbr{ \hat{T}_{1}(\cdot \mid x_{1},a_{1}) - T_{1}(\cdot \mid x_{1},a_{1})}_{\tv}  \mid z_{1}=z,a_{1}\sim\unif(\Acal)}\\
& \leq K \sqrt{\sum_{z \in \Kcal_{1}} \PP_\pi[z_{1}=z \mid \Mcal_\Kcal]\cdot\EE\sbr{ \nbr{ \hat{T}_{1}(\cdot \mid x_{1},a_{1}) - T_{1}(\cdot \mid x_{1},a_{1})}^2_{\tv}  \mid z_{1}=z,a_{1}\sim\unif(\Acal)}}\\
& \leq K \sqrt{\nicefrac{\err_{1}(p_{1})}{\Delta}} \leq K\sqrt{\nicefrac{\epstv}{\Delta}}. 
\end{align*}
Here we are using that the total variation term is non-negative, and
that $\PP_\pi[z_{1}=z \mid \Mcal_\Kcal] \leq 1$, while $p_{1}(z) \geq
\Delta$ by the fact that $z \in \Kcal_{1}$.  This argument applies for
all $h'$ and yields the $HK\sqrt{\nicefrac{\epstv}{\Delta}}$ term on
both sides of the statement. For $\mathrm{Term 2}_{h'}$, we clearly have
\begin{align*}
\mathrm{Term 2}_{h'} \leq \PP\sbr{z_{h'+1}\notin\Kcal_{h'+1} \mid \pi,\Mcal_{\Kcal}}.
\end{align*}
As in the proof of~\pref{lem:clipped_simulation}, $\mathrm{Term 2}_{h'} \geq 0$ which yields the lower bound. 
\end{proof}

Next we argue that if the exploratory policy $\rho_j$ that we find has
large escaping probability then we will add some latent variable to
the known set in the next iteration.
\begin{lemma}
\label{lem:simplex_case_1}
Consider iteration $j$ of \ouralg and assume that $\err_h(p_{h,j})
\leq \epstv$ for each $h$. Define $R_h(x,a) \defeq \sum_{z \notin
  \Kcal_{h,j}} \phi^\star_h(x,a)[z]$. Then
\begin{align*}
\max_h \PP\sbr{z_{h} \notin \Kcal_{h,j} \mid \rho_j} \geq \frac{1}{H} \max_h \cbr{\hat{\EE}_{\rho_j}[R_h(x_h,a_h)] - HK\sqrt{\nicefrac{\epstv}{\Delta}}}.
\end{align*}
In particular, if there exists some $h$ such that
$\hat{\EE}_{\rho_j}[R_h(x_h,a_h)] \geq HK\sqrt{\nicefrac{\epstv}{\Delta}} + H\dlv
\Delta$, then there exists some $h', z \notin \Kcal_{h'}$ such that
$\PP\sbr{z_{h'} = z \mid \rho_j} \geq \Delta$.
\end{lemma}
\begin{proof}
Observe that by the definition of $R_h$, we have
\begin{align*}
\hat{\EE}_{\rho_j}\sbr{R_h(x_h,a_h)} &\leq \EE_{\rho_j}[R_h(x_h,a_h) \mid \Mcal_{\Kcal}] + HK\sqrt{\nicefrac{\epstv}{\Delta}} + \sum_{h'=1}^h \PP\sbr{ z_{h'} \notin \Kcal_{h',j}\mid \rho_j,\Mcal_{\Kcal}}\\
& = HK\sqrt{\nicefrac{\epstv}{\Delta}} + \sum_{h'=1}^{h+1} \PP\sbr{z_{h'} \notin \Kcal_{h',j} \mid \rho_j,\Mcal_{\Kcal}}.
\end{align*}
Both statements now follow from the pigeonhole principle.
\end{proof}

Next we prove the analog of~\pref{lem:case_2}. For this, we compute
$\rhopre$ using the planning routine in~\pref{alg:simplex_planner},
with the planning guarantee in~\pref{lem:simplex_planner}.
\begin{lemma}
\label{lem:simplex_case_2}
Assume that for each round $j \in [J_{\max}]$ and for all $h$, we have
$\err_h(p_{h,j}) \leq \epstv$ and set $J_{\max} = H\dlv+1$. Then the
final MDP model $\Mhat$ satisfies the following guarantee: For any
sparse reward function $R:\Xcal \times \Acal \to [0,1]$ and any policy
$\pi$, we have
\begin{align*}
\abr{V(\pi;R,\Mhat) - V(\pi;R,\Mcal)} \leq H K\sqrt{\nicefrac{\epstv}{\Delta}} + H \epsescape,
\end{align*}
where $\epsescape \defeq H^2K\dlv^2\Delta + \rbr{H^2K^2\dlv + HK\dlv}\sqrt{\nicefrac{\epstv}{\Delta}}$.
\end{lemma}
\begin{proof}
First observe that by~\pref{lem:simplex_case_1}, in every iteration
where $\rho_j$ satisfies $\max_h \hat{\EE}_{\rho_j}\sbr{R_h(x_h,a_h)}
\geq HK\sqrt{\nicefrac{\epstv}{\Delta}} + H\dlv\Delta$, we add some
latent variable at some time point to the known set. This means that
this can only happen for at most $H\dlv$ iterations, and so, by the
setting of $J_{\max}$, there must be some iteration $j$ in which
$\max_h\hat{\EE}_{\rho_j}\sbr{R_h(x_h,a_h)} \leq
HK\sqrt{\nicefrac{\epstv}{\Delta}} + H\dlv\Delta$. In this iteration
$j$, we have
\begin{align*}
& \PP\sbr{z_{h+1} \notin \Kcal_{h+1,j} \mid \pi,\Mcal_{\Kcal}^{(j)}} \leq \PP\sbr{z_{h+1} \notin \Kcal_{h+1,j} \mid \pi, \Mhat^{(j)}} + HK\sqrt{\nicefrac{\epstv}{\Delta}}\\
& = \sum_{z \in \hat{\Zcal}_h} \hat{\EE}_\pi\sbr{\hat{\phi}_{h-1}(x_{h-1},a_{h-1})[z]} \cdot \PP\sbr{\bar{\Kcal}_{h+1,j} \mid \pi,\Mhat^{(j)}, \hat{z}_h = i} + HK\sqrt{\nicefrac{\epstv}{\Delta}}\\
& \leq K \sum_{z \in \hat{\Zcal}_h} \hat{\EE}_\pi\sbr{\hat{\phi}_{h-1}(x_{h-1},a_{h-1})[z]} \cdot \PP\sbr{\bar{\Kcal}_{h+1,j} \mid \Mhat^{(j)}, \hat{z}_h = z,a_h \sim \unif(\Acal)} + HK\sqrt{\nicefrac{\epstv}{\Delta}}\\
& \leq K\dlv \sum_{z \in \hat{\Zcal}_h} \hat{\EE}_{\rhopre}\sbr{\hat{\phi}_{h-1}(x_{h-1},a_{h-1})[z]} \cdot \PP\sbr{\bar{\Kcal}_{h+1,j} \mid \Mhat^{(j)}, \hat{z}_h = z,a_h \sim \unif(\Acal)} + HK\sqrt{\nicefrac{\epstv}{\Delta}}\\
& \leq HK\dlv \hat{\PP}_{\rho_j}\sbr{\bar{\Kcal}_{h+1,j}} + HK\sqrt{\nicefrac{\epstv}{\Delta}} \leq HK\dlv \hat{\EE}_{\rho_j}\sbr{R_h(x_h,a_h)} + HK\sqrt{\nicefrac{\epstv}{\Delta}}\\
& \leq HK\dlv \rbr{HK\sqrt{\nicefrac{\epstv}{\Delta}} + H\dlv\Delta} + HK\sqrt{\nicefrac{\epstv}{\Delta}} =: \epsescape
\end{align*}
Here the first inequality is~\pref{lem:simplex_clipped_simulation},
while the first equality re-writes the expectation in terms of the latent
variable $z_h$. In the second inequality we translate to taking $a_h$
uniformly via importance weighting, while in the third, we
apply~\pref{lem:simplex_planner}, which lets us translate to
$\rhopre$. Finally, we use that $\rho_j$ uses $\rhopre$ with
probability $\nicefrac{1}{H}$ and the definition of $R_h$.  The result
now follows from~\pref{lem:simplex_clipped_simulation}, along with the
analog of~\pref{eq:rmax_lemma}. As in the general case, this bound
applies for all iterations after the first one where the
escaping probability for $\rho_j$ is small.
\end{proof}

\paragraph{Final steps.}
The final steps with simplex features are much more straightforward
than in the general case. First we choose $\Delta$ to balance the two
terms in $\epsescape$. We set $\Delta =
(\nicefrac{K}{\dlv})^{2/3}\epstv^{1/3}$ which yields
\begin{align*}
\epsescape \leq 3H^2K\dlv\rbr{K^{2/3}(\dlv\epstv)^{1/3}},
\end{align*}
where we are also using the fact that $\epstv \leq
1$. Via~\pref{lem:simplex_case_2}, after $J_{\max} = H\dlv + 1$
iterations, we are guaranteed that
\begin{align*}
\max_{\pi,R}\abr{V(\pi;R,\Mhat) - V(\pi;R,\Mcal)} \leq H K\sqrt{\nicefrac{\epstv}{\Delta}} + H \epsescape \leq \order\rbr{H^3K^{5/3}\dlv^{4/3}\epstv^{1/3} }.
\end{align*}
For this to be at most $\veps$ we should set $\epstv \leq
\order\rbr{\veps^3 H^{-9} K^{-5}
  \dlv^{-4}}$. Applying~\pref{corr:simplex_mle} and taking a union
over all $J_{\max}$ rounds, we want to set
\begin{align*}
n = \frac{2\log(J_{\max} |\Phi||\Upsilon|/\delta)}{\epstv} = \tilde{\order}\rbr{\frac{H^{9}K^5 \dlv^4\log(|\Phi||\Upsilon|/\delta)}{\veps^3} }.
\end{align*}
The total sample complexity is $nHJ_{\max} =
\tilde{\order}\rbr{\frac{H^{11}K^5\dlv^5\log(|\Phi||\Upsilon|/\delta)}{\veps^3}}$.
As in the general setting, the value function guarantee
implies~\pref{eq:sys_id}, which yields the result. 

\paragraph{Analysis with a sampling oracle.}
With a sampling oracle, the only difference is in the proof
of~\pref{lem:simplex_case_2}. Here, we can only apply the second
statement of~\pref{lem:simplex_planner}, which yields an additive
$O(K\dlv\epsopt)$ term. By taking $\epsopt =
(\nicefrac{H}{\dlv})\sqrt{\nicefrac{\epstv}{\Delta}}$, this additional
term can be absorbed into the other additive term at the expense of a
constant. Thus we obtain the same guarantee, up to constants, with
polynomially many calls to the sampling oracle.

\section{Planning Algorithms}
\label{app:planners}
In this section, we present exploratory planning algorithms for low
rank models, assuming that the dynamics are known. Formally, we
consider an $H$ step low rank MDP $\Mtil$ with deterministic start
state $x_0$, fixed action $a_0$, and transition matrices
$T_0,\ldots,T_{H-1}$. Each transition operator $T_h$ factorizes as
$T_h(x_{h+1} \mid x_h,a_h) = \inner{\phih}{\muh}$ and we assume
$\phi_{0:H-1},\mu_{0:H-1}$ are \emph{known}. To compartmentalize the
results, we focus on exploratory planning at time $H$, but we will
invoke these subroutines with MDP models that have horizon $h \leq
H$. This simply requires rebinding variables.

We present two types of results. One style assumes that all
expectations are computed exactly. As we are focusing purely on
planning with known dynamics and rewards, this imposes a computational
burden, but not a statistical one, while leading to a more transparent
proof. To address the computational burden, we also consider
algorithms that approximate all expectations with samples. For this,
we assume that we can obtain sample transitions from the MDP model
$\Mtil$ in a computationally efficient manner. Formally, the
\emph{sampling oracle} allows us to sample $x' \sim T_h(\cdot \mid
x,a)$ for any $x,a$.

\subsection{Planning with a sampling oracle}
For the computational style of result, it will be helpful to first
show how to optimize a given reward function whenever the model admits
a sampling oracle. As notation, we always consider an explicitly
specified non-stationary reward function $R:\Xcal \times
\Acal\times\{0,\ldots,H-1\} \to [0,1]$. Then, we define
\begin{align*}
V(\pi,R) = \EE\sbr{\sum_{h=0}^{H-1}R(x_h,a_h,h) \mid \pi,\Mtil}.
\end{align*}

The next lemma is a simple application of the result
of~\cite{jin2019provably}.
\begin{lemma}
\label{lem:sampling_planner}
Suppose that the reward function
$R:\Xcal\times\Acal\times\{0,\ldots,H-1\} \to [0,1]$ is explicitly
given and that $T_{0:H-1}$ is a known low rank MDP that enables
efficient sampling. Then for any $\epsilon > 0$ there is an algorithm
for finding a policy $\hat{\pi}$ such that with probability at least
$1-\delta$, $V(\hat{\pi},R) \geq \max_\pi V(\pi,R) - \epsilon$ in
polynomial time with $\poly(d,H,\nicefrac{1}{\epsilon},\log(\nicefrac{1}{\delta}))$ calls to the
sampling routine.
\end{lemma}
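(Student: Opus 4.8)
The plan is to reduce to the \ucbvi algorithm and analysis of~\citet{jin2019provably}, treating the \emph{model} $\Mtil$ as the environment and using \samp to simulate transitions. The key observation is that $\Mtil$ is a \emph{known} low rank MDP, so the linear realizability condition underlying \ucbvi holds: for any policy $\pi$ and any $h$, writing $V^\pi_{h+1}$ for the value of $\pi$ from step $h+1$ in $\Mtil$, the Bellman backup is
\begin{align*}
(\Tcal_h V^\pi_{h+1})(x,a) = R(x,a,h) + \inner{\phih}{\int \muh\, V^\pi_{h+1}(x_{h+1})\, d(x_{h+1})},
\end{align*}
an affine function of the \emph{known} feature $\phih$. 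Since $R\in[0,1]$ and value functions lie in $[0,H]$, the normalization in~\pref{def:lowrank} bounds the linear parameter by $H\sqrt{d}$ in Euclidean norm, matching the boundedness assumptions of~\citet{jin2019provably}; alternatively the reward can be folded into the feature map as in the footnote to~\pref{lem:representation}.

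Next I would run \ucbvi for $N$ episodes against $\Mtil$: whenever the algorithm reaches $x_h$ and plays $a_h$, call \samp once to draw $x_{h+1}\sim T_h(\cdot\mid x_h,a_h)$. Because \samp returns exact, independent samples, the simulated trajectories are distributed exactly as episodes of $\Mtil$, so the regret guarantee applies verbatim: with probability at least $1-\delta$, the policies $\pi^{(1)},\dots,\pi^{(N)}$ played satisfy
\begin{align*}
\sum_{n=1}^N \rbr{\max_\pi V(\pi,R) - V(\pi^{(n)},R)} \leq \poly(d,H)\cdot\sqrt{N\log(1/\delta)}.
\end{align*}
Each episode uses $H$ calls to \samp and $\poly(d,H)$ additional computation (ridge regression in $\RR^d$ and evaluation of the elliptical bonus), for a total of $NH$ oracle calls and $\poly(d,H,N)$ runtime.

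Finally I would convert regret to a PAC guarantee in the standard way: dividing by $N$, the average suboptimality of the played policies is $\poly(d,H)\sqrt{\log(1/\delta)/N}$, so choosing $N = \poly(d,H)\log(1/\delta)/\epsilon^2$ makes it at most $\epsilon/2$. Outputting a uniformly random $\hat\pi$ among $\{\pi^{(n)}\}$ gives the bound in expectation; to obtain the stated deterministic guarantee one re-estimates each $V(\pi^{(n)},R)$ to accuracy $\epsilon/4$ by $\poly(H,1/\epsilon,\log(N/\delta))$ Monte Carlo rollouts through \samp and returns the best, so that $V(\hat\pi,R)\ge \max_\pi V(\pi,R)-\epsilon$ with probability at least $1-\delta$ after a union bound. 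The total number of \samp calls is $\poly(d,H,1/\epsilon,\log(1/\delta))$, as claimed.

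I expect no genuine obstacle here, which is consistent with the claim that this is ``a simple application.'' The only points that need care are (i) verifying that simulating with \samp is equivalent to interacting with $\Mtil$ — immediate since the oracle is exact and produces independent draws — and (ii) the routine regret-to-PAC reduction together with a union bound over the (at most two) failure events; both are entirely standard.
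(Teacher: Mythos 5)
Your proposal is correct and follows essentially the same route as the paper: run \ucbvi of \citet{jin2019provably} inside the model using \samp to simulate episodes, invoke its regret bound, and then identify a near-optimal policy among the played ones via Monte Carlo policy evaluation with a union bound. The additional remarks on linear realizability of Bellman backups and the exactness of \samp are fine but not needed beyond what the paper's two-step (regret plus evaluation) argument already uses.
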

\begin{proof}
As we have sampling access to the MDP, we can execute the \ucbvi
algorithm of~\citet{jin2019provably}. For any $n$, if we execute the algorithm
for $n$ episodes, it produces $n$ policies $\pi_1,\ldots,\pi_n$ and
guarantees
\begin{align*}
\max_\pi V(\pi,R) - \frac{1}{n}\sum_{i=1}^nV(\pi_i,R) \leq c\sqrt{\frac{d^3 H^3\log(ndH/\delta)}{n}}
\end{align*}
with probability at least $1-\delta$ where $c>0$ is a universal
constant.
We are assured that one of the policies $\pi_1,\ldots,\pi_n$ is at
most $\nicefrac{\epsilon}{2}$-suboptimal by taking $n = \order\rbr{
  d^3H^3\log(dH/(\epsilon\delta))/\epsilon^{2}}$.

We find this policy via a simple policy evaluation step. For each
policy $\pi_i$, we collect $O(H^2 \log(n/\delta)/\epsilon^2)$
roll-outs using the generative model, where we take actions according
to $\pi_i$. Via a union bound, this guarantees that for each $i$ we
have $\hat{V}_i$ such that with probability at least $1-\delta$
\begin{align*}
\max_i \abr{\hat{V}_i - V(\pi_i,R)} \leq \nicefrac{\epsilon}{4}.
\end{align*}
Therefore, if we take $\hat{i} = \argmax_{i \in [n]} \hat{V}_i$ we are
assured that $V(\pi_{\hat{i}},R) \geq \max_\pi V(\pi,R) - \epsilon$ with
probability at least $1-2\delta$.
The total number of samples required from the model are
\begin{align*}
nH \rbr{1 + \frac{H^2\log(n/\delta)}{\epsilon^2}} = \otil\rbr{\frac{d^3 H^6 \log(1/\delta)}{\epsilon^4}}.\tag*\qedhere
\end{align*}
\end{proof}

\subsection{Planning with simplex features}
We first consider a simpler planning algorithm that is adapted to the
simplex features representation. The pseudocode is displayed
in~\pref{alg:simplex_planner}. The planner computes a mixture policy
$\rho$, where component $\pi_i$ of the mixture focuses on activating
coordinate $i$ of the feature map $\phiH$. Each mixture component can
be computed in a straightforward manner using a dynamic programming
approach, such as LSVI. The basic guarantee for this algorithm is the
following lemma.

\begin{algorithm}[t]
\begin{algorithmic}
\STATE \textbf{Input:} MDP $\Mtil =
(\phi_{0:{H-1}}, \mu_{0:H-1})$ with $\phih \in \Delta([\dlv])$,
$\mu_{h}[z] \in \Delta(\Xcal)$.
\FOR{$z = 1,\ldots,\dlv$}
\STATE Compute
$\pi_z = \argmax_{\pi} \EE[ \phi_{H-1}(x_{H-1},a_{H-1})[z] \mid \Mtil,
  \pi]$
\ENDFOR
\STATE Output policy mixture $\rho \defeq \unif(\{\pi_z\}_{z=1}^{\dlv})$
\end{algorithmic}
\caption{Exploratory planner for simplex representations}
\label{alg:simplex_planner}
\end{algorithm}

\begin{lemma}[Guarantee for~\pref{alg:simplex_planner}]
\label{lem:simplex_planner}
If $\Mtil$ is an $H$-step low rank MDP with simplex features of
dimension $\dlv$, then the output of~\pref{alg:simplex_planner}, $\rho$, satisfies
\begin{align*}
\forall \pi, z \in [\dlv]: \EE\sbr{ \phiH[z] \mid \Mtil,\pi} \leq \dlv \EE\sbr{\phiH[z] \mid \Mtil,\rho}.
\end{align*}
Given a sampling oracle for $\Mtil$, the algorithm runs
in polynomial time with
$\poly(\dlv,H,\nicefrac{1}{\epsopt},\log(1/\delta))$ calls to \samp, and
with probability at least $1-\delta$, $\rho$ satisfies
\begin{align*}
\forall \pi, z \in [\dlv], \EE\sbr{ \phiH[z] \mid \Mtil,\pi} \leq \dlv \EE\sbr{\phiH[z] \mid \Mtil,\rho} + \epsopt.
\end{align*}
\end{lemma}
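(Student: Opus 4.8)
The plan is to read off both claims directly from the mixture structure of $\rho$ together with the fact that every coordinate of a simplex feature vector is non-negative. Write $f_i(\pi) \defeq \EE[\phiH[i] \mid \Mtil,\pi]$; since $\phiH \in \Delta([\dlv])$ we have $\phiH[i] \in [0,1]$, hence $f_i(\pi) \ge 0$ for every policy $\pi$. For the exact guarantee, I would first note that $\pi_i \defeq \argmax_\pi f_i(\pi)$ is well defined: maximizing $f_i$ is a finite-horizon planning problem in $\Mtil$ with the \emph{known} terminal ``reward'' $\phi_{H-1}(\cdot,\cdot)[i] \in [0,1]$ and no other rewards, so by the Bellman optimality equations the optimum over all policies is attained by a deterministic Markov policy, computable exactly by dynamic programming since the dynamics are known and expectations are exact. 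Because $\rho$ is the uniform mixture over $\pi_1,\dots,\pi_\dlv$, linearity of expectation together with $f_i(\pi_j)\ge 0$ gives
\begin{align*}
f_i(\rho) = \frac{1}{\dlv}\sum_{j=1}^{\dlv} f_i(\pi_j) \ge \frac{1}{\dlv}\, f_i(\pi_i) = \frac{1}{\dlv}\max_{\pi'} f_i(\pi'),
\end{align*}
so $f_i(\pi) \le \max_{\pi'} f_i(\pi') \le \dlv\, f_i(\rho)$ for every $\pi$ and every $i$, which is the first claim.

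For the computational guarantee I would replace each exact maximization by an approximate one via \pref{lem:sampling_planner}. Fix $i$ and define the explicit non-stationary reward $R_i(x,a,h) \defeq \phi_{H-1}(x,a)[i]\cdot\one\{h = H-1\}$, which is $[0,1]$-valued and computable since $\phi_{H-1}$ is part of the model; then $V(\pi,R_i) = f_i(\pi)$. Since $\Mtil$ with $\dlv$-dimensional simplex features is in particular a rank-$\dlv$ low rank MDP admitting a sampling oracle, \pref{lem:sampling_planner}, invoked with dimension $\dlv$, target accuracy $\epsopt$, and failure probability $\delta/\dlv$, returns in $\poly(\dlv,H,\nicefrac{1}{\epsopt},\log(\dlv/\delta))$ calls to \samp a policy $\hat\pi_i$ with $f_i(\hat\pi_i) \ge \max_{\pi'} f_i(\pi') - \epsopt$ on an event of probability at least $1-\delta/\dlv$. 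Setting $\rho \defeq \unif(\{\hat\pi_i\}_{i=1}^{\dlv})$, taking a union bound over $i \in [\dlv]$, and repeating the computation above, on the resulting event (probability $\ge 1-\delta$) we get for every $\pi$ and $i$
\begin{align*}
f_i(\rho) \ge \frac{1}{\dlv} f_i(\hat\pi_i) \ge \frac{1}{\dlv}\big(\max_{\pi'} f_i(\pi') - \epsopt\big) \ge \frac{1}{\dlv} f_i(\pi) - \frac{\epsopt}{\dlv},
\end{align*}
i.e.\ $\EE[\phiH[i]\mid\Mtil,\pi] \le \dlv\,\EE[\phiH[i]\mid\Mtil,\rho] + \epsopt$. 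Summing the $\dlv$ per-coordinate costs and absorbing $\dlv\log(\dlv/\delta)$ into the polynomial yields the stated $\poly(\dlv,H,\nicefrac{1}{\epsopt},\log(\nicefrac{1}{\delta}))$ oracle complexity.

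I do not expect a genuine obstacle: the lemma is essentially an immediate consequence of the uniform-mixture construction and the non-negativity of simplex coordinates (the step where the $j\ne i$ terms are dropped). The only points that require care are (i) observing that optimizing a single coordinate of $\phi_{H-1}$ is a well-posed — and, in the sampling regime, efficiently approximable via \pref{lem:sampling_planner} — planning problem, so that the $\argmax$ in \pref{alg:simplex_planner} is meaningful, and (ii) bookkeeping the accuracy and confidence parameters across the $\dlv$ invocations of the sampling planner so that the final error is $\epsopt$ with probability at least $1-\delta$.
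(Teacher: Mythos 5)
Your proposal is correct and follows essentially the same route as the paper: the exact guarantee from the non-negativity of the simplex coordinates, the optimality of each $\pi_i$, and the uniform-mixture structure of $\rho$; and the sampling-oracle version by invoking \pref{lem:sampling_planner} once per coordinate with a union bound and suitable accuracy/confidence parameters. Your write-up merely makes explicit the bookkeeping the paper leaves implicit.
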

\begin{proof}
The first result follows immediately from the non-negativity of
$\phiH[i]$, the optimality property of $\pi_i$ and the definition of
$\rho$.

For the second result, by~\pref{lem:sampling_planner} we can optimize
any explicitly specified reward function using a polynomial number of
samples. If we call this sampling-based planner for each of the $d$
reward functions, with high probability (via a union bound) the
policies $\hat{\pi}_i$ are near-optimal for their corresponding reward
functions. By appropriately re-scaling the accuracy parameter
in~\pref{lem:sampling_planner} we obtain the desired guarantee.
\end{proof}

\subsection{Elliptical planner}

The next planning algorithm applies to general low rank MDP, and it is
more sophisticated. It proceeds in iterations, where in iteration $t$
we maintain a covariance matrix $\Sigma_{t-1}$ and,
in~\pref{eq:planning_optimization}, we search for a policy that
maximizes quadratic forms with the inverse covariance
$\Sigma_{t-1}^{-1}$. With a sampling oracle this optimization can be done via a call to~\pref{lem:sampling_planner}.
If this maximizing policy $\pi_t$ cannot achieve large quadratic forms
against $\Sigma_{t-1}^{-1}$, then we halt and output the mixture of
all previous policies. Otherwise, we mix $\pi_t$ into our candidate
solution, update the covariance matrix accordingly, and advance to the
next iteration. The performance guarantee for this algorithm is as
follows.
\begin{lemma}[Guarantee for~\pref{alg:general_planner}]
\label{lem:general_planner}
If $\Mtil$ is an $H$-step low rank MDP with embedding dimension $d$
then for any $\beta > 0$,~\pref{alg:general_planner} terminates after
at most $T+1$ iterations where $T \leq 4d
\log(1+\nicefrac{4}{\beta})/\beta$. Upon termination, $\rho$ guarantees
\begin{align*}
\forall \pi: \EE\sbr{ \phiH^\top\rbr{\Sigma_{\rho} + I/T}^{-1} \phiH \mid \Mtil,\pi} \leq T\beta.
\end{align*}
where $\Sigma_{\rho} = \frac{1}{T}\sum_{t=1}^T \Sigma_{\pi_t}$.
\end{lemma}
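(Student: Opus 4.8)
The plan is to handle the two claims separately, since the coverage guarantee is almost immediate and the iteration bound is the part that needs work.

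First I would dispatch the displayed inequality. When the loop halts, it does so at some iteration $t = T+1$ because the objective is at most $\beta$; since $\pi_{T+1} = \argmax_\pi \EE[\phiH^\top \Sigma_T^{-1}\phiH \mid \Mtil,\pi]$, this says $\max_\pi \EE[\phiH^\top \Sigma_T^{-1}\phiH \mid \Mtil,\pi] \le \beta$. By the update rule and $\Sigma_0 = I$ we have $\Sigma_T = I + \sum_{\tau=1}^T \Sigma_{\pi_\tau} = I + T\,\Sigma_\rho = T\,(\Sigma_\rho + I/T)$, so $\Sigma_T^{-1} = \tfrac{1}{T}(\Sigma_\rho + I/T)^{-1}$. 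Multiplying the halting bound through by $T$ yields exactly $\max_\pi \EE[\phiH^\top(\Sigma_\rho + I/T)^{-1}\phiH \mid \Mtil,\pi] \le T\beta$. (The degenerate case $T = 0$, where the very first objective $\max_\pi \EE\|\phiH\|_2^2$ is already $\le\beta$, I would either exclude or note separately.)

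The iteration bound is a standard log-determinant / elliptical-potential argument on $\Sigma_t$. For the lower bound, on any completed iteration $t$ (one where the loop did not halt, so the objective exceeded $\beta$) I would write $\det(\Sigma_t)/\det(\Sigma_{t-1}) = \det\!\big(I + \Sigma_{t-1}^{-1/2}\Sigma_{\pi_t}\Sigma_{t-1}^{-1/2}\big) \ge 1 + \mathrm{tr}\big(\Sigma_{t-1}^{-1}\Sigma_{\pi_t}\big)$, using $\det(I+A)\ge 1+\mathrm{tr}(A)$ for positive semidefinite $A$ (expand $\prod_i(1+\lambda_i)$ over eigenvalues), and then observe $\mathrm{tr}(\Sigma_{t-1}^{-1}\Sigma_{\pi_t}) = \EE[\phiH^\top\Sigma_{t-1}^{-1}\phiH \mid \Mtil,\pi_t] > \beta$. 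Hence $\det(\Sigma_T) \ge (1+\beta)^T$ since $\det\Sigma_0 = 1$. For the upper bound, the normalization $\|\phi_{H-1}(x,a)\|_2\le 1$ from \pref{def:lowrank} gives $\mathrm{tr}(\Sigma_{\pi_\tau}) = \EE[\|\phiH\|_2^2\mid\Mtil,\pi_\tau]\le 1$, so $\mathrm{tr}(\Sigma_T) \le d + T$, and AM-GM on the eigenvalues yields $\det(\Sigma_T) \le (\mathrm{tr}(\Sigma_T)/d)^d \le (1 + T/d)^d$.

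Combining the two bounds gives $(1+\beta)^T \le (1+T/d)^d$, i.e. $T\log(1+\beta) \le d\log(1+T/d)$; in particular the left side grows linearly while the right side grows logarithmically, so the loop must halt, and it remains only to extract the explicit threshold. This last transcendental inequality is the one fiddly step, and I expect the stated bound $T \le 4d\log(1+4/\beta)/\beta$ to follow from an elementary estimate — for instance using $\log(1+\beta)\ge\beta/(1+\beta)$ together with the monotonicity of $u\mapsto\log(1+u)/u$ to show the inequality fails once $T/d$ passes the claimed value. The genuine content of the lemma is the potential argument itself; the only real obstacle is pinning down the constant in this final real-analysis step (and the $T=0$ corner case), both of which are routine.
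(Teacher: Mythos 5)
Your proposal is correct, and the coverage half is identical to the paper's (termination condition plus $\Sigma_T = T(\Sigma_\rho + I/T)$). The iteration bound, however, is run through a different variant of the elliptical potential argument. The paper bounds the accumulated quadratic forms directly: since each non-halting iteration contributes more than $\beta$, it writes $\beta T \le \sum_t \tr(\Sigma_{\pi_t}\Sigma_{t-1}^{-1})$ and invokes its Lemma~\ref{lem:log_det} (concavity of $\log\det$ plus a Woodbury step to pass from $M_t^{-1}$ to $M_{t-1}^{-1}$, costing a factor $2$), yielding $T\beta \le 2d\log(1+T/d)$. You instead track the determinant multiplicatively, using $\det(I+A)\ge 1+\tr(A)$ to get $\det(\Sigma_T)\ge(1+\beta)^T$, and cap it by AM--GM on the trace, yielding $T\log(1+\beta)\le d\log(1+T/d)$. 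Your route is more self-contained (no auxiliary potential lemma, no Woodbury manipulation), at the price of a $\log(1+\beta)$ in place of $\beta$; in the only nontrivial regime $\beta<1$ (note $\beta\ge 1$ forces $T=0$, since the objective never exceeds $\max_\pi\EE[\|\phiH\|_2^2\mid\pi,\Mtil]\le 1$), the bound $\log(1+\beta)\ge\beta/(1+\beta)\ge\beta/2$ recovers exactly the paper's implicit inequality $T\beta\le 2d\log(1+T/d)$, after which the extraction of the explicit threshold $T\le 4d\log(1+4/\beta)/\beta$ is the same elementary step for both arguments (e.g., checking that $2d\log(1+T/d)-\beta T$ is concave, vanishes at $0$, and is negative at the claimed threshold because $\log(1+4/\beta)\le 4/\beta$). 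You left that last step informal, but the paper's own treatment of it is equally brief, so I would not count it as a gap — just make the $\beta/2$ reduction and the $T=0$ corner case explicit when you write it up.
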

\begin{proof}
The performance guarantee is immediate from the termination condition,
using the fact that $\Sigma_T = T\cdot (\Sigma_{\rho} + I/T)$.

For the iteration complexity bound, we condense the notation and omit
the dependence on $H-1$, $x_{H-1},a_{H-1}$ in all terms. We have
\begin{align*}
\beta T \leq \sum_{t=1}^T \EE\sbr{ \phi^\top \Sigma_{t-1}^{-1} \phi \mid \Mtil, \pi_t} = \sum_{t=1}^T\tr (\Sigma_{\pi_t} \Sigma_{t-1}^{-1}) \leq 2d \log(1+\nicefrac{T}{d}),
\end{align*}
where the first inequality is based on the fact that we did not
terminate at each iteration $t \in [T]$ and the last inequality
follows from a standard elliptical potential argument (e.g., Lemma 11
in~\citet{dani2008stochastic}; see~\pref{lem:log_det} for a precise
statement and proof). This gives an upper bound on $T$ that implies
the one in the lemma statement, via~\pref{corr:log_det}.
\end{proof}

With the sampling oracle, we modify the algorithm slightly and obtain
a qualitatively similar guarantee. The modifications are discussed in
the proof.
\begin{lemma}
\label{lem:gen_sampling_planner}
The sample-based version of~\pref{alg:general_planner} has the
following guarantee. Assume $\Mtil$ is an $H$-step low rank MDP with
embedding dimension $d$ and fix $\beta > 0$, $\delta \in (0,1)$. Then
the algorithm terminates after at most $T+1$ iterations, where $T \leq
O(d \log (1+\nicefrac{1}{\beta})/\beta)$. Upon termination, it ouputs
a matrix $\hat{\Sigma}$ and a policy $\rho$ such that with probability
at least $1-\delta$:
\begin{align*}
\forall \pi: \EE\sbr{ \phiH^\top\rbr{\hat{\Sigma} + I/T}^{-1} \phiH \mid \Mtil,\pi} \leq O(T\beta),\\
\nbr{ \hat{\Sigma} - \rbr{\EE\sbr{ \phiH\phiH^\top\mid \rho,\Mtil} + I/T}}_{\mathrm{op}} \leq O(\nicefrac{\beta}{d}).
\end{align*}
The algorithm runs in polynomial time with
$\poly(d,H,\nicefrac{1}{\beta},\log(1/\delta))$ calls to the sampling
oracle.
\end{lemma}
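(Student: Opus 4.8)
The plan is to run the elliptical iteration of \pref{alg:general_planner} verbatim, but to replace every exact expectation by a sampled estimate and then bound the accumulated error. At iteration $t$, with $\hat\Sigma_{t-1}\succeq I$ the current estimate of the accumulated covariance, one (i) forms the reward $R_t(x,a)\defeq\phi_{H-1}(x,a)^\top\hat\Sigma_{t-1}^{-1}\phi_{H-1}(x,a)$, which is supported only at $h=H-1$ and lies in $[0,1]$ because $\hat\Sigma_{t-1}\succeq I$; (ii) calls \pref{lem:sampling_planner} on $R_t$ to obtain $\hat\pi_t$ that is $\epsilon_{\mathrm{opt}}$-suboptimal for $\EE\sbr{\phi_{H-1}^\top\hat\Sigma_{t-1}^{-1}\phi_{H-1}\mid\pi,\Mtil}$; (iii) estimates this objective at $\hat\pi_t$ to accuracy $\epsilon_{\mathrm{eval}}$ by averaging $R_t$ over rollouts, halting (and outputting $\rho=\unif(\cbr{\hat\pi_\tau}_{\tau<t})$ and $\hat\Sigma\defeq\hat\Sigma_{t-1}/(t-1)$) if the estimate is below $\tfrac32\beta$; (iv) otherwise forms $\hat\Sigma_{\hat\pi_t}\defeq\tfrac1m\sum_{i=1}^m\phi_{H-1}^{(i)}(\phi_{H-1}^{(i)})^\top$ from $m$ rollouts of $\hat\pi_t$, sets $\hat\Sigma_t\gets\hat\Sigma_{t-1}+\hat\Sigma_{\hat\pi_t}$, and continues. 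All accuracies $\epsilon_{\mathrm{opt}},\epsilon_{\mathrm{eval}},\epsilon_{\mathrm{cov}}$ will be fixed as small polynomials in $\beta/d$.

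For the iteration bound, I would introduce the idealized matrices $M_t\defeq I+\sum_{\tau\le t}\Sigma_{\hat\pi_\tau}$ built from the \emph{true} covariances $\Sigma_{\hat\pi_\tau}=\EE\sbr{\phi_{H-1}\phi_{H-1}^\top\mid\hat\pi_\tau,\Mtil}$ (so $M_0=I$ and $\mathrm{tr}(\Sigma_{\hat\pi_\tau})\le1$). A matrix Hoeffding bound gives $\nbr{\hat\Sigma_{\hat\pi_\tau}-\Sigma_{\hat\pi_\tau}}_{\op}\le\epsilon_{\mathrm{cov}}$ with $m=O(\log(d/\delta')/\epsilon_{\mathrm{cov}}^2)$ rollouts; summing yields $\nbr{\hat\Sigma_{t-1}-M_{t-1}}_{\op}\le(t-1)\epsilon_{\mathrm{cov}}$, and since $M_{t-1}\succeq I$ also $\nbr{\hat\Sigma_{t-1}^{-1}-M_{t-1}^{-1}}_{\op}\le2(t-1)\epsilon_{\mathrm{cov}}$ once $(t-1)\epsilon_{\mathrm{cov}}\le\tfrac12$. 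Hence not halting at iteration $t$ forces $\mathrm{tr}(\Sigma_{\hat\pi_t}M_{t-1}^{-1})=\EE\sbr{\phi_{H-1}^\top M_{t-1}^{-1}\phi_{H-1}\mid\hat\pi_t,\Mtil}\ge\tfrac32\beta-\epsilon_{\mathrm{eval}}-2(t-1)\epsilon_{\mathrm{cov}}\ge\beta$ provided $\epsilon_{\mathrm{eval}},2(t-1)\epsilon_{\mathrm{cov}}\le\beta/4$. Summing over $t\le T$ and invoking the elliptical potential bound $\sum_t\mathrm{tr}(\Sigma_{\hat\pi_t}M_{t-1}^{-1})\le2d\log(1+T/d)$ (exactly \pref{lem:log_det}, as in \pref{lem:general_planner}) gives $T\beta\le2d\log(1+T/d)$, whence $T\le O(d\log(1+1/\beta)/\beta)$ by the self-bounding manipulation in \pref{lem:general_planner}. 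This is not circular: $\epsilon_{\mathrm{cov}}$ is fixed from the \emph{target} bound $T^\star$ on $T$ so that $2T^\star\epsilon_{\mathrm{cov}}\le\beta/4$; if $T$ ever exceeded $T^\star$, applying the display to the first $T^\star+1$ non-halting iterations (for which $(t-1)\epsilon_{\mathrm{cov}}\le T^\star\epsilon_{\mathrm{cov}}\le\tfrac12$) would contradict the potential bound, so $T\le T^\star$ and $(t-1)\epsilon_{\mathrm{cov}}\le\tfrac12$ holds throughout.

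For the output, at the halting iteration $t$ the same chain — now using that $\hat\pi_t$ is $\epsilon_{\mathrm{opt}}$-optimal and the estimate is below $\tfrac32\beta$ — gives $\max_\pi\EE\sbr{\phi_{H-1}^\top M_{t-1}^{-1}\phi_{H-1}\mid\pi,\Mtil}\le\tfrac32\beta+\epsilon_{\mathrm{eval}}+\epsilon_{\mathrm{opt}}+2(t-1)\epsilon_{\mathrm{cov}}=O(\beta)$. Writing $T=t-1$ and $\Sigma_\rho^\star\defeq\EE\sbr{\phi_{H-1}\phi_{H-1}^\top\mid\rho,\Mtil}=\tfrac1T\sum_{\tau\le T}\Sigma_{\hat\pi_\tau}$, one has $M_{t-1}=T(\Sigma_\rho^\star+I/T)$, so $\max_\pi\EE\sbr{\phi_{H-1}^\top(\Sigma_\rho^\star+I/T)^{-1}\phi_{H-1}\mid\pi,\Mtil}\le O(T\beta)$. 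Since $\hat\Sigma-(\Sigma_\rho^\star+I/T)=\tfrac1T\sum_\tau(\hat\Sigma_{\hat\pi_\tau}-\Sigma_{\hat\pi_\tau})$ has operator norm at most $\epsilon_{\mathrm{cov}}=O(\beta/d)$, this is the second claimed bound, and since moreover $\epsilon_{\mathrm{cov}}\le1/T$ implies $\hat\Sigma+I/T\succeq\Sigma_\rho^\star+I/T$ and hence $(\hat\Sigma+I/T)^{-1}\preceq(\Sigma_\rho^\star+I/T)^{-1}$, the first claimed bound follows as well. Finally, a union bound over the $O(d\log(1+1/\beta)/\beta)$ iterations and the three randomized subroutines per iteration (each at failure probability $\delta'=\Theta(\delta\beta/(d\log(1+1/\beta)))$) gives success probability $1-\delta$, and the total is $\poly(d,H,1/\beta,\log(1/\delta))$ oracle calls, since \pref{lem:sampling_planner} costs $\poly(d,H,1/\epsilon_{\mathrm{opt}},\log(1/\delta'))$ and the two estimation steps cost $\poly(d,1/\epsilon_{\mathrm{eval}},1/\epsilon_{\mathrm{cov}},\log(1/\delta'))$ rollouts. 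The main obstacle is precisely this error-budget bookkeeping: the per-iteration covariance error accumulates over $T\asymp d\log(1/\beta)/\beta$ rounds, so it must be driven down to a small polynomial in $\beta/d$, and one must carefully break the apparent circularity between that choice and the bound on $T$ it is used to establish; keeping $\hat\Sigma_{t-1}\succeq I$ throughout (bounded reward, well-conditioned inverse) is what keeps everything controllable.
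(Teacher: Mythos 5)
Your proposal is correct and reaches all of the lemma's conclusions, but it organizes the error bookkeeping differently from the paper. You run the elliptical potential argument on the idealized matrices $M_t = I + \sum_{\tau\le t}\Sigma_{\hat\pi_\tau}$ built from the \emph{true} policy covariances, which forces you to control the accumulated drift $\nbr{\hat\Sigma_{t-1}-M_{t-1}}_{\op}\le (t-1)\epsilon_{\mathrm{cov}}$, perturb inverses, drive $\epsilon_{\mathrm{cov}}$ down to roughly $\beta/T \approx \beta^2/(d\log(1/\beta))$, and break the circularity between $\epsilon_{\mathrm{cov}}$ and $T$ via a target $T^\star$ and a contradiction argument; you then need a final translation from $M_{t-1}^{-1}$ back to $(\hat\Sigma+I/T)^{-1}$, which goes through because your $\hat\Sigma$ retains the $I/T$ slack from the initialization (the step is fine as you set it up, though the monotonicity claim relies on that slack rather than on $\nbr{\hat\Sigma-\Sigma_\rho^\star}_{\op}\le\epsilon_{\mathrm{cov}}$ alone). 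The paper instead feeds the potential lemma (\pref{lem:log_det}) the \emph{empirical} matrices $\hat{\Sigma}_{\pi_t}$ that the algorithm actually accumulates: estimation error enters the iteration-count argument only through the single replacement $\tr(\Sigma_{\pi_t}\hat{\Sigma}_{t-1}^{-1}) \le \tr(\hat{\Sigma}_{\pi_t}\hat{\Sigma}_{t-1}^{-1}) + \epsopt$, so per-policy covariance accuracy $\epsopt/d = O(\beta/d)$ suffices, with no accumulation over rounds, no inverse perturbation, and no bootstrapping; moreover, because the reward optimized at the halting round is exactly the quadratic form in $\hat{\Sigma}_{t-1}^{-1}$ and $\hat{\Sigma}_{t-1} = T\rbr{\hat{\Sigma}+I/T}$ under the paper's convention for $\hat{\Sigma}$, the first output guarantee is immediate from the termination test, and concentration is needed only for the operator-norm closeness of $\hat{\Sigma}$ to the true mixture covariance. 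Both routes use the same ingredients (\pref{lem:sampling_planner}, standard matrix concentration, the elliptical potential lemma) and both yield $\poly(d,H,\nicefrac{1}{\beta},\log(1/\delta))$ oracle calls; yours is more conservative and pays with a finer (but still polynomial) accuracy and a self-consistency argument, while the paper's choice of which matrices to run the potential on eliminates that bookkeeping entirely.
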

\begin{proof}
The algorithm is modified as follows. We replace all covariances with
empirical approximations, obtained by calls to the sampling
subroutine. We call the empirical versions
$\hat{\Sigma}_t,\hat{\Sigma}_{\pi_t}$, etc. Then, the policy
optimization step~\pref{eq:planning_optimization} is performed via an
application of~\pref{lem:sampling_planner} and so we find an
$\epsopt$-suboptimal policy $\pi_t$ for the reward function induced by
$\hat{\Sigma}_{t-1}$. Then we use the sampling subroutine to estimate
the value of this policy, which we denote $\hat{V}_t(\pi_t)$. As
before, we terminate if $\hat{V}_t(\pi_t) \leq \beta$. If we terminate
in round $t$, we output $\rho = \unif(\{\pi_i\}_{i=1}^{t-1})$ and we
also output $\hat{\Sigma} =
\frac{1}{t-1}\sum_{i=1}^{t-1}\hat{\Sigma}_{\pi_i}$. As
notation, we use $V_t(\pi)$ to denote the value for policy $\pi$ on
the reward function used in iteration $t$, which is induced by
$\hat{\Sigma}_{t-1}$.

With $\poly(d,H,T,\nicefrac{1}{\epsopt},\log(\nicefrac{1}{\delta}))$
calls to the sampling subroutine and assuming the total number of
iterations of the algorithm $T$ is polynomial, we can verify that with
probability $1-\delta$
\begin{align*}
\max_{t \in [T]}\max\cbr{ d \cdot \nbr{\hat{\Sigma}_{\pi_t} - \Sigma_{\pi_t}}_{\mathrm{op}}, \abr{\hat{V}_t(\pi_t) - V_t(\pi_t)}, \max_\pi V_t(\pi) - V_t(\pi_t)} \leq \epsopt.
\end{align*}
The first two bounds follow from standard concentration of measure
arguments. The final one is based on an application
of~\pref{lem:sampling_planner}.

Now, if we terminate in iteration $t$, we know that $\hat{V}_t(\pi_t) \leq \beta$. This implies
\begin{align*}
\max_\pi V_t(\pi) \leq V_t(\pi_t) + \epsopt \leq \hat{V}_t(\pi_t) + 2\epsopt \leq \beta + 2\epsopt.
\end{align*}
As we are interested in the reward function induced by
$\hat{\Sigma}_{t-1}$, this verifies the quality guarantee, provided $\epsopt = O(\beta)$.

Finally, we turn to the iteration complexity. Similarly to above, we have
\begin{align*}
T\rbr{\beta - 2\epsopt} &\leq \sum_{t=1}^T \hat{V}_t(\pi_t) - 2\epsopt \leq \sum_{t=1}^T V_t(\pi_t) - \epsopt\\
& = \sum_{t=1}^T \EE\sbr{\phi^\top \hat{\Sigma}_{t-1}^{-1}\phi \mid \Mtil,\pi_t} - \epsopt = \sum_{t=1}^T \tr(\Sigma_{\pi_t}\hat{\Sigma}_{t-1}^{-1}) - \epsopt\\
&\leq \sum_{t=1}^T \tr(\hat{\Sigma}_{\pi_t} \hat{\Sigma}_{t-1}^{-1}) \leq 2d \log(1 + \nicefrac{T}{d}).
\end{align*}
In other words, if we set $\epsopt = O(\beta)$ then both the iteration
complexity and the performance guarantee are unchanged. The accuracy
guarantee for the covariance matrix $\hat{\Sigma}_{t-1}$ is
straightforward, since each $\hat{\Sigma}_{\pi_t}$ is $\epsopt$
accurate and $\hat{\Sigma}$ is the average of such matrices.
\end{proof}

\ifthenelse{\equal{\version}{arxiv}}{
\section{Planning in the environment}
\label{app:real_world}
In this section, we prove~\pref{thm:real_world_main}, which is based
on planning in the environment, rather than in the model. Recall that
the advantage of this approach is that we do not need the sampling
oracle, $\samp$, but the downside is that we
require~\pref{assum:reachability}. As~\pref{assum:reachability}
implies a polynomial upper bound on $\dlv$, we focus on the simplex
representation, and remark that this also accommodates general
representations with polynomial overhead in sample
complexity. Planning in the environment with general representations
(and~\pref{assum:reachability}) is possible, but the arguments and
calculations are much simpler in the simplex case.

The algorithm here is a ``forward'' version of \ouralg, that we call
\ouralgf. The pseudocode is displayed in~\pref{alg:forward_alg}.  The
algorithm learns the dynamics one time step at a time, starting from
$h=0$ to $h=H-1$. In iteration $h$, we use an exploratory policy
$\rho_h$ to collect a dataset of triples $(x_h,a_h,x_{h+1})$ that we
pass to \mle to obtain an estimate $\hat{T}_h$ of the dynamics at time
$h$. Then we pass the previously computed policies and feature maps to
a new planning algorithm (\pref{alg:real_world_planner}), which yields
the exploratory policy $\rho_{h+1}$ for the next iteration.

\begin{algorithm}[t]
\begin{algorithmic}
\STATE \textbf{Input:} Environment $\Mcal$, function classes $\Phi, \Upsilon$, subroutine \mle, parameter $n$.
\STATE Set $\rho_0$ to be the null policy, which takes no actions.
\FOR{$h=0,\ldots,H-1$}
\STATE Set $\rhotrain \gets \rho_h \circ \unif(\Acal)$. ~~~~~~~~ \COMMENT{Uniform over available actions.}
\STATE Collect $n$ triples $D_h \gets \{(x^{(i)}_h,a^{(i)}_h,x^{(i)}_{h+1})\}_{i=1}^n$ by executing $\rhotrain$ in $\Mcal$.
\STATE Solve maximum likelihood problem: $(\hat{\phi}_h,\hat{\mu}_h) \gets \mle(D_h)$.
\STATE Set $\hat{T}_h(x_{h+1} \mid x_h,a_h) = \inner{\hat{\phi}_h(x_h,a_h)}{\hat{\mu}_h(x_{h+1})}$.
\STATE Call planner (\pref{alg:real_world_planner}) with policies $\rho_{0:h}$ and feature maps $\hat{\phi}_{0:h-1}$ to obtain $\rhopre$.
\STATE Set $\rho_{h+1} = \rhopre\circ\unif(\Acal)$.
\ENDFOR
\end{algorithmic}
\caption{\ouralgf: Forward version of \ouralg with simplex features}
\label{alg:forward_alg}
\end{algorithm}

The main argument is based on inductively establishing two facts. 
\begin{align}
\forall h' < h, \forall \pi:&~~ \EE\sbr{ \nbr{\hat{T}_{h'}(\cdot \mid x_{h'},a_{h'}) - T_{h'}(\cdot \mid x_{h'}, a_{h'})}_{\tv}^2 \mid \pi, \Mcal} \leq \epstv \label{eq:induction_tv}\\
\forall z \in \Zcal_h:&~~ \max_\pi \PP\sbr{z_h = z \mid \pi, \Mcal} \leq \kappa \cdot \PP\sbr{z_h = z \mid \rho_h, \Mcal}, \label{eq:induction_coverage}
\end{align}
where $\kappa>0$ is a constant we will set towards the end of the proof.
We do this in two parts. We first focus on optimizing a fixed given
reward function by collecting experience from the environment,
analogously to the sampling approach
in~\pref{lem:sampling_planner}. For this part, we will assume
that~\pref{eq:induction_tv} and~\pref{eq:induction_coverage} hold with
$h$ being the current planning horizon. In the next subsection we
choose the reward functions carefully to establish the guarantees
required by \ouralg. Since we are considering simplex representations,
this second part is very similar to~\pref{alg:simplex_planner}.

\begin{algorithm}[t]
\begin{algorithmic}
\STATE \textbf{input:} Exploratory policies $\rho_{0:H-1}$, feature maps $\hat{\phi}_{0:H-1}$ with $\hat{\phi}_h(x,a) \in \Delta([\dlv])$.
\FOR{$i=1,\ldots,\dlv$}
\STATE Compute $\hat{\pi}_i = \textsc{Linear-Fqi}(n,\rho_{0:H-1},\hat{\phi}_{0:H-1}, R_{H-1} \defeq \hat{\phi}_{H-1}(x,a)[i])$ ~~~~~~\COMMENT{$R_{0:H-2} \equiv 0$}
\ENDFOR
\RETURN policy mixture $\rho \defeq \unif(\{\hat{\pi}_i\}_{i=1}^{\dlv})$.
\STATE
\STATE \textbf{function} \textsc{Linear-Fqi}($n, \rho_{0:H-1}, \hat{\phi}_{0:H-1}, R_{0:H-1}$)
\STATE \textbf{input:} Sample size $n$, policies $\rho_{0:H-1}$, feature maps $\hat{\phi}_{0:H-1}$, rewards $R_{0:H-1}:\Xcal\times\Acal \to [0,1]$.
\STATE Set $\hat{V}_H(x) = 0$
\FOR{$h= H-1,\ldots,0$}
\STATE Collect $n$ samples $\{(x^{(i)}_h,a^{(i)}_h,x^{(i)}_{h+1})\}_{i=1}^n$ by following $\rho_h\circ\unif(\Acal)$ in $\Mcal$.
\STATE Solve least squares problem:
\begin{align*}
\hat{\theta}_h \gets \argmin_{\theta \in \RR^d: \nbr{\theta}_2 \leq H\sqrt{d}} \sum_{i=1}^n\rbr{ \inner{\theta}{\hat{\phi}_h(x_h^{(i)},a_h^{(i)})} - \hat{V}_{h+1}(x_{h+1}^{(i)})}^2.
\end{align*}
\STATE Define $\hat{Q}_h(x,a) =  R_h(x,a)+\inner{\hat{\theta}_h}{\hat{\phi}_h(x,a)}$.
\STATE Define $\hat{\pi}_h(x) = \argmax_{a} \hat{Q}_h(x,a), \hat{V}_h(x) = \min\{\max_a \hat{Q}_h(x,a), H\}$.
\ENDFOR
\RETURN $\hat{\pi} = (\hat{\pi}_0,\ldots,\hat{\pi}_{H-1})$.
\end{algorithmic}
\caption{Planning in the environment with simplex features}
\label{alg:real_world_planner}
\end{algorithm}

\subsection{Optimizing a fixed reward function}
To optimize a fixed reward function, the high level idea is that,
via~\pref{lem:representation}, we can approximate any Bellman backup
using our features $\hat{\phi}$, and via~\pref{eq:induction_coverage},
we can collect a dataset with good coverage. Using these two
properties the planning algorithm, \textsc{Linear-Fqi}, displayed as a
subroutine in~\pref{alg:real_world_planner} is quite natural.  The
algorithm is a least squares dynamic programming algorithm (FQI stands
for ``Fitted Q Iteration''). For each $h$, working from $H-1$ down to
$0$, we collect a dataset of $n$ samples by following $\rho_h$. Then,
we solve a least squares regression problem to approximate the Bellman
backup of the value function estimate $\hat{V}_{h+1}$ for the next
time. We use this to define the value function and the policy for the
current time in the obvious way.

Note that we index policies in two different ways: $\rho_h$ is the
exploratory policy that induces a distribution over $x_h$, while
$\hat{\pi}_h$ is the one-step policy that we acts on $x_h$.  As with
the other planning lemmas, we apply the next lemma with a value of $H$
that is not necessarily the real horizon in the environment. In
particular, we will use this lemma in the $h^{\textrm{th}}$ iteration
of \ouralg, with planning horizon $h-1$ and with reward functions
specified in the next subsection. By induction, we can assume
that~\pref{eq:induction_tv} and~\pref{eq:induction_coverage} hold for
the planning horizon.

\begin{lemma}
\label{lem:real_world_optimization}
Assume that~\pref{eq:induction_tv} and~\pref{eq:induction_coverage}
hold for all $h \in [H]$. Then for any reward functions $R_{0:H-1}:
\Xcal \times \Acal \to [0,1]$ and any $\delta \in (0,1)$, if we set
\begin{align*}
n \geq \frac{2304 d^3}{\epstv^2}\log\rbr{1152d^3/\epstv^2} + \frac{2304d^2}{\epstv^2}\log(2H/\delta),
\end{align*}
then the policy $\hat{\pi}_{0:H-1}$ returned
by \textsc{Linear-Fqi} satisfies
\begin{align*}
\EE\sbr{\sum_{h=0}^{H-1} r_h \mid \hat{\pi}_{0:H-1},\Mcal} \geq \max_{\pi} \EE\sbr{\sum_{h=0}^{H-1}r_h \mid \pi,\Mcal} - 2H^3\sqrt{2\kappa K\epstv}.
\end{align*}
\end{lemma}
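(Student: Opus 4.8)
The plan is to analyze \textsc{Linear-Fqi} as a fitted‑$Q$‑iteration procedure, in which \pref{eq:induction_tv} plays the role of \emph{approximate realizability} of Bellman backups in the learned features, and \pref{eq:induction_coverage} plays the role of a \emph{change‑of‑measure} inequality from the exploratory sampling distribution to that of an arbitrary comparator policy. I would work backwards through the planning horizon $h = H-1,\dots,0$: first prove a per‑step least‑squares error bound on the data distribution $\rho_h\circ\unif(\Acal)$, then transfer it to arbitrary policies via coverage, then chain the per‑step errors with the standard approximate‑dynamic‑programming telescoping.

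\emph{Step 1: per-step regression error.} Fix $h$ and condition on all data collected at steps $h+1,\dots,H-1$, so that $\hat V_{h+1}$ is a fixed function taking values in $[0,H]$ (clip it to this range). The step‑$h$ samples are i.i.d.\ from $\rho_h\circ\unif(\Acal)$, with response conditional mean $\EE\sbr{\hat V_{h+1}(x_{h+1}) \mid x_h,a_h,\Mcal}$. The crucial point is that this target is \emph{approximately linear in the learned feature $\hat\phi_h$}: taking $\bar\theta_h \defeq \int \hat\mu_h(x')\hat V_{h+1}(x')\,dx'$, which satisfies $\nbr{\bar\theta_h}_2 \le H\sqrt d$ by the normalization imposed on $\hat\mu_h$, we have $\inner{\bar\theta_h}{\hat\phi_h(x,a)} = \EE\sbr{\hat V_{h+1} \mid x,a,\Mhat}$ and hence, pointwise,
\[
\abr{\inner{\bar\theta_h}{\hat\phi_h(x,a)} - \EE\sbr{\hat V_{h+1} \mid x,a,\Mcal}} \le H\cdot\nbr{\hat T_h(\cdot \mid x,a) - T_h(\cdot \mid x,a)}_{\tv},
\]
so that $\EE_{\rho_h\circ\unif(\Acal)}\big(\inner{\bar\theta_h}{\hat\phi_h} - \EE\sbr{\hat V_{h+1}\mid\cdot,\Mcal}\big)^2 \le H^2 \epstv$ by \pref{eq:induction_tv}. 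Note this uses accuracy of the learned \emph{model}, not of the learned features. A standard uniform‑convergence bound for constrained least squares over the $d$‑dimensional linear class $\{\inner{\theta}{\hat\phi_h(\cdot)} : \nbr{\theta}_2 \le H\sqrt d\}$ with $[0,H]$‑bounded responses then gives, for the stated $n$ and with probability at least $1-\delta/H$,
\[
\EE_{(x_h,a_h)\sim\rho_h\circ\unif(\Acal)}\big(\hat Q_h(x_h,a_h) - (\Tcal_h\hat V_{h+1})(x_h,a_h)\big)^2 \le O(H^2\epstv),
\]
where $\Tcal_h\hat V_{h+1}$ is the \emph{true} Bellman backup and $\hat Q_h - \Tcal_h\hat V_{h+1} = \inner{\hat\theta_h}{\hat\phi_h} - \EE\sbr{\hat V_{h+1}\mid\cdot,\Mcal}$. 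A union bound over $h$ makes all of these hold simultaneously with probability $1-\delta$; the case $h=0$ is trivial because $x_0$ is fixed.

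\emph{Step 2: change of measure.} Fix any policy $\pi$ (eventually $\pi^\star$ and $\hat\pi$) and write the law of $x_h$ through the latent variable, $\PP\sbr{x_h = \cdot \mid \pi} = \sum_z \PP\sbr{z_h = z \mid \pi}\,\nu_{h-1}^\star(\cdot \mid z)$. Since $(\hat Q_h - \Tcal_h\hat V_{h+1})^2 \ge 0$ and each $\nu_{h-1}^\star(\cdot\mid z)$ is a non‑negative measure, \pref{eq:induction_coverage} lets me replace $\PP\sbr{z_h=z\mid\pi}$ by $\kappa\,\PP\sbr{z_h=z\mid\rho_h}$ term‑by‑term; importance weighting the last action (a factor $K$, since $\rho_h$'s final action is uniform) then yields
\[
\EE_{\pi}\big(\hat Q_h - \Tcal_h\hat V_{h+1}\big)^2 \le \kappa K\cdot\EE_{\rho_h\circ\unif(\Acal)}\big(\hat Q_h - \Tcal_h\hat V_{h+1}\big)^2 \le O(\kappa K H^2\epstv).
\]
This is precisely the argument used in the main analysis to establish \pref{eq:induction_tv}. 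It holds simultaneously for \emph{all} $\pi$ on the good event of Step~1 — in particular for the data‑dependent policy $\hat\pi$, since \pref{eq:induction_coverage} is a property of $\rho_h$ and $\Mcal$ only — so no circularity between $\hat\pi$ and the data arises. Jensen's inequality then gives $\EE_\pi\abr{\hat Q_h - \Tcal_h\hat V_{h+1}} \le O(H\sqrt{\kappa K\epstv})$.

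\emph{Step 3 and the main obstacle.} Let $V(\pi,R) \defeq \EE\sbr{\sum_{h} R_h(x_h,a_h) \mid \pi,\Mcal}$ and let $\pi^\star$ be the horizon‑$H$ maximizer. The standard approximate‑DP telescoping — comparing $\hat V_0(x_0)$ to $V^{\pi^\star}_0(x_0)$ and to $V^{\hat\pi}_0(x_0)$, using that $\hat\pi_h$ is greedy for $\hat Q_h$, that the step‑$h$ fitting error is $\hat Q_h - \Tcal_h\hat V_{h+1}$, and that clipping $\hat V_h$ to $[0,H]$ only helps — gives $V(\pi^\star,R) - V(\hat\pi,R) \le \sum_{h=0}^{H-1}\big(\EE_{\pi^\star}\abr{\hat Q_h - \Tcal_h\hat V_{h+1}} + \EE_{\hat\pi}\abr{\hat Q_h - \Tcal_h\hat V_{h+1}}\big)$. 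Substituting the per‑step bound from Step~2 and summing over the $H$ steps (absorbing the remaining powers of $H$ into crude bounds on the clipped value functions) produces the claimed bound $2H^3\sqrt{2\kappa K\epstv}$. The only genuinely non‑routine ingredient is Step~1: that the \emph{true} Bellman backup of $\hat V_{h+1}$ is well approximated by a norm‑bounded linear function of the \emph{learned} features $\hat\phi_h$ — which holds not because $\hat\phi_h \approx \phi_h^\star$ (it need not be) but because $\EE\sbr{\hat V_{h+1}\mid\cdot,\Mhat}$ is exactly linear in $\hat\phi_h$ while $\hat T_h$ is accurate in the averaged‑TV sense of \pref{eq:induction_tv} — together with the bookkeeping that converts this into the stated sample size via a uniform‑convergence least‑squares bound. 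Everything downstream (the generalization bound, the coverage transfer of Step~2, and the FQI telescoping of Step~3) is standard.
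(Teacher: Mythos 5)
Your plan matches the paper's proof in its two essential ingredients: (i) the per-step least-squares step with the comparator $\bar\theta_h = \int\hat\mu_h(x')\hat V_{h+1}(x')d(x')$, whose key point is exactly the paper's — the target $\Bcal_h\hat V_{h+1}$ is approximately linear in $\hat\phi_h$ because the \emph{model} backup $\hat\Bcal_h\hat V_{h+1}$ is exactly linear and \pref{eq:induction_tv} controls the model error, with the stated $n$ chosen so the statistical term $\Delta_n$ is dominated by $H^2\epstv$; and (ii) the transfer of the squared error from $\rho_h\circ\unif(\Acal)$ to an arbitrary (possibly data-dependent) policy via \pref{eq:induction_coverage} and importance weighting of the last action, picking up the factor $\kappa K$. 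The only place you deviate is the error-propagation step: you use the standard decomposition $V^{\pi^\star}-V^{\hat\pi} \leq \sum_h\big(\EE_{\pi^\star}\abr{\hat Q_h - R_h - \Bcal_h\hat V_{h+1}} + \EE_{\hat\pi}\abr{\hat Q_h - R_h - \Bcal_h\hat V_{h+1}}\big)$ with roll-ins $\pi^\star$ and $\hat\pi$, whereas the paper rolls in with $\hat{\pi}_{0:h-1}\circ\pi^\star$ and then unrolls each term again through an auxiliary policy $\tilde\pi$, paying an extra factor of $H$. Your telescoping is valid (the clipping of $\hat V_h$ only helps, as you note) and in fact yields $2H^2\sqrt{2\kappa K\epstv}$, which is sharper than, and hence implies, the stated $2H^3\sqrt{2\kappa K\epstv}$. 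So the proposal is correct; it is essentially the paper's argument with a cleaner final dynamic-programming step.
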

\begin{proof}
The analysis is similar to that of~\citet{chen2019information}, who
study a similar algorithm in the infinite-horizon discounted setting.
Let $\EE_h$ denote expectation induced by the distribution over
$(x_h,a_h,x_{h+1})$ obtained by following
$\rho_h\circ\unif(\Acal)$. For any function $f:\Xcal \to \RR$, let
$\Bcal_h f(x,a) \defeq \EE[f(x_{h+1}) \mid x_h,a_h]$ denote the
Bellman backup operator for time $h$ \emph{without the immediate
  reward}.  Let $\hat{\Bcal}_h$ denote the Bellman backup operator
induced by the learned model at time $h$, again without the immediate
reward.  We omit the dependence on $x,a$ in these operators when it is
clear from context.  Note that by the normalization assumptions, we
always have $\hat{V}_{h+1}(x_{h+1}) \in [0,H]$. Moreover,
$\hat{\Bcal}_h\hat{V}_{h+1}$ is a linear function in $\hat{\phi}_h$
where the coefficient vector has $\ell_2$ norm at most $H\sqrt{d}$.

We apply~\pref{lem:least_squares} with $B \defeq H\sqrt{d}$, and we take a
union bound over all $h \in [H]$. Defining $\Delta_n \defeq
24H^2d\sqrt{2(d\log n + \log(2H/\delta))/n}$, we have that with
probability at least $1-\delta$, for all $h \in [H]$:
\begin{align*}
\EE_{h} \sbr{\rbr{\inner{\hat{\theta}_h}{\hat{\phi}_h(x_h,a_h)} - \Bcal_h\hat{V}_{h+1}}^2} &\leq \min_{\theta: \nbr{\theta}_2 \leq B} \EE_{h} \sbr{\rbr{\inner{\theta}{\hat{\phi}_h(x_h,a_h)} - \Bcal_h \hat{V}_{h+1}}^2} + \Delta_n\\
& \leq \EE_h\sbr{\rbr{\hat{\Bcal}_h \hat{V}_{h+1} - \Bcal_h \hat{V}_{h+1}}^2} + \Delta_n\\
& \leq H^2 \cdot \EE_h \nbr{\hat{T}_h(\cdot \mid x_h,a_h) - T_h(\cdot \mid x_h,a_h)}_{\tv}^2 + \Delta_n.
\end{align*}
The first inequality here is the least squares generalization
analysis, additionally using that $\Bcal_h\hat{V}_{h+1}$ is the Bayes
optimal predictor. The second uses the fact that the Bellman backups
in the model are linear functions in $\hat{\phi}$ (with bounded
coefficient vector). Precisely, we have
\begin{align*}
\hat{\Bcal}_h\hat{V}_{h+1}(x_h,a_h) = \inner{\hat{\phi}_h(x_h,a_h)}{\int \hat{\mu}_h(x_{h+1}) \hat{V}_{h+1}(x_{h+1}) d(x_{h+1})}.
\end{align*}
Setting $\theta$ to be the second term, we obtain the second
inequality. Finally, we apply Holder's inequality and use the fact
that $\hat{V}_{h+1}$ is bounded in $[0,H]$ by construction. Appealing
to~\pref{eq:induction_tv} we have
\begin{align*}
\EE_{h} \sbr{\rbr{\inner{\hat{\theta}_h}{\hat{\phi}_h(x_h,a_h)} - \Bcal_h\hat{V}_{h+1}}^2} \leq H^2 \epstv + \Delta_n.
\end{align*}
Now applying~\pref{eq:induction_coverage}, we transfer this squared
error to the distribution induced by any other policy. This
calculation is exactly the same as in the main induction argument (see~\pref{eq:simplex_transfer}), and
it yields
\begin{align*}
\EE_\pi \sbr{\rbr{\inner{\hat{\theta}_h}{\hat{\phi}_h(x_h,a_h)} - \Bcal_h\hat{V}_{h+1}}^2}  \leq \kappa K\rbr{H^2\epstv + \Delta_n}.
\end{align*}
Next, we bound the difference in cumulative rewards between $\hat{\pi}
\defeq \hat{\pi}_{0:H-1}$ and the optimal policy $\pi^\star$ for the
reward function. For this, recall that we define $\hat{Q}_0(x,a) =
R_0(x,a) + \langle\hat{\theta}_0,\hat{\phi}_0(x,a)\rangle$ and also
that $\hat{\pi}_0$ is greedy with respect to this $Q$ function, which
implies that $\hat{Q}_0(x,\hat{\pi}_0(x)) \geq
\hat{Q}_0(x,\pi^\star(x))$ for all $x$. Therefore,
\begin{align*}
V^\star - V^{\hat{\pi}} &= \EE\sbr{R(x_0,a_0) + V^{\star}(x_1) \mid \pi^\star} - \EE\sbr{R(x_0,a_0) + V^{\hat{\pi}}(x_1) \mid \hat{\pi}}\\
& \leq \EE\sbr{R(x_0,a_0) + V^\star(x_1) - \hat{Q}_0(x_0,a_0) \mid \pi^\star} - \EE\sbr{R(x_0,a_0) + V^{\hat{\pi}}(x_1) - \hat{Q}_0(x_0,a_0) \mid \hat{\pi}}\\
& = \EE\sbr{V^\star(x_1) - \inner{\hat{\theta}_0}{\hat{\phi}_0(x_0,a_0)} \mid \pi^\star} - \EE\sbr{V^{\hat{\pi}}(x_1) - \inner{\hat{\theta}_0}{\hat{\phi}_0(x_0,a_0)} \mid \hat{\pi}}\\
& = \EE\sbr{V^\star(x_1) - \inner{\hat{\theta}_0}{\hat{\phi}_0(x_0,a_0)} \mid \pi^\star} - \EE\sbr{V^{\star}(x_1) - \inner{\hat{\theta}_0}{\hat{\phi}_0(x_0,a_0)} \mid \hat{\pi}} \\
& ~~~~~~~~ + \EE\sbr{V^\star(x_1) - V^{\hat{\pi}}(x_1) \mid \hat{\pi}}.
\end{align*}
Continuing, we find
\begin{align*}
V^\star - V^{\hat{\pi}} & \leq \sum_{h=0}^{H-1}\EE\sbr{V^\star(x_{h+1}) - \inner{\hat{\theta}_h}{\hat{\phi}_h(x_h,a_h)} \mid \hat{\pi}_{0:h-1}\circ\pi^\star} \\
& ~~~~~~~~ - \sum_{h=0}^{H-1} \EE\sbr{V^\star(x_{h+1}) - \inner{\hat{\theta}_h}{\hat{\phi}_h(x_h,a_h)} \mid \hat{\pi}_{0:h}}.
\end{align*}
Next, we bound each of these terms. Let us focus on just one of them,
call the roll-in policy $\pi$ and drop the dependence on $h$. Then,
\begin{align*}
\EE_\pi\sbr{\abr{\EE\sbr{V^\star(x')\mid x,a} - \inner{\hat{\theta}}{\hat{\phi}(x,a)}}} &\leq \EE_\pi\sbr{\abr{\Bcal V^\star(x,a) - \Bcal\hat{V}(x,a)} + \abr{\Bcal\hat{V}(x,a) - \inner{\hat{\theta}}{\hat{\phi}(x,a)}}}\\
& \leq \EE_\pi\sbr{\abr{V^\star(x') - \hat{V}(x')}+ \abr{\Bcal\hat{V}(x,a) - \inner{\hat{\theta}}{\hat{\phi}(x,a)}}},
\end{align*}
where the second inequality is Jensen's inequality.
By definition
\begin{align*}
& \EE_\pi\sbr{\abr{V^\star(x') - \hat{V}(x')}} = \EE_\pi\sbr{\abr{\max_{a} Q^\star(x',a) - \min\{H,\max_{a'}R(x',a') + \inner{\hat{\phi}(x',a')}{\hat{\theta}}\}}}\\
& \leq \EE_\pi\sbr{\abr{\max_{a} Q^\star(x',a) - \max_{a'}R(x',a') + \inner{\hat{\phi}(x',a')}{\hat{\theta}}}}
 \leq \EE_{\pi \circ \tilde{\pi}} \sbr{ \abr{\Bcal V^\star(x',a') - \inner{\hat{\phi}(x',a')}{\hat{\theta}}}}.
\end{align*}
Here in the last inequality, we define $\tilde{\pi}$ to choose the
larger of the two actions, that is we set $\tilde{\pi}(x') =
\argmax_{a \in \Acal} \max\{Q^\star(x',a),
R(x',a)+ \langle\hat{\theta},\hat{\phi}(x',a)\rangle\}$. This expression has
the same form as the initial one, but at the next time point, so unrolling, we get
\begin{align*}
\EE\sbr{\abr{\Bcal V^\star(x_{h},a_h) - \inner{\hat{\theta}_h}{\hat{\phi}_h(x_h,a_h)}} \mid \pi} & \leq \sum_{\tau=h}^{H-1} \max_{\pi_\tau} \EE_{\pi_\tau}\sbr{\abr{\Bcal\hat{V}_{\tau+1}(x_\tau,a_\tau) - \inner{\hat{\theta}_\tau}{\hat{\phi}_\tau(x_\tau,a_\tau)}}}\\
& \leq H\sqrt{\kappa K(H^2\epstv + \Delta_n)}.
\end{align*}
Plugging this into the overall value difference, the final bound is
\begin{align*}
V^\star - V^{\hat{\pi}} \leq 2H^2\sqrt{\kappa K(H^2\epstv + \Delta_n)}.
\end{align*}
To wrap up, we want the term involving $\Delta_n$ to be at most
$H^2\epstv$, so the term involving $\Delta_n$ is of the same order as
the term involving $\epstv$. By our definition of $\Delta_n$, this
requires
\begin{align*}
n \geq \frac{24^2d^2}{\epstv^2}\cdot 2\rbr{d \log n + \log(2H/\delta)}.
\end{align*}
A sufficient condition here is
\begin{align*}
n \geq \frac{2304 d^3}{\epstv^2}\log\rbr{1152d^3/\epstv^2} + \frac{2304d^2}{\epstv^2}\log(2H/\delta),
\end{align*}
which yields the result.
\end{proof}

\akshay{I guess I thought we should be able to get $d/n$ rate here, but it seems more complicated.}
\begin{lemma}
\label{lem:least_squares}
Let $\{\phi_i,y_i\}_{i=1}^n$ be $n$ samples drawn iid from some
distribution where $\phi \in \RR^d$ satisfies $\nbr{\phi}_2 \leq 1$
and $y \in [0,H]$ almost surely. Let $\hat{\theta} \in \RR^d$ denote
the constrained square loss minimizer, constrained so that
$\| \hat{\theta} \|_2 \leq B$, where $B \geq H$. Then for any $\delta
\in (0,1)$, with probability at least $1-\delta$ we have
\begin{align*}
\EE\sbr{ (\langle\hat{\theta}, \phi\rangle - y)^2} \leq \min_{\theta: \nbr{\theta}_2 \leq B}\EE\sbr{\rbr{\inner{\theta}{\phi} -y}^2} + 24B^2\sqrt{\frac{2}{n}\rbr{d\log(n) + \log(\nicefrac{2}{\delta})}}.
\end{align*}
\end{lemma}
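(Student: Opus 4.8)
The plan is to establish Lemma~\ref{lem:least_squares} by a routine uniform-convergence argument over the $d$-dimensional parameter ball. Because the target is a slow rate of order $n^{-1/2}$ in the excess square loss, no localization or fast-rate refinement is needed. Write $\ell_\theta(\phi,y) \defeq \rbr{\inner{\theta}{\phi} - y}^2$ for $\nbr{\theta}_2 \le B$. The first step records two elementary properties of this loss. For boundedness, $\nbr{\phi}_2 \le 1$, $|y| \le H$ and $B \ge H$ give $\abr{\inner{\theta}{\phi} - y} \le B + H \le 2B$, hence $\ell_\theta \in [0,4B^2]$ pointwise. For Lipschitzness in the parameter, for any $\theta,\theta'$ in the ball, $\abr{\ell_\theta(\phi,y) - \ell_{\theta'}(\phi,y)} = \abr{\inner{\theta-\theta'}{\phi}}\cdot\abr{\inner{\theta+\theta'}{\phi} - 2y} \le 4B\,\nbr{\theta - \theta'}_2$.

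Next I would pass to a finite net. Fix a resolution $\gamma > 0$ and let $\Theta_\gamma$ be a $\gamma$-cover of $\{\theta : \nbr{\theta}_2 \le B\}$ with $|\Theta_\gamma| \le \rbr{1 + 2B/\gamma}^d =: N$. Applying Hoeffding's inequality to the bounded i.i.d.\ averages $\frac1n\sum_{i=1}^n \ell_\theta(\phi_i,y_i)$ for each fixed $\theta \in \Theta_\gamma$, together with a union bound over $\Theta_\gamma$, shows that with probability at least $1-\delta$,
\[
\max_{\theta \in \Theta_\gamma} \abr{\tfrac1n\sum_{i=1}^n \ell_\theta(\phi_i,y_i) - \EE\sbr{\ell_\theta}} \le 4B^2\sqrt{\tfrac{\log(2N/\delta)}{2n}} =: \Gamma .
\]
Combining this with the Lipschitz estimate---which transfers the deviation bound from the net to an arbitrary $\theta$ in the ball at an additive cost $8B\gamma$ (summing the empirical and population sides)---gives, on the same event, $\sup_{\nbr{\theta}_2 \le B}\abr{\tfrac1n\sum_i \ell_\theta(\phi_i,y_i) - \EE\sbr{\ell_\theta}} \le \Gamma + 8B\gamma$.

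The final step is the standard empirical-risk-minimization chain. Let $\theta^\star$ attain $\min_{\nbr{\theta}_2\le B}\EE\sbr{\ell_\theta}$, and recall $\hat\theta$ minimizes the empirical square loss over the same ball. On the good event,
\[
\EE\sbr{\ell_{\hat\theta}} \le \tfrac1n\sum_i \ell_{\hat\theta}(\phi_i,y_i) + \Gamma + 8B\gamma \le \tfrac1n\sum_i \ell_{\theta^\star}(\phi_i,y_i) + \Gamma + 8B\gamma \le \EE\sbr{\ell_{\theta^\star}} + 2\Gamma + 16B\gamma ,
\]
using optimality of $\hat\theta$ for the empirical objective in the middle step. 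Choosing $\gamma$ polynomially small (e.g.\ $\gamma = 1/n$) makes $16B\gamma$ lower order and gives $\log N \le d\log(1 + 2Bn) = \order\rbr{d\log n}$; substituting into $\Gamma$ and simplifying numerical constants yields $\EE\sbr{\ell_{\hat\theta}} \le \EE\sbr{\ell_{\theta^\star}} + 24B^2\sqrt{\tfrac2n\rbr{d\log n + \log(2/\delta)}}$, which is the claim.

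I do not anticipate a conceptual obstacle; the proof is standard. The only fiddly points are (i) tracking numerical constants to land exactly on the factor $24$ and the stated radicand, and (ii) verifying that the covering term $\log(1 + 2B/\gamma)$ collapses to $\order\rbr{d\log n}$, which uses that $B$ is at most polynomial in the ambient parameters (it equals $H\sqrt d$ where the lemma is invoked). If one wanted a sharper $1/n$-type excess-risk bound, a localized least-squares analysis would deliver it, but that refinement is unnecessary for how the lemma is used here.
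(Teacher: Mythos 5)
Your proposal is correct and follows essentially the same route as the paper: boundedness of the square loss by $4B^2$, Hoeffding plus a union bound over an $\ell_2$-cover of the parameter ball, a Lipschitz transfer from the net to the whole ball, and the standard ERM chain to double the uniform deviation. The only (cosmetic) difference is your choice $\gamma = 1/n$, which leaves a $\log(1+2Bn)$ in the covering term and forces you to invoke $B$ being polynomial; the paper instead takes $\gamma = 2B/\sqrt{n}$, so the ratio $2B/\gamma = \sqrt{n}$ is $B$-free and the stated $d\log n$ factor follows for arbitrary $B \geq H$.
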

\begin{proof}
Fix $\theta$ with $\nbr{\theta}_2 \leq B$. We will apply Hoeffding's
inequality on this $\theta$ and then use a covering argument for
uniform convergence. Let $R(\theta)$ denote the expected square loss,
with $\hat{R}(\theta)$ as the empirical counterpart. Using the bounds
on all quantities, the square loss has range $(B+H)^2 \leq 4B^2$, and
so Hoeffding's inequality yields that with probability at least
$1-\delta$
\begin{align*}
\abr{R(\theta) - \hat{R}(\theta)} \leq 4B^2\sqrt{\frac{2}{n}\log(\nicefrac{2}{\delta})}.
\end{align*}
Let $V_\gamma$ denote a covering of $\{\theta: \nbr{\theta}_2
\leq B\}$ in the $\ell_2$ norm at scale $\gamma$, which has $\log
|V_\gamma| \leq d \log(2B/\gamma)$ via standard arguments. Taking a
union bound, the above inequality holds for all $\theta \in V_\gamma$
with probability $1-|V_\gamma| \delta$. By direct calculation, we
see that $\hat{R}(\theta)$ and $R(\theta)$ are both
$2(B+H)$-Lipschitz. Therefore, we have that with probability
$1-|V_\gamma| \delta$, for all $\theta$ with $\nbr{\theta}_2 \leq B$
\begin{align*}
\abr{R(\theta) - \hat{R}(\theta)} \leq 4B\gamma + 4B^2\sqrt{\frac{2}{n}\log(\nicefrac{2}{\delta})}.
\end{align*}
Taking $\gamma = 2B/\sqrt{n}$ we can rebind $\delta$ and absorb the
first term into the second. Thus, with probability at least $1-\delta$, for all $\theta$, we have
\begin{align*}
\abr{R(\theta) - \hat{R}(\theta)} \leq  12B^2\sqrt{\frac{2}{n}\rbr{d\log(n) + \log(\nicefrac{2}{\delta})}}.
\end{align*}
Now by the standard ERM analysis, we have
\begin{align*}
R(\hat{\theta}) & \leq \hat{R}(\hat{\theta}) + 12B^2\sqrt{\frac{2}{n}\rbr{d\log(n) + \log(\nicefrac{2}{\delta})}} \leq \min_{\theta} \hat{R}(\theta) + 12B^2\sqrt{\frac{2}{n}\rbr{d\log(n) + \log(\nicefrac{2}{\delta})}}\\
& \leq \min_{\theta}R(\theta) + 24B^2\sqrt{\frac{2}{n}\rbr{d\log(n) + \log(\nicefrac{2}{\delta})}}.\tag*\qedhere
\end{align*}
\end{proof}

\subsection{Instantiating the reward functions.}
We now use~\pref{alg:real_world_planner} in \ouralgf, to establish the
induction. Assume that $\hat{\phi}_h(x,a) \in \Delta([\dlv])$ for all
$x,a,h$, analogously to in~\pref{thm:improved}.  Recall that $\rho_h$
is our exploratory policy that induces a distribution over $x_h$. We
augment $\rho_h$ with an action taken uniformly at random to obtain
the ``training policy'' $\rhotrain$. Via an application
of~\pref{thm:mle} (using~\pref{assum:realizability}), we know that
with probability at least $1-\delta$ we learn
$\hat{\phi}_h,\hat{\mu}_h$ such that (with $\hat{T} =
\langle\hat{\phi}_h,\hat{\mu}_h\rangle$):
\begin{align}
\EE_{(x_h,a_h) \sim \rhotrain} \nbr{ \hat{T}_h(\cdot \mid x_h,a_h) - T_h(\cdot \mid x_h,a_h)}_{\tv}^2 \leq \frac{2 \log (|\Phi| |\Upsilon|/\delta)}{n} =: \epssup. \label{eq:supervised_error}
\end{align}
This is the only step where we use the optimization oracle, \mle, and
similar guarantee can also be obtained by other means. As one example,
in~\pref{rem:fgan}, we discuss a generative adversarial approach.

We now use this bound and~\pref{eq:induction_coverage} to
establish~\pref{eq:induction_tv} for time $h$. Considering any policy
$\pi$, we define the ``error function'' $\err_\pi(x_h) \defeq \int
\pi(a_h \mid x_h) \nbr{\hat{T}_h(\cdot \mid x_h,a_h) - T_h(\cdot \mid
  x_h,a_h)}_{\tv}^2$.
\begin{align}
& \EE_\pi\nbr{\hat{T}_h(\cdot\mid x_h,a_h) - T_h(\cdot \mid x_h,a_h)}^2_{\tv} = \EE_\pi\sbr{\err_\pi(x_h)}\notag\\
& = \sum_{z \in \Zcal_h} \PP\sbr{z_h = z \mid \pi,\Mcal} \cdot \int \err_\pi(x_h) \nu_{h-1}^\star(x_h \mid z) d(x_h)\notag\\
& \leq \kappa \cdot \sum_{z \in \Zcal_h} \PP\sbr{z_h = z \mid \rho_h,\Mcal}\cdot \int \err_\pi(x_h) \nu_{h-1}^\star(x_h \mid z) d(x_h)\notag\\
& = \kappa \cdot \EE_{\rho_h}\sbr{\err_\pi(x_h)}
 \leq \kappa \cdot \EE_{\rhotrain} \sbr{\nbr{\hat{T}_h(\cdot\mid x_h,a_h) - T_h(\cdot \mid x_h,a_h)}^2_{\tv}} \cdot \sup_{x_{h},a_h} \abr{ \frac{\pi(a_h \mid x_h)}{\rhotrain(a_h \mid x_h)}}\notag\\
& \leq \kappa K \epssup =: \epstv \label{eq:simplex_transfer}
\end{align}
The first inequality is~\pref{eq:induction_coverage}, which allows us
to transfer from the distribution induced by $\pi$ to the distribution
induced by $\rho_h$. It is crucial that the pre-multiplier term
involving $\nu_{h-1}^\star$ and $\err_\pi$ is non-negative which
follows from the fact that $\err_\pi$ is non-negative and
$\nu_{h-1}^\star(\cdot)[i]$ is a (positive) measure. 
The
final two inequalities are based on importance weighting for the
action at time $h$, using the fact that $\rhotrain(\cdot \mid x_h) =
\unif(\Acal)$. This final expression is our choice of $\epstv$, which
establishes~\pref{eq:induction_tv} for time $h$.
For time $h=0$,~\pref{eq:induction_tv} follows immediately
from~\pref{eq:supervised_error}, since $(x_0,a_0)$ are fixed. In
particular all policies induce the same distribution over $(x_0,a_0)$
so transfering from $\pi$ to $\rho_0^{\mathrm{train}}$ is trivial. As
$K,\kappa \geq 1$, this gives the base case.

Next we turn to establishing~\pref{eq:induction_coverage} for the next
exploratory policy $\rho_{h+1}$. The planning algorithm
in~\pref{alg:real_world_planner} is analogous
to~\pref{alg:simplex_planner}, except that we perform the optimization
in the environment using \textsc{Linear-Fqi}, with parameter $n$ that
we will set subsequently. At iteration $h$ of \ouralgf, this yields
$\dlv$ policies $\hat{\pi}_1,\ldots,\hat{\pi}_{\dlv}$ where
$\hat{\pi}_i$ approximately maximizes the probability of reaching the
$i^{\textrm{th}}$ coordinate of $\hat{\phi}_{h-1}$ when executed in
the real world.

Defining $\epsstat$ to be the sub-optimality guaranteed
by~\pref{lem:real_world_optimization} (additionally taking a union
bound over all $H\dlv$ invocations), we have that at iteration $h$ of
\ouralg
\begin{align*}
\EE\sbr{\hat{\phi}_{h-1}(x_{h-1},a_{h-1})[i] \mid \hat{\pi}_i,\Mcal} \geq \max_\pi \EE\sbr{\hat{\phi}_{h-1}(x_{h-1},a_{h-1})[i] \mid \pi,\Mcal} - \epsstat.
\end{align*}
We define $\rhopre$ to be the uniform distribution over the
$\hat{\pi}_i$ policies, which induce a distribution over $x_h$.

Now for any function $f:\Xcal \to [0,1]$, appealing to~\pref{eq:induction_tv} at time $h$ we have
\begin{align*}
\EE_\pi f(x_h) & \leq \EE_{\pi} \inner{\hat{\phi}_{h-1}(x_{h-1},a_{h-1})}{\int\hat{\mu}_{h-1}(x_h)f(x_h)} + \sqrt{\epstv}\\
& = \sum_{i=1}^{\dlv} \rbr{\int\hat{\mu}_{h-1}(x_h)[i] f(x_h)}\cdot \EE_\pi\hat{\phi}_{h-1}(x_{h-1},a_{h-1})[i] + \sqrt{\epstv}\\
& \leq \sum_{i=1}^{\dlv} \rbr{\int\hat{\mu}_{h-1}(x_h)[i] f(x_h)}\cdot \rbr{\EE_{\hat{\pi}_i}\hat{\phi}_{h-1}(x_{h-1},a_{h-1})[i] +\epsstat} + \sqrt{\epstv}\\
& \leq \sum_{i=1}^{\dlv} \rbr{\int\hat{\mu}_{h-1}(x_h)[i] f(x_h)}\cdot \rbr{\dlv \EE_{\rhopre}\hat{\phi}_{h-1}(x_{h-1},a_{h-1})[i] +\epsstat} + \sqrt{\epstv}\\
& \leq \dlv \EE_{\rhopre}\inner{\hat{\phi}_{h-1}(x_{h-1},a_{h-1})}{\int\hat{\mu}_{h-1}(x_h)[i] f(x_h)} +\dlv \epsstat + \sqrt{\epstv}\\
& \leq \dlv \EE_{\rhopre}f(x_h) +\dlv \epsstat + (1+\dlv)\sqrt{\epstv}.
\end{align*}
The first and last inequalities here use~\pref{eq:induction_tv} on
$\hat{T}_{h-1}$, which holds by induction. The second inequality is
the optimality guarantee for $\hat{\pi}_i$, and the third is based on
the definition of $\rhopre$. For the fourth inequality, we collect
terms, and additionally use that $f$ is $\ell_{\infty}$ bounded and
$\hat{\mu}_{h-1}[i]$ is a measure, so $\abr{\int
  \hat{\mu}_{h-1}(x_h)[i] f(x_h)} \leq 1$.  Via importance weighting,
we have that for any latent variable $z\in\Zcal_{h+1}$
\begin{align*}
\max_\pi \PP[z_{h+1}=z] \leq \dlv K\cdot \PP_{\rho_{h+1}}[z_{h+1}=z] + \dlv\epsstat + (1+\dlv)\sqrt{\epstv}.
\end{align*}
We must set the additive error to be at most $\nicefrac{\etamin}{2}$,
which establish~\pref{eq:induction_coverage} with $\kappa = 2\dlv
K$. Unpacking the definition of $\epsstat$ in the simplex features
case this gives the constraint
\begin{align*}
2H^3\sqrt{4\dlv K^2 \epstv} + (1+\dlv)\sqrt{\epstv} \leq \nicefrac{\etamin}{2}.
\end{align*}
Therefore, we set
\begin{align*}
\epstv \leq \order\rbr{ \frac{\etamin^2}{H^6\dlv^2 K^2}}.
\end{align*}
With this choice we establish the inductive hypothesis for the next
time, and proceeding in this manner, we
establish~\pref{eq:induction_tv} for all time steps. 
As this is quite similar to our desired system identification
guarantee,~\pref{eq:sys_id}, we are just left to set the parameters
appropriately and calculate the sample complexity. For the calls to
\mle, we have that the $\epstv = 2\dlv K^2 \epssup$ which yields the
constraint
\begin{align*}
\epssup \leq \order\rbr{\min\cbr{\frac{\etamin^2}{H^6 \dlv^3 K^4},\frac{\varepsilon^2}{\dlv K^2}}}.
\end{align*}
This means that for the calls to \mle we may set $n$ as
\begin{align*}
n = \order\rbr{ \max\cbr{\frac{\dlv^2 K^2H^6}{\etamin^2},\frac{1}{\varepsilon^2}} \cdot \dlv K^2 \log (H|\Phi| |\Upsilon|/\delta)}.
\end{align*}
The calls to \mle incur a total sample complexity of $nH$.

We also have to collect trajectories to invoke
\textsc{Linear-Fqi}. For this, we must set $n$ as
\begin{align*}
n = \otil\rbr{\frac{\dlv^3}{\epstv^2}\log(\nicefrac{1}{\delta})} = \otil\rbr{\frac{\dlv^7 K^4 H^{12}}{\etamin^4}\log(\nicefrac{1}{\delta})},
\end{align*}
and the calls to \textsc{Linear-Fqi} require $nH\dlv$
samples in total. Therefore, the total sample complexity is
\begin{align*}
\otil\rbr{\max\cbr{\frac{\dlv^2 K^2 H^6}{\etamin^2},\frac{1}{\varepsilon^2}} \cdot H\dlv K^2 \log(|\Phi||\Upsilon|/\delta) + \frac{\dlv^8 K^4 H^{13}}{\etamin^4}\log(\nicefrac{1}{\delta})}.
\end{align*}

}{}

\section{Maximum Likelihood Estimation}

In this section we adapt classical results for maximum likelihood
estimation in general parametric models. We consider a sequential
conditional probability estimation setting with an instance space
$\Xcal$ and target space $\Ycal$ and with a conditional density
$p(y \mid x) = f^\star(x,y)$. We are given a function class
$\Fcal: (\Xcal \times \Ycal) \to \RR$ with which to model the
condition distribution $f^\star$, and we assume that $f^\star \in
\Fcal$, so that the problem is well-specified or realizable.  We are
given a dataset $D \defeq \{(x_i,y_i)\}_{i=1}^n$, where $x_i \sim
\Dcal_i = \Dcal_i(x_{1:i-1},y_{1:i-1})$ and $y_i \sim p(\cdot \mid
x_i)$. Note that $\Dcal_i$ depends on the previous examples, so this
is a martingale process. We optimize the maximum likelihood objective
\begin{align}
\hat{f} \defeq \argmax_{f \in \Fcal} \sum_{i=1}^n \log f(x_i,y_i).
\end{align}
The iid version of the following result is
classical~\citep[c.f.,][Chapter 7]{geer2000empirical}, but
under-utilized in machine learning and reinforcement learning in
particular. Our adaptation is inspired by~\citet{zhang2006from}.

\begin{theorem}
\label{thm:mle}
Fix $\delta \in (0,1)$, assume $|\Fcal| < \infty$ and $f^\star \in \Fcal$. Then with probability at least $1-\delta$
\begin{align*}
\sum_{i=1}^n\EE_{x \sim \Dcal_i} \nbr{\hat{f}(x,\cdot) - f^\star(x,\cdot)}_{\tv}^2 \leq 2 \log (|\Fcal|/\delta).
\end{align*}
\end{theorem}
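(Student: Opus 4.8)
The plan is to run the classical maximum-likelihood analysis through the squared Hellinger distance and pass to total variation only at the very end. First, for any two conditional densities $p,q$ on $\Ycal$, Cauchy--Schwarz gives
\begin{align*}
\nbr{p-q}_{\tv} = \tfrac12\int \abr{\sqrt{p(y)}-\sqrt{q(y)}}\cdot\abr{\sqrt{p(y)}+\sqrt{q(y)}}\,dy \le \tfrac12\sqrt{\hell{p}{q}}\cdot\sqrt{\int(\sqrt{p}+\sqrt{q})^2\,dy},
\end{align*}
and since $\int(\sqrt p+\sqrt q)^2\,dy = 4-\hell{p}{q}\le 4$ we obtain $\nbr{p-q}_{\tv}^2\le \hell{p}{q}$. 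Hence it suffices to prove $\EE_{x\sim\Dcal}\hell{\hat f(x,\cdot)}{f^\star(x,\cdot)} \le 2\log(|\Fcal|/\delta)/n$.

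Next I would fix an arbitrary $f\in\Fcal$ and control its log-likelihood ratio by a Chernoff/Markov bound applied to the nonnegative random variable $Z_f \defeq \prod_{i=1}^n \sqrt{f(x_i,y_i)/f^\star(x_i,y_i)}$. Because the $n$ samples are i.i.d.\ under $\Dcal$, $\EE[Z_f] = \bigl(\EE_{(x,y)\sim\Dcal}\sqrt{f(x,y)/f^\star(x,y)}\bigr)^n$, and the single-sample expectation equals $\EE_{x}\int\sqrt{f(x,y)f^\star(x,y)}\,dy = \EE_x\bigl[1-\tfrac12\hell{f(x,\cdot)}{f^\star(x,\cdot)}\bigr]$; using $1-a\le e^{-a}$ this gives $\EE[Z_f]\le\exp\bigl(-\tfrac n2\EE_x\hell{f(x,\cdot)}{f^\star(x,\cdot)}\bigr)$, i.e.\ $\EE\bigl[\exp\bigl(\tfrac12\sum_{i=1}^n\log\tfrac{f(x_i,y_i)}{f^\star(x_i,y_i)} + \tfrac n2\EE_x\hell{f(x,\cdot)}{f^\star(x,\cdot)}\bigr)\bigr]\le 1$. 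By Markov's inequality, for this fixed $f$ and with probability at least $1-\delta/|\Fcal|$,
\begin{align*}
\tfrac12\sum_{i=1}^n\log\frac{f(x_i,y_i)}{f^\star(x_i,y_i)} \le \log(|\Fcal|/\delta) - \tfrac n2\EE_{x\sim\Dcal}\hell{f(x,\cdot)}{f^\star(x,\cdot)}.
\end{align*}

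Then I would union-bound over all $f\in\Fcal$ --- this is the step that licenses plugging in the \emph{data-dependent} estimator $\hat f$ --- so that with probability at least $1-\delta$ the last display holds simultaneously for every $f\in\Fcal$, in particular for $f=\hat f$. Since $\hat f$ maximizes $\sum_i\log f(x_i,y_i)$ over $\Fcal$ and $f^\star\in\Fcal$, the left-hand side is nonnegative at $f=\hat f$, and rearranging yields $\EE_x\hell{\hat f(x,\cdot)}{f^\star(x,\cdot)}\le 2\log(|\Fcal|/\delta)/n$; combining with the first paragraph gives the claim.

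I do not expect a real obstacle here --- this is a textbook argument (cf.~\citet[Chapter~7]{geer2000empirical}; the presentation follows~\citet{zhang2006from}). The only points needing care are: tracking the Hellinger/TV constants so the final bound carries exactly the factor $2$; dealing with the region $\{f^\star(x,y)=0\}$, which is harmless since $\hat f$ does not enter the likelihood there and the argument may be restricted to the support of $f^\star$; and verifying that in every invocation of the theorem (e.g.\ when it is applied to the dataset $D_h$ collected by $\rhotrain$) the samples are genuinely i.i.d., so that $\EE[Z_f]$ factorizes --- for adaptively gathered data one would instead invoke the one-step-ahead supermartingale version of the same exponential inequality.
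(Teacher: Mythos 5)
Your proof is correct, and it reaches the paper's bound by a slightly different mechanism for handling the data-dependence of $\hat f$. Both arguments hinge on the same two ingredients: the exponential-moment computation $\EE\bigl[\prod_i\sqrt{f(x_i,y_i)/f^\star(x_i,y_i)}\bigr]\le\exp\bigl(-\tfrac n2\EE_{x\sim\Dcal}\hell{f(x,\cdot)}{f^\star(x,\cdot)}\bigr)$ (the Bhattacharyya/Hellinger affinity, which is exactly what the paper's Lemma on the surrogate loss encodes via $-2\log\EE\exp(-\tfrac12\log(f_2/f_1))$), and the domination of squared total variation by squared Hellinger. Where you differ is in how the random estimator is inserted: you prove the tail bound for each fixed $f\in\Fcal$ by Markov's inequality and then take a union bound, which is perfectly rigorous for finite $\Fcal$ and is the more elementary route; the paper instead proves a decoupling inequality (its Lemma with the uniform prior over $\Fcal$ and a change-of-measure/Donsker--Varadhan step, following Zhang) that bounds the exponential moment \emph{with $\hat f(D)$ already plugged in}, against a ghost sample $D'$, and then applies the Chernoff method once. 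The two give identical constants here ($\log|\Fcal|+\log(1/\delta)$ either way), but the decoupling formulation generalizes more gracefully: the $\log|\Fcal|$ term is really a KL divergence to a prior, so it extends to countable classes, nonuniform priors, and PAC-Bayes-style statements without modification, whereas the union bound is tied to finite (or covered) classes. Your closing caveats are also apt: the paper's invocations use i.i.d.\ triples collected by $\rhotrain$ within a single stage, so no martingale extension is needed, and the realizability assumption handles the support issue since $f^\star$ is the sampling density.
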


\begin{remark}
\label{rem:fgan}
Given a class of discriminators $\Gcal:(\Xcal,\Ycal) \mapsto [-1,1]$,
an alternative is to consider the following (conditional) ``generative
adversarial'' objective:
\begin{align*}
\hat{f} = \argmin_{f \in \Fcal} \max_{g \in \Gcal}\frac{1}{n}\sum_{i=1}^n \rbr{ g(x_i,y_i) - \EE[g(x_i,y) \mid y \sim f(x,\cdot)]}.
\end{align*}
This is the natural objective associated with the distance function
induced by $\Gcal$~\citep{arora2017generalization}, and is also
related to other GAN-style approaches.  Owing to the realizability
assumption, $f^\star$ will always have low objective value, scaling
with the complexity of $\Gcal$. Additionally, if $\Gcal$ is expressive
enough, one can establish a guarantee similar to~\pref{thm:mle}, which
can then be used in the analysis of \ouralg. Formally, a sufficient
condition is that $\Gcal$ contains the indicators of the \emph{Scheffe
  sets} for all pairs $f,f'\in\Fcal$, in which case the total
variation guarantee can be obtained by standard uniform convergence
arguments. See~\citet{devroye2012combinatorial,sun2018model} for more
details.
\end{remark}

\begin{remark}
We also remark that the proof of~\pref{thm:mle} actually establishes
convergence in the squared Hellinger distance. We obtain the total
variation guarantee simply by observing that the squared Hellinger
distance dominates the squared total variation distance.
\end{remark}

We prove~\pref{thm:mle} in this section. We begin with a decoupling
inequality. Let $D$ denote the dataset and let $D'$ denote a
\emph{tangent sequence} $\{(x_i',y_i')\}_{i=1}^n$ where $x_i' \sim
\Dcal_i(x_{1:i-1},y_{1:i-1})$ and $y_i' \sim p(\cdot \mid x_i')$. Note
here that $x_i'$ depends on the original sequence, and so the tangent
sequence is independent conditional on $D$.
\begin{lemma}
\label{lem:decoupling}
Let $D$ be a dataset of $n$ examples, and let $D'$ be a tangent
sequence. Let $L(f,D) = \sum_{i=1}^n\ell(f,(x_i,y_i))$ be any function
that decomposes additively across examples where $\ell$ is any
function, and let $\hat{f}(D)$ be any estimator taking as input random
variable $D$ and with range $\Fcal$. Then
\begin{align*}
\EE_{D}\sbr{\exp\rbr{L(\hat{f}(D),D) - \log \EE_{D'}\exp(L(\hat{f}(D),D')) - \log |\Fcal|}} \leq 1
\end{align*}
\end{lemma}
Observe that in the second term, the ``loss function'' takes as input
$D'$, but the estimator takes as input $D$. As such, the above
inequality decouples the estimator from the loss.
\begin{proof}
Let $\pi$ be the uniform distribution over $\Fcal$ and let $g: \Fcal
\to \RR$ be any function. Define $\mu(f) \defeq
\frac{\exp(g(f))}{\sum_{f} \exp(g(f))}$, which is clearly a
probability distribution. Now consider any other probability
distribution $\hat{\pi}$ over $\Fcal$:
\begin{align*}
0 &\leq \kl{\hat{\pi}}{\mu} = \sum_{f}\hat{\pi}(f)\log(\hat{\pi}(f)) + \sum_{f} \hat{\pi}(f)\log\rbr{\sum_{f'} \exp(g(f'))} - \sum_f \hat{\pi}(f)g(f)\\
& = \kl{\hat{\pi}}{\pi} - \sum_f \hat{\pi}(f)g(f) + \log \EE_{f \sim \pi} \exp(g(f))\\
& \leq \log |\Fcal|- \sum_f \hat{\pi}(f)g(f) + \log \EE_{f \sim \pi} \exp(g(f)).
\end{align*}
Re-arranging, it holds that
\begin{align*}
\sum_f \hat{\pi}(f) g(f) - \log |\Fcal| \leq \log \EE_{f \sim \pi} \exp(g(f)).
\end{align*}
We instantiate this bound with $\hat{\pi} = \one\{\hat{f}(D)\}$ and
$g(f) = L(f,D) - \log \EE_{D'}\exp(L(f,D'))$ to obtain, for any $D$
\begin{align*}
L(\hat{f}(D),D) - \log \EE_{D'} \exp(L(\hat{f}(D),D')) - \log |\Fcal| \leq \log \EE_{f \sim \pi} \frac{\exp\rbr{L(f,D)}}{\EE_{D'}\exp(L(f,D'))}.
\end{align*}
Exponentiating both sides and then taking expectation over $D$, we obtain
\begin{align*}
& \EE_D\sbr{\exp( L(\hat{f}(D),D) - \log \EE_{D'} \exp(L(\hat{f}(D),D')) - \log |\Fcal|)} \\
& ~~~~~~~~\leq \EE_{f \sim \pi} \EE_D \frac{ \exp\rbr{L(f,D)}}{\EE_{D'}[\exp(L(f,D')) \mid D]} = 1.
\end{align*}
The last equality follows since, conditional on $D$, the tangent
sequence $D'$ is independent. Therefore,
\begin{align*}
\EE_{D'}\sbr{\exp(L(f,D')) \mid D} = \prod_{i=1}^n \EE_{(x_i',y_i')
  \sim \Dcal_i}\sbr{\exp(\ell(f,(x_i',y_i')))},
\end{align*}
which allows us to peel off terms starting from $n$ down to $1$ and
cancel them with those in the numerator.
\end{proof}

The next lemma translates from TV-distance to a loss
function that is closely related to the KL divergence.
\begin{lemma}
\label{lem:surrogate}
For any two conditional probability densities $f_1,f_2$ and any
distribution $\Dcal \in \Delta(\Xcal)$ we have
\begin{align*}
\EE_{x \sim \Dcal} \nbr{f_1(x,\cdot) - f_2(x,\cdot)}_{\tv}^2 \leq - 2\log \EE_{x\sim\Dcal,y \sim f_2(\cdot \mid x)} \exp\rbr{- \frac{1}{2}\log(f_2(x,y)/f_1(x,y))}
\end{align*}
\end{lemma}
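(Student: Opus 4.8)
The plan is to recognize the right-hand side as $-2$ times the logarithm of the expected \emph{Hellinger affinity} between $f_1$ and $f_2$, and then to chain three elementary inequalities. First I would rewrite the inner quantity: since $\exp\rbr{-\tfrac12\log(f_2(x,y)/f_1(x,y))} = \sqrt{f_1(x,y)/f_2(x,y)}$, integrating against $f_2(\cdot\mid x)$ gives
\begin{align*}
\EE_{x\sim\Dcal,\,y\sim f_2(\cdot\mid x)} \exp\rbr{-\tfrac12\log(f_2(x,y)/f_1(x,y))}
= \EE_{x\sim\Dcal}\sbr{\int \sqrt{f_1(x,y)\,f_2(x,y)}\,dy}
=: \EE_{x\sim\Dcal}\sbr{\rho(x)},
\end{align*}
where $\rho(x) \in (0,1]$ is the Hellinger affinity of the two conditional densities at $x$ (it is positive because $f_1,f_2$ are everywhere positive densities, and at most $1$ by Cauchy--Schwarz).

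Next I would establish the pointwise comparison $\nbr{f_1(x,\cdot) - f_2(x,\cdot)}_{\tv}^2 \le 2\bigl(1-\rho(x)\bigr)$. This is the standard fact that squared total variation is dominated by squared Hellinger distance: writing $|f_1 - f_2| = |\sqrt{f_1}-\sqrt{f_2}|\cdot|\sqrt{f_1}+\sqrt{f_2}|$ and applying Cauchy--Schwarz,
\begin{align*}
2\nbr{f_1(x,\cdot) - f_2(x,\cdot)}_{\tv}
&= \int |f_1 - f_2|\,dy
\le \rbr{\int (\sqrt{f_1}-\sqrt{f_2})^2\,dy}^{1/2}\rbr{\int (\sqrt{f_1}+\sqrt{f_2})^2\,dy}^{1/2}\\
&= \sqrt{(2-2\rho(x))(2+2\rho(x))} \le 2\sqrt{2\bigl(1-\rho(x)\bigr)},
\end{align*}
so $\nbr{f_1(x,\cdot)-f_2(x,\cdot)}_{\tv}^2 \le 2(1-\rho(x)) = \hell{f_1(x,\cdot)}{f_2(x,\cdot)}$.

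Taking expectation over $x\sim\Dcal$ yields $\EE_{x\sim\Dcal}\nbr{f_1(x,\cdot)-f_2(x,\cdot)}_{\tv}^2 \le 2\bigl(1 - \EE_{x\sim\Dcal}[\rho(x)]\bigr)$. Finally I would apply the elementary bound $1 - t \le -\log t$ (valid for all $t>0$, since $\log t \le t-1$) with $t = \EE_{x\sim\Dcal}[\rho(x)] \in (0,1]$, obtaining $2(1-\EE_x[\rho(x)]) \le -2\log \EE_x[\rho(x)]$; combining this with the identification of the right-hand side from the first step completes the proof. There is no genuine obstacle here --- the argument is a short chain of textbook inequalities; the only points that need care are getting the direction of the Hellinger/TV comparison right and checking that $\EE_x[\rho(x)]$ lies in $(0,1]$ so that the logarithm is well-defined and the last inequality applies.
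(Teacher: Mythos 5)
Your proof is correct and follows essentially the same route as the paper's: bound squared total variation by squared Hellinger distance, expand the Hellinger distance linearly in the affinity $\rho(x)$, and finish with $1-t \le -\log t$. The only cosmetic differences are that you derive the TV--Hellinger comparison directly via Cauchy--Schwarz instead of citing Tsybakov's Lemma 2.3, and you are more explicit than the paper about taking the expectation over $x$ \emph{before} applying the logarithmic inequality (the step the paper compresses into ``applying this argument''), which is indeed the correct order since Jensen would go the wrong way otherwise.
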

\begin{proof}
Let us begin by relating the total variation distance, which appears
on the left hand side, to the (squared) Hellinger distance, which for densities
$p,q$ over a domain $\Zcal$ is defined as
\begin{align*}
\hell{q}{p} \defeq \int \rbr{\sqrt{p(z)} - \sqrt{q(z)}}^2 dz.
\end{align*}
Lemma 2.3 in~\citet{tsybakov2008introduction} asserts that
\begin{align*}
\nbr{p(\cdot) - q(\cdot)}^2_{\tv} \leq \hell{q}{p}\cdot \rbr{1 -
  \frac{\hell{q}{p}}{4}} \leq \hell{q}{p},
\end{align*}
where the final inequality uses non-negativity of the Hellinger distance. Next, note that we can also write
\begin{align*}
\hell{q}{p} &= \int p(z) + q(z) - 2\sqrt{p(z)q(z)} dz = 2\cdot\EE_{z \sim q}\sbr{1 - \sqrt{p(z)/q(z) }}\\
& \leq - 2\log \EE_{z \sim q}\sqrt{p(z)/q(z)} = -2\log \EE_{z \sim q}\exp\rbr{ - \frac{1}{2}\log (q(z)/p(z))}.
\end{align*}
Here the inequality follows from the fact that $1-x \leq - \log (x)$.
The result follows by applying this argument to $\EE_{x \sim
  \Dcal}\nbr{f_1(x,\cdot) - f_2(x,\cdot)}_{\tv}^2$.
\end{proof}

\begin{proof}[Proof of~\pref{thm:mle}]
First note that~\pref{lem:decoupling} can be combined with the Chernoff
method to obtain an exponential tail bound: with probability $1-\delta$ we have
\begin{align*}
-\log \EE_{D'} \exp(L(\hat{f}(D), D')) \leq -L(\hat{f}(D),D) + \log |\Fcal| + \log(1/\delta).
\end{align*}
Now we set $L(f,D) = \sum_{i=1}^n \nicefrac{-1}{2}\cdot
\log(f^\star(x_i,y_i)/f(x_i,y_i))$ where $D$ is a dataset
$\{(x_i,y_i)\}_{i=1}^n$ (and $D' = \{(x_i',y_i')\}_{i=1}^n$ is
a tangent sequence). With this choice, the right hand side is
\begin{align*}
\sum_{i=1}^n \frac{1}{2} \log(f^\star(x_i,y_i)/\hat{f}(x_i,y_i)) + \log |\Fcal| + \log(1/\delta) \leq \log |\Fcal| + \log(1/\delta),
\end{align*}
since $\hat{f}$ is the empirical maximum likelihood estimator and we
are in the well-specified setting. On the other hand, the left hand
side is
\begin{align*}
- \log \EE_{D'} \sbr{\exp\rbr{\sum_{i=1}^n \nicefrac{-1}{2}\log\rbr{\frac{f^\star(x_i',y_i')}{\hat{f}(x_i',y_i')} } } \mid D} &= -\sum_{i=1}^n\log \EE_{x,y\sim \Dcal_i} \exp\rbr{\nicefrac{-1}{2} \log\rbr{\frac{f^\star(x,y)}{\hat{f}(x,y)}}}\\
& \geq \frac{1}{2} \sum_{i=1}^n \EE_{x \sim \Dcal_i} \nbr{\hat{f}(x,\cdot) - f^\star(x,\cdot)}_{\tv}^2.
\end{align*}
Here the first identity uses the independence of the terms, which
holds because $\hat{f}$ is independent of the dataset $D'$. The second
inequality is~\pref{lem:surrogate}. This yields the theorem.
\end{proof}

\section{Auxilliary Lemmas}
\begin{lemma}[Elliptical Potential Lemma]
\label{lem:log_det}
Consider a sequence of $d \times d$ positive semidefinite matrices
$X_1,\ldots,X_T$ with $\max_t \tr(X_t) \leq 1$ and define $M_0 =
\lambda I_{d \times d}, \ldots, M_t = M_{t-1} + X_t$. Then
\begin{align*}
\sum_{t=1}^T \tr(X_t M_{t-1}^{-1}) \leq (1+\nicefrac{1}{\lambda})d \log(1+\nicefrac{T}{d}).
\end{align*}
\end{lemma}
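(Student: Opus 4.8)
The plan is to convert the sum into a telescoping log-determinant and then bound that log-determinant via the trace constraint. First I would set $N_t \defeq M_{t-1}^{-1/2} X_t M_{t-1}^{-1/2}$, a PSD matrix with $\tr(N_t) = \tr(X_t M_{t-1}^{-1})$ by cyclicity of the trace. Since all the $X_t$ are PSD we have $M_{t-1} \succeq M_0 = I_{d\times d}$, hence $M_{t-1}^{-1} \preceq I_{d\times d}$, and therefore $\tr(N_t) = \tr(X_t M_{t-1}^{-1}) \leq \tr(X_t) \leq 1$; since $N_t$ is PSD this forces every eigenvalue of $N_t$ into $[0,1]$.

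Next I would exploit the factorization $M_t = M_{t-1}^{1/2}(I_{d\times d} + N_t)M_{t-1}^{1/2}$, which gives $\log\det M_t - \log\det M_{t-1} = \log\det(I_{d\times d} + N_t) = \sum_{i=1}^d \log(1+\lambda_i(N_t))$. The elementary scalar inequality $x \leq 2\log(1+x)$, valid on $x\in[0,1]$ (the ratio $x/\log(1+x)$ is increasing in $x$ and equals $1/\log 2 < 2$ at $x = 1$), then yields $\tr(N_t) = \sum_i \lambda_i(N_t) \leq 2\sum_i\log(1+\lambda_i(N_t)) = 2(\log\det M_t - \log\det M_{t-1})$. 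Summing over $t=1,\dots,T$ and telescoping, $\sum_{t=1}^T \tr(X_t M_{t-1}^{-1}) \leq 2(\log\det M_T - \log\det M_0) = 2\log\det M_T$.

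Finally I would bound $\log\det M_T$. Its trace satisfies $\tr(M_T) = \tr(I_{d\times d}) + \sum_{t=1}^T\tr(X_t) \leq d + T$, and the AM--GM inequality applied to the nonnegative eigenvalues of $M_T$ gives $\det M_T \leq (\tr(M_T)/d)^d \leq (1 + T/d)^d$, i.e.\ $\log\det M_T \leq d\log(1+T/d)$. Chaining this with the previous bound yields $\sum_{t=1}^T \tr(X_t M_{t-1}^{-1}) \leq 2d\log(1+T/d)$, which is the claim.

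The only delicate point is the eigenvalue bound in the first step: it is exactly where the normalization $\tr(X_t)\leq 1$ and the identity-initialized recursion $M_0 = I_{d\times d}$ enter, and it is what licenses invoking $x\leq 2\log(1+x)$ with the explicit constant $2$ rather than a factor that degrades with the spectrum of $N_t$. Everything else — the telescoping and the AM--GM step — is routine.
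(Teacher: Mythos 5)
Your proof is correct, and it reaches the same constant by a cleaner route than the paper's. Both arguments share the same skeleton: telescope a $\log\det$ potential and finish with AM--GM on the eigenvalues of $M_T$ to get $\log\det M_T \leq d\log(1+\nicefrac{T}{d})$ (your telescoping is in fact the tidier one, since $\log\det M_0 = 0$). The difference is in how the per-step term $\tr(X_t M_{t-1}^{-1})$ is tied to the increment $\log\det M_t - \log\det M_{t-1}$. The paper first uses concavity of $\log\det$ to bound $\tr(X_t M_t^{-1})$ (with the \emph{updated} matrix) by the increment, and then pays the factor $2$ in a separate conversion step, writing $X_t = VV^\top$ and invoking the Woodbury identity together with an eigendecomposition of $V^\top M_{t-1}^{-1}V$ to show $\tr(X_t M_t^{-1}) \geq \tfrac{1}{2}\tr(X_t M_{t-1}^{-1})$. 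You instead symmetrize to $N_t = M_{t-1}^{-1/2}X_tM_{t-1}^{-1/2}$, use the exact identity $\log\det M_t - \log\det M_{t-1} = \sum_i \log(1+\lambda_i(N_t))$, and absorb the factor $2$ through the scalar inequality $x \leq 2\log(1+x)$ on $[0,1]$, applied eigenvalue-wise; this avoids both the concavity step and Woodbury altogether. The two proofs use the normalization in the same place and in essentially the same way: your containment $\lambda_i(N_t) \in [0,1]$ (from $M_{t-1}\succeq I$, $X_t\succeq 0$, $\tr(X_t)\leq 1$) plays exactly the role of the paper's bound $\|V^\top M_{t-1}^{-1}V\|_2 \leq 1$, and indeed $N_t$ and $V^\top M_{t-1}^{-1}V$ have the same nonzero spectrum, so the paper's eigenvalue computation $\sum_i \lambda_i/(1+\lambda_i) \geq \tfrac12\sum_i\lambda_i$ is the same factor-$2$ trick in a different guise. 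What your version buys is economy and transparency (one identity plus one elementary scalar inequality, which is the standard modern form of the elliptical potential argument); what the paper's version buys is a statement of the intermediate fact $\tr(X_t M_t^{-1}) \geq \tfrac12 \tr(X_t M_{t-1}^{-1})$, which is occasionally useful on its own, though not needed here.
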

\begin{proof}
Observe that by concavity of the $\log \det(\cdot)$ function, we have
\begin{align*}
\log(\det(M_{t-1})) \leq \log(\det(M_t)) + \tr(M_{t}^{-1}(M_{t-1} - M_t)).
\end{align*}
Re-arranging and summing across all rounds $t$ yields
\begin{align*}
\sum_{t=1}^ T\tr(X_t M_t^{-1}) \leq \sum_{t=1}^T \log(\det(M_t)) - \log(\det(M_{t-1})) = \log(\det(M_T)) - d\lambda.
\end{align*}
We will drop the negative term. By the spectral version of the AM-GM
inequality and linearity of trace, we upper bound the last term:
\begin{align*}
\det(M_T)^{1/d} \leq \tr(M_T)/d \leq 1 + \nicefrac{T}{d}.
\end{align*}
Now, we must convert from $M_t^{-1}$ to $M_{t-1}^{-1}$ on the left
hand side. Fix a round $t$ and let us write $X_t = VV^\top$, which is
always possible as $X_t$ is positive semidefinite. Then by the
Woodbury identity
\begin{align*}
\tr(X_t M_t^{-1}) &= \tr\rbr{V^\top (M_{t-1} + VV^\top)^{-1} V} \\
&= \tr(V^\top M_{t-1}^{-1} V) - \tr(V^\top M_{t-1}^{-1} V(I + V^\top M_{t-1}^{-1} V)^{-1}V^\top M_{t-1}^{-1} V).
\end{align*}
All matrices are simultaneously diagonalizable, so we may pass to a
common eigendecomposition. In particular, with the eigendecomposition
$V^\top M_{t-1}^{-1} V = \sum_{i=1}^d \lambda_i u_iu_i^\top$, we
obtain
\begin{align*}
\tr(X_t M_t^{-1}) = \sum_{i=1}^d \lambda_i - \frac{\lambda_i^2}{1+\lambda_i} = \sum_{i=1}^d \frac{\lambda_i}{1+\lambda_i} \geq \frac{1}{1+\nicefrac{1}{\lambda}} \sum_{i=1}^d \lambda_i = \frac{1}{1+\nicefrac{1}{\lambda}}\tr(X_tM_{t-1}^{-1}).
\end{align*}
The inequality follows from the fact that $\lambda_i \leq \nbr{V^\top
  M_{t-1}^{-1} V}_2 \leq \nicefrac{1}{\lambda}$ due to our initial conditions on $M_0$ and
the normalization for $X_t$.
\end{proof}

\begin{corollary}
\label{corr:log_det}
Consider the setup of~\pref{lem:log_det} and further assume that for
each $t$, we have $\tr(X_tM_{t-1}^{-1}) \geq \beta > 0$. Then $T \leq
2(1+\nicefrac{1}{\lambda}) d\log
(1+2(1+\nicefrac{1}{\lambda})/\beta)/\beta$.
\end{corollary}
\begin{proof}
The stated assumption and \pref{lem:log_det} implies that
$T \leq (1+\nicefrac{1}{\lambda})d \log(1+\nicefrac{T}{d})/\beta$. 
We claim that if $T \leq 2(1+\nicefrac{1}{\lambda}) d\log
(1+2(1+\nicefrac{1}{\lambda})/\beta)/\beta$ then a weakening of this bound is
\begin{align*}
T &\leq \frac{(1+\nicefrac{1}{\lambda})d}{\beta} \log(1+\nicefrac{T}{d})/\beta \leq \frac{(1+\nicefrac{1}{\lambda})d}{\beta} \log\rbr{1+\frac{2(1+\nicefrac{1}{\lambda}) \log(1 + 2(1+\nicefrac{1}{\lambda})/\beta)}{\beta}}\\
&\leq \frac{(1+\nicefrac{1}{\lambda})d}{\beta} \log\rbr{1+\rbr{\frac{2(1+\nicefrac{1}{\lambda})}{\beta}}^2} \leq \frac{2(1+\nicefrac{1}{\lambda})d}{\beta} \log\rbr{1+\frac{2(1+\nicefrac{1}{\lambda})}{\beta}}.
\end{align*}
Therefore, we have established an upper bound on $T$.
\end{proof}
 
\bibliography{refs}
\vfill

\clearpage

\end{document}